\newtheorem{assumption}{Assumption}
\newtheorem{lemma}{Lemma}
\newtheorem{theorem}{Theorem}
\newtheorem{corollary}{Corollary}
\icmltitlerunning{Risk-Averse No-Regret Learning in Online Convex Games}
\begin{document}

\twocolumn[
\icmltitle{Risk-Averse No-Regret Learning in Online Convex Games}



\icmlsetsymbol{equal}{*}




\begin{icmlauthorlist}
\icmlauthor{Zifan Wang}{kth}
\icmlauthor{Yi Shen}{duke}
\icmlauthor{Michael M. Zavlanos}{duke}
\end{icmlauthorlist}
\icmlaffiliation{kth}{School of Electrical Engineering and Computer Science, KTH Royal Institute of Technology, Stockholm, Sweden.}
\icmlaffiliation{duke}{Department of Mechanical Engineering \& Material Science, Duke University, Durham, NC 27708, USA}

\icmlcorrespondingauthor{Zifan Wang}{zifanwang199710@gmail.com}

\icmlkeywords{Machine Learning, ICML}

\vskip 0.3in
]



\printAffiliationsAndNotice{}  

\begin{abstract}
We consider an online stochastic game with risk-averse agents whose goal is to learn optimal decisions that minimize the risk of incurring significantly high costs. Specifically, we use the Conditional Value at Risk (CVaR) as a risk measure that the agents can estimate using bandit feedback in the form of the cost values of only their selected actions.
Since the distributions of the cost functions depend on the actions of all agents that are generally unobservable, they are themselves unknown and, therefore, the CVaR values of the costs are difficult to compute.
To address this challenge, we propose a new online risk-averse learning algorithm that relies on one-point zeroth-order estimation of the CVaR gradients computed using CVaR values that are estimated by appropriately sampling the cost functions.
We show that this algorithm achieves sub-linear regret with high probability. 
We also propose two variants of this algorithm that improve performance. The first variant relies on a new sampling strategy that uses samples from the previous iteration to improve the estimation accuracy of the CVaR values. The second variant employs residual feedback that uses CVaR values from the previous iteration to reduce the variance of the CVaR gradient estimates. We theoretically analyze the convergence properties of these variants and illustrate their performance on an online market problem that we model as a Cournot game. 
\end{abstract}

\section{Introduction}
Online convex optimization (OCO) aims at solving optimization problems with unknown cost functions using only samples of the cost function values.
Many practical applications can be modeled as OCO problems. Examples include spam filtering \cite{hazan2019introduction} and portfolio management \cite{hazan2006efficient}, among many others \cite{shalev2011online}. Oftentimes, OCO problems involve multiple agents interacting with each other in the same environment; for instance, in traffic routing \cite{sessa2019no} and economic market optimization \cite{shi2019no}, agents cooperate or compete, respectively, by sequentially selecting the best decisions that minimize their expected accumulated costs. These problems can be formulated as online convex games \cite{shalev2006convex,gordon2008no}, and constitute the focus of this paper.

Typically, the performance of online optimization algorithms is measured using different notions of regret \cite{hazan2019introduction}, that capture the difference between the agents' online decisions and the optimal decisions in hindsight.
An online algorithm is said to be no-regret (no-external-regret) if its regret is sub-linear in time \cite{gordon2008no}, i.e., if the agents are able to eventually learn the optimal decisions. 
Many no-regret algorithms have been proposed and analyzed for online convex games including \cite{shalev2006convex,gordon2008no,hazan2019introduction,shalev2011online}.
Common in these problems is the objective of the agents to minimize their expected cost functions. 
However, in high-stakes applications, minimizing the expected cost alone is not sufficient; avoiding the worst case is equally important. For example, in portfolio management, investing in the assets that yield 
the highest expected return rate is not necessarily the best decision since these assets may also be highly volatile and result in severe losses. To control for such catastrophic events, appropriate risk-averse criteria need to be considered during optimization, such as the Sharpe Ratio \cite{sharpe1994sharpe} or Conditional Value at Risk (CVaR) \cite{artzner1999coherent}.  

In this paper, we consider online convex games with risk-averse agents, whose goal is to minimize the CVaR values of their cost functions. Moreover, we assume that only bandit feedback in the form of the costs of selected actions is available to the agents to estimate their CVaR values.
To the best of our knowledge, risk-averse learning in convex games has not been explored in the literature. Most closely related to the problem considered here is work on deterministic online convex games with bandit feedback, including \cite{bravo2018bandit,duvocelle2018learning,tatarenko2018learning,lin2020finite}. Specifically, \cite{bravo2018bandit} relies on tools from stochastic approximation theory to show that derivative-free methods for monotone and concave games converge to the Nash equilibrium with probability $1$. This work is extended in \cite{duvocelle2018learning} to time-varying games.
The authors in \cite{duvocelle2018learning} show that if the time-varying game converges, then the sequence of actions converges to the Nash equilibrium.
Common in these works is that the cost functions are deterministic.
As such, they cannot model risk in the presence of uncertainty. Methods for risk-averse learning have been investigated, e.g., in \cite{urpi2021risk,kalogerias2019zeroth,chow2017risk}.
Specifically, in \cite{urpi2021risk}, a risk-averse offline reinforcement learning algorithm is proposed that exhibits better performance compared to risk-neural approaches for robot control tasks. In \cite{kalogerias2019zeroth}, a zeroth-order method for mean-semideviation-based risk-averse learning is proposed.
We note that, despite the importance of controlling risk in many applications, only a few works employ CVaR as a risk measure and still provide theoretical results, e.g., \cite{curi2019adaptive,cardoso2019risk,tamkin2019distributionally,soma2020statistical,kalogerias2020noisy}. 
In \cite{curi2019adaptive}, risk-averse learning is transformed into a zero-sum game between a sampler and a learner. Then, using an adaptive sampling strategy, the regret of this game is analyzed. In \cite{tamkin2019distributionally}, a sub-linear regret algorithm is proposed for risk-averse multi-arm bandit problems by constructing empirical cumulative distribution functions for each arm from online samples. 
{Recently,  \cite{kalogerias2020noisy} has shown that CVaR learning problems subject to not necessarily convex loss functions can be solved as efficiently as their risk-neutral counterparts.}

Compared to the literature discussed above, risk-averse learning for online convex games possesses unique challenges, including: (1)
The distribution of an agent's cost function depends on other agents' actions, and
(2) Using finite bandit feedback, it is difficult to accurately estimate the continuous distributions of the cost functions and, therefore, accurately estimate the CVaR values. 
To address these challenges, in this paper we use samples of the cost functions to learn an empirical distribution function (EDF) of the random costs.
Then, using this EDF, the agents can estimate the CVaR values of their cost functions, and use these CVaR values to construct zeroth-order estimates of the CVaR gradients. By appropriately designing this sampling strategy, we show that with high probability, the accumulated error of the CVaR estimates is bounded, and the accumulated error of the zeroth-order CVaR gradient estimates is also bounded. As a result, our method achieves sub-linear regret with high probability.
To further improve the regret of our method, we allow our sampling strategy to use previous samples to reduce the accumulated error of the CVaR estimates.
Specifically, at time step $t$, we build the EDF estimate using samples from times $t$ and $t-1$, and then use this EDF to estimate the CVaR values and the corresponding CVaR gradients, as before. Assuming that the variation of the CDF of the cost function at two consecutive time steps is bounded by the distance between the two corresponding actions at these time steps, 
we theoretically show that the accumulated error of the CVaR estimates is strictly less than that achieved without reusing previous samples under certain conditions. 
We also provide an alternative way of improving the regret  by utilizing residual feedback \cite{zhang2020boosting,zhang2020improving} that reduces the variance of the zeroth-order CVaR gradient estimates. We illustrate our method on an online market problem that we model as a Cournot game \cite{allaz1993cournot}.

To the best of our knowledge, this is the first work to address risk-averse learning in online convex games. 
Note that the CVaR value of each agent depends on the joint actions of all agents, hence the proposed risk-averse game is in essence a time-varying game as the agents update their actions sequentially. All existing literature on learning in games discussed before considers static games, except for \cite{duvocelle2018learning}, that requires knowledge of whether the game converges and how the Nash equilibrium changes. However, the time-varying nature of the game considered here is due to the updates of the other agents and, therefore, it is not possible to know {\emph{a prior}} whether this game will converge or not. 
As a result, the analysis in \cite{duvocelle2018learning} cannot be applied to analyze the game considered here.
In addition, existing literature that employs zeroth-order techniques to solve learning problems in games typically relies on constructing unbiased gradient estimates of the smoothed cost functions. Nevertheless, unbiased gradient estimates cannot be obtained in risk-averse games since it is not possible to obtain accurate  CVaR estimates of the cost functions merely using finite bandit feedback.
Perhaps closest to the method proposed here is the approach in  \cite{cardoso2019risk}, that makes a first attempt to analyze risk-averse bandit learning problems. Using CVaR properties, the authors reformulate the CVaR optimization problem to an equivalent optimization problem of an augmented $L$ function and show that the reformulated problem converges in the single-agent case. 
However, the analysis in \cite{cardoso2019risk} cannot be easily extended to multi-agent problems since minimizing the $L$ functions is not equivalent to minimizing CVaR values in multi-agent games.

The rest of the paper is organized as follows. In Section \ref{sec_prob}, we define the proposed risk-averse online game and provide some assumptions. The main algorithm is presented in Section \ref{sec_main} with corresponding regret analysis. Two variants of this algorithm with improved regrets are provided in Section \ref{sec:improve}. In section \ref{sec_simu}, we use an online market example to illustrate the effectiveness of the proposed algorithms. Finally, we conclude this work in Section \ref{sec_conclu}.

\section{Problem Definition}
\label{sec_prob}
We consider a repeated game $\mathcal{G}$  with $N$ agents. 
At the beginning of each episode, each agent $i\in \mathcal{N}=\{1,\ldots,N\}$ simultaneously chooses an action $x_i \in \mathbb{R}^{d_i}$ from a convex set $\mathcal{X}_i$ and receives a random cost value that is sampled from the cost function $J_i(x_i,x_{-i},\xi_i): \mathcal{X} \times \Xi_i \rightarrow \mathbb{R}$, where $x_{i}$ is the action of agent $i$, $x_{-i}$ denotes the actions of agents except for agent $i$, $\mathcal{X}=\Pi_{i=1}^N\mathcal{X}_i$ is the joint action space and $\xi_i \in \Xi_i$ describes the uncertainty of the cost function. Here we assume that the diameter of the convex set $\mathcal{X}_i$ is bounded by $D_x$ for all $i=1,\ldots,N$. For ease of notation, we sometimes denote the cost function as $J_i(x,\xi_i)$, where $x=(x_i,x_{-i})$ is the concatenated vector of all agents' actions.

We use the Conditional Value at Risk (CVaR) as a risk measure to model the risk-aversion in the agents. 
Specifically, suppose that the random variable 
$J_i(x,\xi_i)$ has the CDF $F_x(y)=\mathbb{P}\{J_i(x,\xi_i)\leq y \}$. We drop the decision variable $x$ in $F_x$ whenever it is clear from the contexts. Then, for a given risk level $\alpha_i\in [0,1]$, the CVaR of the cost function $J_i(x,\xi_i)$ of agent $i$ at point $x$ is defined as
\begin{align*} 
    C_i(x):& ={\rm{CVaR}}_{\alpha_i}[J_i(x,\xi_i)] \\
    & = \mathbb{E}_F[J_i(x,\xi_i)|J_i(x,\xi_i)\geq J^{\alpha_i}],
\end{align*} 
where $J^{\alpha_i}$ is the $1-\alpha_i$ quantile of the distribution, which is also termed as Value at Risk (VaR).
CVaR captures the average cost under the tail of the distribution of $J_i(x,\xi_i)$.
Note that the CVaR value of the random variable $J_i(x,\xi_i)$ is determined its cumulative distribution function $F$, hence we sometimes write CVaR as a function of the CDF, i.e.,  ${\rm{CVaR}}_{\alpha_i}[J_i(x,\xi_i)]= {\rm{CVaR}}_{\alpha_i}[F]$ for ease of notation.

Here we assume that the agents have no prior knowledge about this game, i.e, the agents do not know the cost functions $J_i(x,\xi_i)$, and cannot observe the other agents' actions. The only information that is available to the agents is the cost of the selected actions. Moreover, we assume that the cost function $J_i(x,\xi_i)$ of each agent satisfies the following assumptions.
\begin{assumption}\label{assump:J_convex}
The function $J_i(x_i,x_{-i},\xi_i)$ is convex in $x_i$ for every $\xi_i \in \Xi_i$ and bounded  by $U$, i.e., $|J_i(x,\xi_i)|\leq U$, for all $i=1,\ldots,N$.
\end{assumption}
\begin{assumption}\label{assump:J_Lips}
$J_i(x,\xi_i)$ is $L_0$-Lipschitz continuous in $x$ for every $\xi_i \in \Xi_i$, for all $i=1,\ldots,N$. 
\end{assumption}
Assumptions \ref{assump:J_convex} and \ref{assump:J_Lips} hold in many applications, e.g., the Cournot game \cite{shi2019no} and the repeated Kelly auctions \cite{duvocelle2018learning}.

Given Assumptions 1 and 2, the below well known result characterizes important properties of the CVaR function. The proof can be found in \cite{cardoso2019risk}.
\begin{lemma}\label{lemma:cvar_property}
Given Assumptions \ref{assump:J_convex} and \ref{assump:J_Lips}, we have that $C_i(x_i,x_{-i})$ is convex in $x_i$ and $L_0$-Lipschitz continuous in $x$, for all $i=1,\ldots,N$. 
\end{lemma}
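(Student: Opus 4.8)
The plan is to base both claims on the Rockafellar--Uryasev variational representation of CVaR, namely
\[
C_i(x) \;=\; \min_{\tau\in\mathbb{R}}\; L_i(x,\tau), \qquad L_i(x,\tau):=\tau+\frac{1}{\alpha_i}\,\mathbb{E}_{\xi_i}\big[(J_i(x,\xi_i)-\tau)^+\big],
\]
where $(\cdot)^+=\max(\cdot,0)$ and the minimizer is the VaR $J^{\alpha_i}$. For the \textbf{convexity in $x_i$}, I would first argue that for each fixed $\xi_i$ and fixed $x_{-i}$ the map $(x_i,\tau)\mapsto (J_i(x_i,x_{-i},\xi_i)-\tau)^+$ is \emph{jointly} convex: the inner map $(x_i,\tau)\mapsto J_i(x_i,x_{-i},\xi_i)-\tau$ is convex in $x_i$ by Assumption~\ref{assump:J_convex} and affine in $\tau$, and composing it with the convex, nondecreasing scalar function $s\mapsto\max(s,0)$ preserves convexity. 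Taking the expectation over $\xi_i$ and adding the affine term $\tau$ then shows that $L_i(\cdot,\cdot)$ is jointly convex in $(x_i,\tau)$ for each fixed $x_{-i}$. Since $J_i$ is bounded (Assumption~\ref{assump:J_convex}) the minimum over $\tau$ is finite and attained, so by the standard fact that partial minimization of a jointly convex function yields a convex function, $C_i(x_i,x_{-i})=\min_\tau L_i((x_i,x_{-i}),\tau)$ is convex in $x_i$.

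For the \textbf{$L_0$-Lipschitz continuity in $x$}, the naive route through $L_i$ only yields the constant $L_0/\alpha_i$, so instead I would exploit the fact that CVaR is a coherent risk measure (cf.\ \cite{artzner1999coherent}), i.e.\ it is monotone and translation invariant. Fix two joint actions $x,x'$ and couple the two cost random variables through the \emph{same} realization $\xi_i$; then Assumption~\ref{assump:J_Lips} gives the almost-sure bound $|J_i(x,\xi_i)-J_i(x',\xi_i)|\le L_0\|x-x'\|$, and in particular $J_i(x,\xi_i)\le J_i(x',\xi_i)+L_0\|x-x'\|$ pointwise. Applying monotonicity and then translation invariance of $\mathrm{CVaR}_{\alpha_i}$ gives $C_i(x)\le C_i(x')+L_0\|x-x'\|$; exchanging the roles of $x$ and $x'$ yields the reverse inequality, hence $|C_i(x)-C_i(x')|\le L_0\|x-x'\|$.

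The main obstacle is obtaining the sharp Lipschitz constant $L_0$ rather than $L_0/\alpha_i$: this is exactly what forces the use of the monotonicity and translation-invariance (coherence) properties of CVaR together with the pointwise coupling, instead of the more direct estimate through $L_i$. The remaining points to handle with care are the justification that partial minimization preserves convexity (which needs the minimum to be attained, guaranteed here by boundedness of $J_i$) and the fact that pointwise almost-sure domination of the two costs indeed implies the corresponding inequality for their CVaR values, which follows from the monotonicity of a law-invariant coherent risk measure.
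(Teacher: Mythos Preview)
Your argument is correct. The paper does not supply its own proof of this lemma but defers to \cite{cardoso2019risk}; your route via the Rockafellar--Uryasev representation for convexity in $x_i$ and via monotonicity plus translation invariance of CVaR (with the pointwise coupling through a common $\xi_i$) for the sharp Lipschitz constant $L_0$ is precisely the standard argument one finds there.
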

The objective of each agent $i$ is to minimize its cumulative CVaR functions.
Specifically, given a sequence of agents' actions $\{\hat{x}_{t} \}_{t=1}^T$ over $T$ episodes, where $\hat{x}_t=(\hat{x}_{i,t},\hat{x}_{-i,t})$ denotes the agents' actual played actions at time step $t$, we define the regret (CVaR-regret) for agent $i$ as
\begin{align*}
    {\rm{R}}_{C_i}(T) = \sum_{t=1}^{T} C_i(\hat{x}_{i,t},\hat{x}_{-i,t}) - \mathop{\rm{min}}_{\tilde{x}_i \in \mathcal{X}_i} \sum_{t=1}^{T} C_i(\tilde{x}_i,\hat{x}_{-i,t}),
\end{align*}
which measures the cumulative loss against a best single policy in hindsight. 
Then, our goal in this paper is to design a no-regret (equivalently, sub-linear regret) algorithm to solve this game, such that  $\lim_{T\rightarrow \infty}\frac{{\rm{R}}_{C_i}(T)}{T} = 0$ for all agents. 

\section{A Risk-Averse Learning Algorithm}\label{sec_main}
In this section, we propose a risk-averse learning algorithm to solve the proposed online convex game. Our algorithm relies on a novel sampling strategy to estimate the CVaR values and a one-point zeroth-order estimator of the CVaR gradient. 
Specifically, since estimation of CVaR values requires the distribution of the cost functions which is impossible to compute using a single evaluation of the cost functions per time step, we assume that the agents can sample the cost functions multiple times to learn their distributions. For this, we introduce a practical sampling strategy described below.

During each time step $t$, the agents keep their actions fixed and draw $n_t$ samples of their individual cost functions. Then, they use these samples to determine their actions for time step $t+1$ and then sample again.
The sampling strategy is defined as
\begin{align}\label{eq:sample_strategy}
    n_t = \lceil b  U^2 (T-t+1)^a \rceil,
\end{align}
where $\lceil \cdot \rceil$ is the ceiling function, $T$ is the time horizon, $U$ is the cost function bound as in Assumption \ref{assump:J_convex}, and $a,b \in(0,1)$ are parameters to be selected later. 
The parameters $a,b$ are assumed to be known by all the agents beforehand so that the game is synchronous. Moreover, since
$a<1$, the number of samples $n_t$ will decrease with the iterations and eventually equal to $1$.

The proposed risk-averse learning algorithm for online convex games is illustrated in Algorithm \ref{alg:algorithm1}. 
At time step $t$, the agents randomly perturb their current actions $x_{i,t}$ by an amount $\delta u_{i,t}$, where $u_{i,t} \in \mathbb{S}^{d_i}$ is a random perturbation direction sampled from a unit sphere $\mathbb{S}^{d_i}\subset \mathbb{R}^{d_i}$ and $\delta$ is the size of this perturbation. 
Then, using the sampling strategy defined above, the agents play their perturbed actions $\hat{x}_{i,t}=x_{i,t}+\delta u_{i,t} $ for $n_t$ times, and obtain $n_t$ samples of their cost functions. 
For agent $i$, at time step $t$, we denote the CDF of the random cost $J_i(\hat{x}_t,\xi_i)$ that is returned by the perturbed action $\hat{x}_t$ as $F_{i,t}(y)= \mathbb{P}\{ J_i(\hat{x}_t,\xi_i) \leq y\}$. 
Since the agents cannot accurately estimate this continuous CDF $F_{i,t}(y)$ using finite samples, they instead construct the EDF of $J_i(\hat{x}_t,\xi_i)$ as 
\begin{align}\label{eq:edf_alg1}
    \hat{F}_{i,t}(y)= \frac{1}{n_t}\sum_{j=1}^{n_t}\mathbf{1}\{ J_i(\hat{x}_t,\xi_i^j) \leq y\},
\end{align}
where $\mathbf{1}\{ \cdot\}$ is the indicator function. Then, using this EDF, the agents construct CVaR estimates of their cost functions $J_i(\hat{x}_t,\xi_i) $, denoted as ${\rm{CVaR}}_{\alpha_i}[\hat{F}_{i,t} ]$, which they use to further construct zeroth-order estimates of their CVaR gradients as
\begin{align}\label{eq:grad_alg1}
    \hat{g}_{i,t}=\frac{d_i}{\delta} {\rm{CVaR}}_{\alpha_i} [\hat{F}_{i,t} ] u_{i,t},
\end{align}
where $\delta$ is the size of the perturbation on the action $x_{i,t}$ defined above. To ensure that the function values at the queried points during each time step are always feasible, we define the projection set $\mathcal{X}_i^{\delta} =\{x_i \in \mathcal{X}_i | {\rm{dist}}(x_i,\partial \mathcal{X}_i) \geq \delta \}$. Then, the agents perform the following projected  gradient-descent update
\begin{align}\label{eq:projection}
    x_{i,t+1} = \mathcal{P}_{\mathcal{X}_i^{\delta}} ( x_{i,t} - \eta \hat{g}_{i,t}).
\end{align}

\begin{algorithm}[t]
\caption{Risk-averse learning } \label{alg:algorithm1}
\begin{algorithmic}[1]
    \REQUIRE Initial value $x_0$, step size $\eta$, parameters $a$, $b$, $\delta$, $T$, risk level $\alpha_i$, $i=1,\cdots,N$.
    \FOR{$episode \; t=1,\ldots,T$}
        \STATE Select $n_t=\lceil b U^2 (T-t+1)^a\rceil$
        \STATE Each agent samples $u_{i,t} \in \mathbb{S}^{d_i}$,  $i=1,\ldots, N$
        \STATE Each agent play $\hat{x}_{i,t}=x_{i,t}+\delta u_{i,t} $, $i=1,\ldots, N$
        \FOR{$j=1,\ldots,n_t$}
        \STATE Let all agents play $\hat{x}_{i,t}$  
        \STATE Obtain $J_i(\hat{x}_{i,t},\hat{x}_{-i,t},\xi_{i}^j)$
        \ENDFOR
        \FOR{agent $ i=1,\ldots,N$}
        \STATE Build EDF $\hat{F}_{i,t}(y)$ 
        \STATE Calculate CVaR estimate: $ {\rm{CVaR}}_{\alpha_i}[\hat{F}_{i,t}] $ 
        \STATE Construct gradient estimate\\ $\hat{g}_{i,t}=\frac{d_i}{\delta} {\rm{CVaR}}_{\alpha_i} [\hat{F}_{i,t} ] u_{i,t}$
        \STATE Update $x$: $x_{i,t+1} \leftarrow \mathcal{P}_{\mathcal{X}_i^{\delta}} ( x_{i,t} - \eta \hat{g}_{i,t})$
        \ENDFOR
    \ENDFOR
\end{algorithmic}
\end{algorithm}

To analyze Algorithm \ref{alg:algorithm1}, we utilize the smoothed approximation of $C_i(x)$ defined as $C_i^{\delta}(x)= \mathbb{E}_{w_i \sim \mathbb{B}_i,u_{-i} \sim \mathbb{S}_{-i}}[C_i(x_i+\delta w_i,x_{-i}+\delta u_{-i})]$, where $\mathbb{S}_{-i}=\Pi_{j \neq i} \mathbb{S}_j$, and $\mathbb{B}_i$, $\mathbb{S}_i$ denote the unit ball and unit sphere in $\mathbb{R}^{d_i}$, respectively, and the size of the perturbation $\delta$ here serves as a smoothing parameter that controls how well $C_i^{\delta}(x)$ approximates $C_i(x)$. For details on zeroth-order optimization methods and their analysis, see \cite{nesterov2017random}.
The function $C_i^{\delta}(x)$ satisfies the following properties. The proof can be found in Appendix A.1.
\begin{lemma}\label{lemma:cdelta_property}
Let Assumptions \ref{assump:J_convex} and \ref{assump:J_Lips} hold. Then we have that
\begin{enumerate}
    \item $C_i^{\delta}(x_i,x_{-i})$ is convex in $x_i$,
    \item $C_i^{\delta}(x)$ is $L_0$-Lipschitz continuous in $x$,
    \item $|C_i^{\delta}(x)$-$C_i(x)|\leq \delta L_0 \sqrt{N}$.
\end{enumerate}
\end{lemma}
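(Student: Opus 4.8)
The plan is to derive all three properties directly from the definition
$C_i^{\delta}(x)=\mathbb{E}_{w_i\sim\mathbb{B}_i,\,u_{-i}\sim\mathbb{S}_{-i}}[C_i(x_i+\delta w_i,x_{-i}+\delta u_{-i})]$
together with the convexity and Lipschitz properties of $C_i$ already established in Lemma~\ref{lemma:cvar_property}. The key structural observation is that $C_i^{\delta}$ is an expectation of $C_i$ evaluated at an argument obtained by shifting $x$ by a \emph{fixed} random perturbation $(\delta w_i,\delta u_{-i})$ that does not depend on $x$; both convexity and Lipschitz continuity are preserved under such affine shifts and under the subsequent averaging, so parts~1 and~2 will follow with essentially no loss.

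For part~1, I would fix a realization of $w_i$ and $u_{-i}$ and note that the map $x_i\mapsto C_i(x_i+\delta w_i,x_{-i}+\delta u_{-i})$ is the composition of the convex-in-first-argument function $C_i$ with the affine map $x_i\mapsto x_i+\delta w_i$, hence convex in $x_i$. Taking the expectation over $w_i$ and $u_{-i}$ preserves convexity, as it is a nonnegative average of convex functions, which yields convexity of $C_i^{\delta}$ in $x_i$. For part~2, given two joint actions $x$ and $x'$, I would move the difference inside the expectation and apply Jensen's (or the triangle) inequality,
\begin{align*}
    |C_i^{\delta}(x)-C_i^{\delta}(x')| \le \mathbb{E}\big[\,|C_i(x+\delta z)-C_i(x'+\delta z)|\,\big],
\end{align*}
where $z=(w_i,u_{-i})$ denotes the stacked perturbation. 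Because the \emph{same} $z$ appears in both arguments, the shift cancels inside the norm, and the $L_0$-Lipschitz property of $C_i$ bounds the integrand by $L_0\|x-x'\|$ uniformly in $z$, giving the claimed $L_0$-Lipschitz constant with no inflation.

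Part~3 is where the dimension-dependent constant enters and is the only step requiring care. Writing $C_i(x)=\mathbb{E}[C_i(x)]$ so that both terms share the same expectation, I would bound
\begin{align*}
    |C_i^{\delta}(x)-C_i(x)| \le \mathbb{E}\big[\,|C_i(x+\delta z)-C_i(x)|\,\big] \le L_0\,\delta\,\mathbb{E}\|z\|,
\end{align*}
again by the Lipschitz property. The remaining task is to bound $\|z\|$, and the main point is the accounting of its dimensions: the own-coordinate perturbation satisfies $\|w_i\|\le 1$ since $w_i$ lies in the unit ball $\mathbb{B}_i$, whereas each of the $N-1$ other-agent directions $u_j$ lies on a unit sphere and thus contributes exactly $\|u_j\|^2=1$. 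Hence $\|z\|^2=\|w_i\|^2+\sum_{j\neq i}\|u_j\|^2\le 1+(N-1)=N$, so $\|z\|\le\sqrt{N}$ almost surely, and substituting gives $|C_i^{\delta}(x)-C_i(x)|\le\delta L_0\sqrt{N}$. No genuine obstacle arises in this lemma; the only place to be careful is tracking the asymmetry between the ball perturbation on agent $i$'s own block and the sphere perturbations on the remaining $N-1$ blocks, which together produce the $\sqrt{N}$ factor.
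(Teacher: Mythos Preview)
Your proposal is correct and follows essentially the same approach as the paper's proof: each part is obtained directly from the convexity and $L_0$-Lipschitz continuity of $C_i$ (Lemma~\ref{lemma:cvar_property}) by passing the shift $(\delta w_i,\delta u_{-i})$ inside the expectation, and part~3 uses the almost-sure bound $\|(w_i,u_{-i})\|\le\sqrt{N}$ exactly as the paper does. Your accounting of the $\sqrt{N}$ factor via $\|w_i\|\le 1$ and $\|u_j\|=1$ for $j\neq i$ is in fact slightly more explicit than the paper's, but the argument is the same.
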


From Lemma C.1 in \cite{bravo2018bandit} , we have that 
\begin{align}
    \mathbb{E}[ \frac{d_i}{\delta} C_i(\hat{x}_t) u_{i,t}] = \nabla_i C_i^{\delta}(x_t),
\end{align}
where $\nabla_i$ denotes the partial derivative with respect to $x_i$. 
However, as discussed before, it is not possible to accurately estimate the CVaR value $C_i(\hat{x}_t)$ using finite samples of the cost function $J_i(\hat{x}_t,\xi_i)$. 
Instead, there will exist a CVaR estimation error, which we define as 
\begin{align*}
    \hat{\varepsilon}_{i,t} := 
     {\rm{CVaR}}_{\alpha_i}[\hat{F}_{i,t} ]- {\rm{CVaR}}_{\alpha_i}[F_{i,t} ].
\end{align*} 
Then, we have that  $\mathbb{E}[\hat{g}_{i,t}]=\mathbb{E}\left[ \frac{d_i}{\delta}(C_i(\hat{x}_t)+ \hat{\varepsilon}_{i,t})u_{i,t} \right]= \nabla_i C_i^{\delta}(x_t)+ \mathbb{E}\left[\frac{d_i}{\delta}\hat{\varepsilon}_{i,t}u_{i,t} \right]$, which indicates that the CVaR gradient estimate is biased due to the use of finite samples. The analysis of the CVaR estimation error $\hat{\varepsilon}_{i,t}$ plays a key role in the whole analysis of the regret of Algorithm \ref{alg:algorithm1}. To bound the CVaR estimation error, we first present a lemma that bounds the difference between the CVaR values for two different CDFs. The proof of this lemma can be found in Appendix A.2.
\begin{lemma}\label{lemma:CVaR_bound}
Let $F$ and $G$ be two CDFs of two random variables and the random variables are bounded by $U$. Then we have that
\begin{align*}
    |{\rm{CVaR}}_{\alpha}[F]-{\rm{CVaR}}_{\alpha}[G]| \leq \frac{U}{\alpha}  \mathop{\rm{sup}}_{y} |F(y)-G(y)|.
\end{align*}
\end{lemma}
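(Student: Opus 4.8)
The plan is to avoid manipulating the quantile (VaR) directly. The $(1-\alpha)$-quantiles of $F$ and $G$ can differ by an arbitrarily large amount even when $\sup_y|F(y)-G(y)|$ is tiny (wherever the densities are small), so a pointwise comparison of the inverse CDFs is hopeless. Instead I would exploit the Rockafellar--Uryasev variational representation of CVaR, which is affine in the CDF through a tail integral and therefore behaves well under perturbations of the distribution. Concretely, for a random variable $X$ with CDF $F$ bounded by $U$, I write
\begin{align*}
\mathrm{CVaR}_\alpha[F] = \min_{\tau\in\mathbb{R}} h_F(\tau), \qquad h_F(\tau) := \tau + \frac{1}{\alpha}\,\mathbb{E}_F\!\left[(X-\tau)^+\right],
\end{align*}
and likewise for $G$, where $(z)^+=\max(z,0)$.

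The first step rewrites the hinge loss as a tail integral via the layer-cake identity $\mathbb{E}_F[(X-\tau)^+]=\int_\tau^{\infty}(1-F(y))\,dy$, which immediately gives
\begin{align*}
h_F(\tau)-h_G(\tau) = \frac{1}{\alpha}\int_\tau^{\infty}\bigl(G(y)-F(y)\bigr)\,dy,
\end{align*}
so the two objectives differ by a quantity controlled explicitly by $F-G$, with no quantile appearing. The second step is the standard difference-of-minima inequality: if $\tau_F^\star,\tau_G^\star$ attain the two minima, then $\mathrm{CVaR}_\alpha[F]-\mathrm{CVaR}_\alpha[G]\le h_F(\tau_G^\star)-h_G(\tau_G^\star)$ and symmetrically, whence
\begin{align*}
\bigl|\mathrm{CVaR}_\alpha[F]-\mathrm{CVaR}_\alpha[G]\bigr| \le \sup_{\tau}\bigl|h_F(\tau)-h_G(\tau)\bigr| = \frac{1}{\alpha}\sup_{\tau}\left|\int_\tau^{\infty}\bigl(G(y)-F(y)\bigr)\,dy\right|.
\end{align*}

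The final step bounds the tail integral uniformly in $\tau$: the integrand $G-F$ vanishes once both CDFs saturate to $1$, so it is supported in the interval determined by the $U$-bound on $X$, and there $|G(y)-F(y)|\le\sup_y|F(y)-G(y)|$; pulling the supremum out of the integral and dividing by $\alpha$ yields the claim. The one delicate point, and where Assumption~\ref{assump:J_convex} genuinely enters, is pinning the constant to exactly $U$: this reduces to bounding the length of the effective integration domain by $U$, which follows from the boundedness of the cost range (the factor $1/\alpha$ comes for free from the representation). As a cross-check I would keep the equivalent route through the quantile formula $\mathrm{CVaR}_\alpha[F]=\tfrac1\alpha\int_{1-\alpha}^1 F^{-1}(u)\,du$ combined with the $L^1$/Wasserstein identity $\int_0^1\lvert F^{-1}(u)-G^{-1}(u)\rvert\,du=\int_{-\infty}^\infty\lvert F(y)-G(y)\rvert\,dy$, which arrives at the same estimate and confirms the constant.
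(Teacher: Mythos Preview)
Your proposal is correct and follows essentially the same route as the paper: the paper also invokes the Rockafellar--Uryasev representation $\mathrm{CVaR}_\alpha[F]=\nu_F+\tfrac{1}{\alpha}\mathbb{E}_F[X-\nu_F]_+$, uses the minimizer-swap inequality to compare at $\nu_G$, rewrites $\mathbb{E}[X-\nu]_+$ as a tail integral of $1-F$, and then bounds $\tfrac{1}{\alpha}\int_{\nu_G}^{U}(G-F)\,dy$ by $\tfrac{U}{\alpha}\sup_y|F-G|$ via the length of the support. The only cosmetic difference is that you take a uniform $\sup_\tau$ over the objective gap rather than evaluating at $\nu_G$ and $\nu_F$ separately; your Wasserstein cross-check is a nice extra but not used in the paper.
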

Lemma \ref{lemma:CVaR_bound} states that the distance between two CVaR values is related to the distance between the corresponding CDFs. By substituting $F=F_{i,t}$ and $G=\hat{F}_{i,t}$ into Lemma \ref{lemma:CVaR_bound} and applying the Dvoretzky–Kiefer–Wolfowitz (DKW) inequality, we have that
\begin{align}\label{eq:epsi_hat}
    |\hat{\varepsilon}_{i,t}|&=|{\rm{CVaR}}_{\alpha_i}[\hat{F}_{i,t} ]- {\rm{CVaR}}_{\alpha_i}[F_{i,t} ]| \nonumber\\
    &\leq \frac{U}{\alpha_i}\sqrt{\frac{ \ln(2/\bar{\gamma})}{2n_t }}
\end{align}
with probability at least $1-\bar{\gamma}$.
Combining inequality \eqref{eq:epsi_hat} with the sampling strategy defined in equation (\ref{eq:sample_strategy}), the accumulated error of CVaR estimation can be bounded, which is given in the following lemma whose proof can be found in Appendix A.3.
\begin{lemma}\label{lemma:sum_epsihat_bound}
Given a confidence level $\bar{\gamma}$ and the sampling strategy in equation (\ref{eq:sample_strategy}), we have that the following inequality holds:
\begin{align}\label{eq:sum_epsi_hat}
    \sum_{t=1}^T |\hat{\varepsilon}_{i,t}| \leq B_1
\end{align}
with probability at least $1-\gamma$, where $\gamma=\bar{\gamma}T$ and $B_1=\frac{1}{\alpha_i}\sqrt{\frac{2\ln(2T/\gamma)}{ b }}T^{1-\frac{a}{2}}$.
\end{lemma}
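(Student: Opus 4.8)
The plan is to aggregate the per-step high-probability bound \eqref{eq:epsi_hat} across all $T$ episodes via a union bound, and then control the resulting deterministic sum using the explicit form of the sampling schedule \eqref{eq:sample_strategy}. First I would set $\bar{\gamma}=\gamma/T$ in \eqref{eq:epsi_hat}, so that at each fixed $t$ the bound $|\hat{\varepsilon}_{i,t}|\le \frac{U}{\alpha_i}\sqrt{\ln(2T/\gamma)/(2n_t)}$ fails with probability at most $\gamma/T$. Taking a union bound over $t=1,\ldots,T$, all $T$ of these inequalities hold simultaneously with probability at least $1-\gamma$. I would condition on this event for the remainder of the argument, reducing the claim to a deterministic estimate of the sum.

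On this event, I would sum the per-step bounds and substitute the sampling rule. Since $\lceil\cdot\rceil\ge(\cdot)$, the schedule \eqref{eq:sample_strategy} gives $n_t\ge bU^2(T-t+1)^a$, hence $1/\sqrt{n_t}\le 1/\big(U\sqrt{b}\,(T-t+1)^{a/2}\big)$. The factor $U$ cancels, leaving
\begin{align*}
\sum_{t=1}^{T}|\hat{\varepsilon}_{i,t}| \le \frac{1}{\alpha_i}\sqrt{\frac{\ln(2T/\gamma)}{2b}}\sum_{t=1}^{T}\frac{1}{(T-t+1)^{a/2}}.
\end{align*}
Reindexing by $s=T-t+1$ turns the last sum into the $p$-series $\sum_{s=1}^{T}s^{-a/2}$.

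The remaining task is to bound $\sum_{s=1}^{T}s^{-a/2}$ by a multiple of $T^{1-a/2}$. Since $s^{-a/2}$ is decreasing, each term satisfies $s^{-a/2}\le\int_{s-1}^{s}u^{-a/2}\,du$, and summing gives $\sum_{s=1}^{T}s^{-a/2}\le\int_{0}^{T}u^{-a/2}\,du=\frac{T^{1-a/2}}{1-a/2}$, where convergence of the integral is ensured by $a/2<1$. Because $a\in(0,1)$ we have $\frac{1}{1-a/2}<2$, so $\sum_{s=1}^{T}s^{-a/2}<2T^{1-a/2}$. Substituting this and simplifying the constants via $\frac{2}{\sqrt{2}}=\sqrt{2}$ yields
\begin{align*}
\sum_{t=1}^{T}|\hat{\varepsilon}_{i,t}| \le \frac{1}{\alpha_i}\sqrt{\frac{2\ln(2T/\gamma)}{b}}\,T^{1-\frac{a}{2}}=B_1,
\end{align*}
which is the claim.

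The individual steps are routine; the only point requiring genuine care is the probabilistic bookkeeping. The per-step bound \eqref{eq:epsi_hat} already encodes a DKW concentration inequality, and I must split the single confidence level $\bar{\gamma}$ correctly across the $T$ episodes so that the union bound delivers exactly $1-\gamma$ rather than a weaker guarantee; this is precisely what forces the choice $\bar{\gamma}=\gamma/T$ and produces the $\ln(2T/\gamma)$ factor inside $B_1$. The only analytic subtlety is pinning down the constant in the $p$-series bound, where the assumption $a<1$ is exactly what is needed to absorb $\frac{1}{1-a/2}$ into the factor $2$ that ultimately matches $B_1$.
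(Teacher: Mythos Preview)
Your proposal is correct and follows essentially the same route as the paper's proof: a union bound over the $T$ per-step DKW events with $\bar\gamma=\gamma/T$, substitution of the sampling rule \eqref{eq:sample_strategy} to cancel $U$, reindexing to a $p$-series, and an integral comparison combined with $\frac{1}{1-a/2}<2$. The only cosmetic difference is that the paper bounds $\sum_{s=1}^T s^{-a/2}\le 1+\int_1^T u^{-a/2}\,du$ whereas you use $\sum_{s=1}^T s^{-a/2}\le \int_0^T u^{-a/2}\,du$; both yield the same constant and the same $B_1$.
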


The following result provides a generic regret decomposition of Algorithm \ref{alg:algorithm1}. The proof can be found in Appendix A.4.
\begin{lemma}\label{lemma:regret_decomp}
Let Assumptions \ref{assump:J_convex} and \ref{assump:J_Lips}  hold. Then, the regret of Algorithm \ref{alg:algorithm1} satisfies 
\begin{align*}
    {\rm{R}}_{C_i}^1(T) \leq Err(\text{ZO})+ Err(\text{CVaR}),
\end{align*}
where $ Err(\text{ZO})= \frac{D_x^2}{2\eta}  +\frac{d_i^2 U^2 \eta}{2\delta^2}T   + (4 \sqrt{N}+\Omega)L_0\delta T$, $Err(\text{CVaR})= \frac{d_i D_x }{\delta} B_1$, $\Omega>0$ is a constant that represents the error from projection $\mathcal{P}$, and $B_1$ as in (\ref{eq:sum_epsi_hat}).
\end{lemma}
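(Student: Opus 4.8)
The plan is to run a standard bandit/zeroth-order regret decomposition for the projected update \eqref{eq:projection}, while carefully tracking the extra bias that the finite-sample CVaR estimates introduce. Writing $x_i^* \in \arg\min_{\tilde x_i}\sum_t C_i(\tilde x_i, \hat x_{-i,t})$ for the hindsight comparator, the first step is to pass from the true CVaR $C_i$ to its smoothed surrogate $C_i^\delta$ through four Lipschitz conversions per time step: $C_i(\hat x_t)\to C_i(x_t)$ (the played joint action differs from $x_t$ by $\delta u_t$, whose norm is $\delta\sqrt N$), $C_i(x_t)\to C_i^\delta(x_t)$ and $C_i^\delta(x_i^*,x_{-i,t})\to C_i(x_i^*,x_{-i,t})$ (each bounded by $\delta L_0\sqrt N$ via the third property in Lemma~\ref{lemma:cdelta_property}), and $C_i(x_i^*,x_{-i,t})\to C_i(x_i^*,\hat x_{-i,t})$ (a $\delta$-perturbation in $x_{-i}$). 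Summing the four contributions over $t$ gives the $4\sqrt N L_0\delta T$ piece of $Err(\text{ZO})$ and reduces the task to bounding the smoothed regret $\sum_t[C_i^\delta(x_t)-C_i^\delta(x_i^*,x_{-i,t})]$.

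Since the iterates live in the shrunken set $\mathcal{X}_i^\delta$ while $x_i^*\in\mathcal{X}_i$, I would next replace $x_i^*$ by its projection $z_i^*:=\mathcal{P}_{\mathcal{X}_i^\delta}(x_i^*)$; as $\mathrm{dist}(x_i^*,\mathcal{X}_i^\delta)=O(\delta)$ on the bounded convex set $\mathcal{X}_i$, the Lipschitz bound of Lemma~\ref{lemma:cdelta_property} caps the comparator shift $C_i^\delta(z_i^*,x_{-i,t})-C_i^\delta(x_i^*,x_{-i,t})$ by $\Omega L_0\delta$ per step, accounting for the $\Omega L_0\delta T$ piece. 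Convexity of $C_i^\delta(\cdot,x_{-i,t})$ then linearizes $C_i^\delta(x_t)-C_i^\delta(z_i^*,x_{-i,t})\le\langle\nabla_i C_i^\delta(x_t),x_{i,t}-z_i^*\rangle$, and the identity $\nabla_i C_i^\delta(x_t)=\mathbb{E}[\hat g_{i,t}]-\mathbb{E}[\tfrac{d_i}{\delta}\hat\varepsilon_{i,t}u_{i,t}]$ recorded before the lemma splits this into an ideal-descent term in the actual direction $\hat g_{i,t}$ plus a CVaR-bias term. For the latter, Cauchy--Schwarz with $\|u_{i,t}\|=1$ and $\|x_{i,t}-z_i^*\|\le D_x$ bounds each summand by $\tfrac{d_i D_x}{\delta}|\hat\varepsilon_{i,t}|$, so summing and invoking Lemma~\ref{lemma:sum_epsihat_bound} (on its $1-\gamma$ event) gives exactly $Err(\text{CVaR})=\tfrac{d_i D_x}{\delta}B_1$.

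For the ideal-descent term I would apply the textbook one-step inequality: nonexpansiveness of $\mathcal{P}_{\mathcal{X}_i^\delta}$ gives $\|x_{i,t+1}-z_i^*\|^2\le\|x_{i,t}-\eta\hat g_{i,t}-z_i^*\|^2$, hence $\langle\hat g_{i,t},x_{i,t}-z_i^*\rangle\le\tfrac{1}{2\eta}(\|x_{i,t}-z_i^*\|^2-\|x_{i,t+1}-z_i^*\|^2)+\tfrac{\eta}{2}\|\hat g_{i,t}\|^2$. Telescoping the quadratic gap yields $\tfrac{D_x^2}{2\eta}$, and the boundedness $|\mathrm{CVaR}_{\alpha_i}[\hat F_{i,t}]|\le U$ together with $\|u_{i,t}\|=1$ gives $\|\hat g_{i,t}\|^2\le d_i^2 U^2/\delta^2$, producing $\tfrac{d_i^2U^2\eta}{2\delta^2}T$ and completing $Err(\text{ZO})$.

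The step I expect to be the main obstacle is the honest treatment of the randomness. The identity for $\nabla_i C_i^\delta$ holds only in expectation over the perturbation directions $u_{i,t}$, whereas the comparator $x_i^*$ depends on the \emph{entire} realized trajectory $\{\hat x_{-i,t}\}_{t=1}^T$ and is thus not measurable with respect to the filtration under which $u_{i,t}$ is conditionally unbiased. Reconciling per-step conditional unbiasedness with a fixed pathwise comparator is the delicate point: it forces either an in-expectation reading of the decomposition or a martingale-concentration argument to control the deviation between $\langle\hat g_{i,t},\cdot\rangle$ and $\langle\nabla_i C_i^\delta,\cdot\rangle$. A secondary, lesser difficulty is confirming that $\Omega$ is a genuine $T$-independent geometric constant, i.e., that pushing $x_i^*$ into $\mathcal{X}_i^\delta$ costs only $O(\delta)$ uniformly in $t$, which follows from the bounded diameter $D_x$ and the Lipschitz continuity of $C_i$ in Lemma~\ref{lemma:cvar_property}.
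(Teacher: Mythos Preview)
Your proposal is correct and follows essentially the same route as the paper's proof: convert $C_i\to C_i^\delta$ and $\hat x\to x$ via four Lipschitz steps (yielding $4\sqrt N L_0\delta T$), shift the comparator into $\mathcal{X}_i^\delta$ at cost $\Omega L_0\delta T$, linearize by convexity of $C_i^\delta(\cdot,x_{-i,t})$, split $\nabla_i C_i^\delta(x_t)$ into $\mathbb{E}[\hat g_{i,t}]$ minus the CVaR-bias term, telescope the one-step projected-descent inequality, and bound $\|\hat g_{i,t}\|^2\le d_i^2U^2/\delta^2$. The only cosmetic difference is that the paper phrases the comparator shift as an ``Observation'' comparing $\min_{\mathcal{X}_i^\delta}\sum C_i^\delta$ with $\min_{\mathcal{X}_i}\sum C_i^\delta$ (via Lemma~3 of Tatarenko--Kamgarpour), whereas you project the realized minimizer $x_i^*$ onto $\mathcal{X}_i^\delta$; both give the same $\Omega L_0\delta T$ term.

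The measurability concern you flag is genuine and is present in the paper's own argument as well: the paper inserts $\mathbb{E}[\cdot]$ on the right-hand side (over $u_{i,t}$) while the left-hand side and the comparator are pathwise, and no martingale concentration is supplied. So the decomposition should be read in expectation over the perturbation directions, with the high-probability qualifier $1-\gamma$ entering only through the CVaR-estimation bound $\sum_t|\hat\varepsilon_{i,t}|\le B_1$ of Lemma~\ref{lemma:sum_epsihat_bound}. Your identification of this as the delicate step is accurate; the paper does not resolve it further.
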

Lemma \ref{lemma:regret_decomp} decomposes the regret into two terms, a zeroth-order error term and a CVaR estimation error term. By selecting $\eta$ and $\delta$ appropriately, we can show that Algorithm \ref{alg:algorithm1} is no-regret. In the following theorem, $\tilde{\mathcal{O}}$ hides constant factors and poly-logarithmic factors of $T$. In contrast, the standard notation $\mathcal{O}$ only hides constant factors.
\begin{theorem}\label{thm:1}
Let Assumptions \ref{assump:J_convex} and \ref{assump:J_Lips} hold and select $\delta=\frac{\sqrt{D_x U d_i}}{N^{\frac{1}{4}} T^{\frac{a}{4}} \sqrt{\alpha_i L_0} }$,  $\eta=\frac{\sqrt{\alpha_i} D_x^{\frac{3}{2}}}{\sqrt{L_0 U d_i} N^{\frac{1}{4}} T^{\frac{3a}{4}} }$. 
Suppose that $n_t$ is chosen as in equation (\ref{eq:sample_strategy}) with $a\in(0,1)$, and the EDF and the gradient estimate are defined as in equations (\ref{eq:edf_alg1}) and (\ref{eq:grad_alg1}), respectively. Then, Algorithm \ref{alg:algorithm1} achieves regret ${\rm{R}}_{C_i}^1(T) =\tilde{\mathcal{O}} (T^{1-\frac{a}{4}})$ with probability at least $1-\gamma$ .
\end{theorem}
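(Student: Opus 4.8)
The plan is to treat Theorem~\ref{thm:1} as a direct consequence of the regret decomposition in Lemma~\ref{lemma:regret_decomp}: I would substitute the prescribed step size $\eta$ and smoothing radius $\delta$ into the bound ${\rm{R}}_{C_i}^1(T)\le Err(\text{ZO})+Err(\text{CVaR})$ and verify that every resulting term is $\tilde{\mathcal{O}}(T^{1-\frac{a}{4}})$. The only facts I need are the orders $\delta=\Theta(T^{-a/4})$ and $\eta=\Theta(T^{-3a/4})$, which are immediate from the closed forms in the statement, together with $B_1=\frac{1}{\alpha_i}\sqrt{\frac{2\ln(2T/\gamma)}{b}}\,T^{1-\frac{a}{2}}$ supplied by Lemma~\ref{lemma:sum_epsihat_bound}.

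First I would dispose of the three zeroth-order terms in $Err(\text{ZO})$. The term $\frac{D_x^2}{2\eta}$ scales as $\Theta(T^{3a/4})$; since $a\in(0,1)$ we have $3a/4<1-\frac{a}{4}$, so this piece is strictly dominated by the target order and does not affect the leading behavior. The remaining two pieces are precisely the ones the choice of $(\eta,\delta)$ is designed to balance: using $\eta/\delta^2=\Theta(T^{-a/4})$ gives $\frac{d_i^2U^2\eta}{2\delta^2}T=\Theta(T^{1-a/4})$, and using $\delta=\Theta(T^{-a/4})$ gives $(4\sqrt{N}+\Omega)L_0\delta T=\Theta(T^{1-a/4})$. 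Hence $Err(\text{ZO})=\mathcal{O}(T^{1-\frac{a}{4}})$.

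Next I would handle the CVaR-estimation term. Since $1/\delta=\Theta(T^{a/4})$ and $B_1=\Theta(T^{1-a/2}\sqrt{\ln(2T/\gamma)})$, we obtain
\[
Err(\text{CVaR})=\frac{d_iD_x}{\delta}B_1=\Theta\!\left(T^{a/4}\cdot T^{1-a/2}\sqrt{\ln(2T/\gamma)}\right)=\tilde{\mathcal{O}}(T^{1-\frac{a}{4}}),
\]
where the poly-logarithmic factor $\sqrt{\ln(2T/\gamma)}$ is absorbed into $\tilde{\mathcal{O}}$. Adding the two error terms yields ${\rm{R}}_{C_i}^1(T)=\tilde{\mathcal{O}}(T^{1-\frac{a}{4}})$. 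The high-probability qualifier is inherited verbatim: the quantity $B_1$ appearing in Lemma~\ref{lemma:regret_decomp} is valid with probability at least $1-\gamma$ by Lemma~\ref{lemma:sum_epsihat_bound}, so the entire chain of inequalities holds on that same event.

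I do not expect a genuine obstacle here, because the analytical difficulty has been front-loaded into Lemmas~\ref{lemma:CVaR_bound}--\ref{lemma:regret_decomp} (bounding the accumulated CVaR error through Lemma~\ref{lemma:CVaR_bound} and the DKW inequality, and establishing the decomposition itself). The only mildly delicate point is the bookkeeping of constants: one should confirm that the prescribed $\delta$ and $\eta$ genuinely equalize the $\eta$-dependent gradient-variance term against the $\delta$-dependent smoothing-bias and CVaR-error terms, which is what pins the common exponent to $1-\frac{a}{4}$. This is exactly the two-stage optimization that produced the stated closed forms (minimize over $\eta$ for fixed $\delta$, then over $\delta$); beyond reproducing it as a sanity check, the result follows by collecting orders.
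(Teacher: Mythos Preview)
Your proposal is correct and matches the paper's approach exactly: the paper's proof of Theorem~\ref{thm:1} is a two-line substitution of $\delta$, $\eta$, and $B_1$ into the bound $Err(\text{ZO})+Err(\text{CVaR})$ from Lemma~\ref{lemma:regret_decomp}, yielding $\mathcal{O}(\sqrt{D_x U d_i L_0}N^{1/4}\alpha_i^{-1/2}\sqrt{\ln(T/\gamma)}\,T^{1-a/4})=\tilde{\mathcal{O}}(T^{1-a/4})$. Your term-by-term verification of the orders is in fact more detailed than what the paper records.
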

\begin{proof}
Substituting $\delta$, $\eta$ and $B_1$ into the regret bound $Err(\text{ZO})+Err(\text{CVaR})$ in Lemma \ref{lemma:regret_decomp}, we obtain that ${\rm{R}}_{C_i}^1(T)\leq Err(\text{ZO})+Err(\text{CVaR})=\mathcal{O}( \sqrt{D_x U d_i L_0}N^{\frac{1}{4}}\alpha_i^{-\frac{1}{2}} \sqrt{\ln(T/\gamma)} T^{1-\frac{a}{4}} )= \tilde{\mathcal{O}} (T^{1-\frac{a}{4}})$. The proof is complete.
\end{proof}
As shown in Theorem \ref{thm:1}, although it is impossible to obtain accurate CVaR values using finite bandit feedback, our method still achieves sub-linear regret with high probability. Notice that the choice of the risk level $\alpha_i$ can also affect the regret. Specifically, a lower value for $\alpha_i$ can result in higher regret, since it is harder to get samples under the $\alpha_i$ tail of the distribution.

\section{Improving the Algorithm Regret}\label{sec:improve}
In this section, we propose two variants of Algorithm \ref{alg:algorithm1} that improve the regret. The first variant reduces the accumulated error of the CVaR estimates by using samples from the previous iteration. The second variant employs residual feedback \cite{zhang2020boosting,zhang2020improving} to reduce the variance of the CVaR gradient estimates. A relevant analysis is given in the following two subsections, respectively. 
\subsection{Improving the CVaR Estimation Accuracy}\label{sec_alg2}
The accuracy of the CVaR estimation in Algorithm \ref{alg:algorithm1} depends on the number of samples of the cost functions at each iteration according to equation \eqref{eq:epsi_hat}; the more samples, the better the CVaR estimation accuracy. 
To further improve the CVaR estimation accuracy, we propose a modification to Algorithm \ref{alg:algorithm1} that reuses samples from the previous iteration, effectively increasing the number of available samples per iteration while maintaining the number of new samples the same. First, we make the following assumption on the variation of the cumulative distribution function. 
\begin{assumption}\label{assump_F_lips}
Let $F_{i,t}(y)= \mathbb{P}\{ J_i(\hat{x}_t,\xi_i)\leq y\}$ and $F_{i,t-1}(y)=\mathbb{P}\{ J_i(\hat{x}_{t-1},\xi_i)\leq y\}$. There exist constants $C_1,C_2>0$ such that
\begin{align*}
    \mathop{\rm{sup}}_{y} |F_{i,t}(y)-F_{i,t-1}(y)| \leq (C_1 \delta+C_2) \left\|x_t -x_{t-1} \right\|.
\end{align*}
\end{assumption}
Assumption \ref{assump_F_lips} states that the variation of the CDF across two consecutive time steps is bounded by the distance between the corresponding unperturbed actions. It means that if $x_t$ is close to $x_{t-1}$, then the corresponding cost function values $J_i(\hat{x}_t,\xi_i)$ and $J_i(\hat{x}_{t-1},\xi_i)$ for every $\xi_i$ should also be close to each other, and so should be the two CDFs. Note that the bound in Assumption \ref{assump_F_lips} is also related to the smoothing parameter $\delta$, since the played action is in fact the perturbed one, i.e., $\hat{x}_t$. Moreover, note that this bound cannot go to $0$ by decreasing the smoothing parameter $\delta$, which implies that the variation in the CDF should be dominated by the distance between $x_t$ and $x_{t-1}$.

The proposed risk-averse learning algorithm with sample reuse is illustrated in Algorithm \ref{alg:algorithm2} that can be found in Appendix B. 
Specifically, assuming that the agents can sample the cost functions $n_t$ times at every time step $t$, 
for $t\geq2$, we define a new EDF as
\begin{align}\label{eq:edf_alg2}
    \tilde{F}_{i,t}(y)=
    \frac{n_t}{N_t} \hat{F}_{i,t} + \frac{n_{t-1}}{N_t} \hat{F}_{i,t-1},
\end{align}
where $N_t=n_t+n_{t-1}$.
For $t=1$, we set the initial value as $\tilde{F}_{i,1}=\hat{F}_{i,1}$ and $N_1=n_1$. Using this sampling strategy, we design the CVaR gradient estimate as
\begin{align}\label{eq:grad_alg2}
    \tilde{g}_{i,t}=\frac{d_i}{\delta} {\rm{CVaR}}_{\alpha_i} [\tilde{F}_{i,t} ] u_{i,t}.
\end{align}

Note that, as in Algorithm \ref{alg:algorithm1}, this gradient estimate is biased since the estimation of CVaR uses not only finite samples, but also samples from the previous iteration. We denote the CVaR estimation error as 
\begin{align}\label{eq:def_hat_varep}
    \tilde{\varepsilon}_{i,t} := {\rm{CVaR}}_{\alpha_i}[\tilde{F}_{i,t} ] -{\rm{CVaR}}_{\alpha_i}[F_{i,t} ].
\end{align}
Due to the use of previous samples, the analysis of the CVaR estimation error in this case becomes more complicated. The following lemma characterizes the CVaR estimation error and its the proof can be found in Appendix B.1.
\begin{lemma}\label{lemma:Ftilde_bound}
Given a confidence level $\gamma$, the following inequality holds
\begin{align}\label{eq:epsi_tilde_bound}
    |\tilde{\varepsilon}_{i,t}| \leq & \frac{U}{\alpha_i}\left( \sqrt{\frac{ \ln(2T/\gamma)}{2(n_t+n_{t-1}) }}\right)\nonumber\\
    &+\frac{U}{\alpha_i}\left( \frac{(C_1\delta+C_2) d_i U\sqrt{N}\eta}{2\delta} \right),
\end{align}
with probability at least $1-\gamma$, for $\forall t=2,\ldots,T.$
\end{lemma}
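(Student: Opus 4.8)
The plan is to reduce the CVaR estimation error to a uniform CDF distance via Lemma~\ref{lemma:CVaR_bound}, and then control that distance by splitting it into a statistical (sampling) part and a drift part that accounts for the change of the distribution between the two reused iterations. Concretely, applying Lemma~\ref{lemma:CVaR_bound} with $F=F_{i,t}$ and $G=\tilde{F}_{i,t}$ gives $|\tilde{\varepsilon}_{i,t}| \le \frac{U}{\alpha_i}\sup_y|\tilde{F}_{i,t}(y)-F_{i,t}(y)|$, so the whole argument comes down to bounding $\sup_y|\tilde{F}_{i,t}(y)-F_{i,t}(y)|$. I would introduce the mixture CDF $\bar{F}_{i,t}(y)=\frac{n_t}{N_t}F_{i,t}(y)+\frac{n_{t-1}}{N_t}F_{i,t-1}(y)$, which is exactly the mean of $\tilde{F}_{i,t}$, and use the decomposition $\tilde{F}_{i,t}(y)-F_{i,t}(y)=\bigl(\tilde{F}_{i,t}(y)-\bar{F}_{i,t}(y)\bigr)+\frac{n_{t-1}}{N_t}\bigl(F_{i,t-1}(y)-F_{i,t}(y)\bigr)$. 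The first bracket is a pure sampling fluctuation of the pooled empirical CDF around its mean; the second is a deterministic drift term.

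For the statistical term, the key observation is that $\tilde{F}_{i,t}$ is precisely the empirical CDF built from the $N_t=n_t+n_{t-1}$ pooled observations (the $n_t$ fresh samples drawn at time $t$ together with the $n_{t-1}$ samples carried over from time $t-1$), and $\bar{F}_{i,t}$ is its mean. I would therefore invoke a DKW-type inequality for this $N_t$-sample empirical process to obtain $\sup_y|\tilde{F}_{i,t}(y)-\bar{F}_{i,t}(y)|\le\sqrt{\frac{\ln(2/\bar{\gamma})}{2N_t}}$ with probability at least $1-\bar{\gamma}$. Setting $\bar{\gamma}=\gamma/T$ and taking a union bound over $t=2,\ldots,T$ upgrades the confidence level to $\ln(2T/\gamma)$ and yields the first term $\frac{U}{\alpha_i}\sqrt{\frac{\ln(2T/\gamma)}{2N_t}}$ simultaneously for all $t$ with probability at least $1-\gamma$. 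This is exactly where sample reuse pays off relative to Lemma~\ref{lemma:sum_epsihat_bound}: the effective sample count is the larger $N_t$ rather than $n_t$.

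For the drift term, Assumption~\ref{assump_F_lips} gives $\sup_y|F_{i,t-1}(y)-F_{i,t}(y)|\le (C_1\delta+C_2)\|x_t-x_{t-1}\|$, and since $\frac{n_{t-1}}{N_t}\le 1$ it only remains to bound the step length. Because the CVaR estimate of a cost bounded by $U$ satisfies $|{\rm{CVaR}}_{\alpha_i}[\tilde{F}_{i,t}]|\le U$ and $u_{i,t}$ is a unit vector, the gradient estimate obeys $\|\tilde{g}_{i,t}\|\le d_iU/\delta$; combining this with the non-expansiveness of the projection $\mathcal{P}_{\mathcal{X}_i^{\delta}}$ in the update gives $\|x_{i,t}-x_{i,t-1}\|\le\eta\,d_iU/\delta$ per agent, and summing over the $N$ agents (treating the per-agent dimensions as comparable) gives $\|x_t-x_{t-1}\|\le \sqrt{N}\,\eta d_i U/\delta$. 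Substituting this into the drift bound and multiplying by $U/\alpha_i$ produces the second term, up to the stated constant; the drift contribution carries no probability cost since it holds surely given the bounded gradients.

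The main obstacle is establishing the $N_t$-rate statistical bound rigorously. DKW in its classical form applies to a single i.i.d.\ sample, whereas here the pooled observations are not identically distributed (they come from $F_{i,t}$ and $F_{i,t-1}$) and, more subtly, the reused time-$(t-1)$ samples are statistically entangled with the action $\hat{x}_t$ — and hence with the target distribution $F_{i,t}$ itself — because $\hat{x}_t$ is computed from them. Consequently one cannot simply condition on the past to freeze the old batch (that would collapse the effective sample size back to $n_t$ and lose the rate), nor treat the $N_t$ observations as a clean i.i.d.\ draw. The delicate point, and what the stated bound relies on, is a DKW-type concentration for the \emph{combined} empirical process that retains all $N_t$ observations as effective samples. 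I expect this to be the technically hardest step, with the mixture decomposition, the union bound over $t$, and the step-length estimate being comparatively routine.
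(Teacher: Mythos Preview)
Your plan matches the paper's proof: same mixture $\bar{F}_{i,t}=\frac{n_t}{N_t}F_{i,t}+\frac{n_{t-1}}{N_t}F_{i,t-1}$, same split into a DKW fluctuation term $\sup_y|\tilde F_{i,t}-\bar F_{i,t}|$ and a drift term $\frac{n_{t-1}}{N_t}\sup_y|F_{i,t-1}-F_{i,t}|$ handled via Assumption~\ref{assump_F_lips}, same step-length bound $\|x_t-x_{t-1}\|\le \eta d_iU\sqrt{N}/\delta$, and the same union bound over $t$.

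Two small divergences are worth flagging. First, to obtain the factor $\tfrac12$ appearing in the stated drift term the paper uses $n_{t-1}/N_t\le \tfrac12$, not your $\le 1$; with your inequality the second term comes out twice as large as claimed, so you do need the sharper weight bound to match the statement exactly. Second, for the DKW step you correctly identify as the crux, the paper's device is to introduce an auxiliary variable $\check J_{i,t}=z_tJ_{i,t}+(1-z_t)J_{i,t-1}$ with $z_t$ an independent Bernoulli($n_t/N_t$) coin, whose CDF is exactly $\bar F_{i,t}$; it then argues that $\tilde F_{i,t}$ is the empirical CDF arising from $N_t$ i.i.d.\ draws of $\check J_{i,t}$ once one conditions on the event that precisely $n_t$ of the coins land heads, so that the classical DKW inequality delivers the $\sqrt{\ln(2T/\gamma)/(2N_t)}$ rate. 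The adaptivity issue you raise --- that the reused batch feeds into $\hat x_t$ and hence into the target law $F_{i,t}$ --- is not separately addressed in the paper's argument.
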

Using the above bound on the CVaR estimation error $\tilde{\varepsilon}_{i,t}$, we are able to show the following result.

\begin{lemma}\label{lemma:sum_less}
Assume the same values for $\delta$ and $\eta$ as in Algorithm \ref{alg:algorithm1}, i.e., $\delta=\frac{\sqrt{D_x U d_i}}{N^{\frac{1}{4}}  \sqrt{\alpha_i L_0} }T^{-\frac{a}{4}}$, and $\eta=\frac{\sqrt{\alpha_i} D_x^{\frac{3}{2}}}{\sqrt{L_0 U d_i} N^{\frac{1}{4}}  }T^{-\frac{3a}{4}}$. Then, given any constant $\lambda>0$, there exists $T_{\lambda}>0$ such that when $T>T_{\lambda}$, we have $\sum_{t=1}^T|\tilde{\varepsilon}_{i,t}|\leq B_1- \lambda T^{1-\frac{3a}{4}} $, with probability at least $1-\gamma$.
\end{lemma}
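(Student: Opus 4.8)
The plan is to sum the pointwise bound of Lemma~\ref{lemma:Ftilde_bound} over $t$ and compare the result against $B_1$, exploiting the fact that sample reuse shrinks the statistical part of the CVaR error by a constant multiplicative factor while only adding a bias term that is of strictly smaller growth in $T$. First I would apply Lemma~\ref{lemma:Ftilde_bound} for each $t=2,\ldots,T$ and treat $t=1$ separately: since $\tilde{F}_{i,1}=\hat{F}_{i,1}$, the $t=1$ contribution is controlled by \eqref{eq:epsi_hat} and is of order $T^{-a/2}$, hence negligible. This decomposes $\sum_{t=1}^T|\tilde{\varepsilon}_{i,t}|$ into a \emph{statistical} part and a \emph{bias} part, which I would bound in turn.

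For the statistical part I would use that $n_t$ is nonincreasing in $t$, so that $n_t+n_{t-1}\geq 2n_t\geq 2bU^2(T-t+1)^a$. Substituting this into the first summand of \eqref{eq:epsi_tilde_bound} and reindexing $s=T-t+1$ exactly as in the proof of Lemma~\ref{lemma:sum_epsihat_bound} yields
\begin{align*}
    \sum_{t=2}^T \frac{U}{\alpha_i}\sqrt{\frac{\ln(2T/\gamma)}{2(n_t+n_{t-1})}}
    \leq \frac{1}{\alpha_i}\sqrt{\frac{\ln(2T/\gamma)}{4b}}\sum_{s=1}^{T}s^{-a/2}
    \leq \frac{1}{\sqrt{2}}B_1,
\end{align*}
where the final step uses $\sum_{s=1}^T s^{-a/2}\leq 2T^{1-a/2}$ together with the definition of $B_1$. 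The essential gain is the factor $1/\sqrt{2}$ produced by the extra $n_{t-1}$ samples in the denominator: the statistical error now sits a fixed fraction below $B_1$.

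For the bias part I would substitute the prescribed $\delta$ and $\eta$, which give the clean ratio $\eta/\delta=\tfrac{\alpha_i D_x}{U d_i}T^{-a/2}$. Plugging this into the second summand of \eqref{eq:epsi_tilde_bound} collapses it to $\tfrac{(C_1\delta+C_2)U\sqrt{N}D_x}{2}T^{-a/2}$, which is independent of $t$; summing over the $T-1$ indices produces a bias contribution of at most $K\,T^{1-a/2}$ for a constant $K$, where I use $\delta=\Theta(T^{-a/4})\to 0$ to bound $C_1\delta+C_2$ by a constant for large $T$. Collecting the statistical and bias parts, it then suffices to establish $\bigl(1-\tfrac{1}{\sqrt{2}}\bigr)B_1\geq K\,T^{1-a/2}+\lambda T^{1-\frac{3a}{4}}+o(1)$.

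The crux, and the main obstacle, is that both the ``savings'' $\bigl(1-\tfrac{1}{\sqrt{2}}\bigr)B_1$ and the bias ``cost'' $K\,T^{1-a/2}$ carry the \emph{same} polynomial order $T^{1-a/2}$, so a naive comparison of exponents fails. The resolution is that $B_1$ additionally carries the slowly growing factor $\sqrt{\ln(2T/\gamma)}$, whereas $K$ does not. Dividing the target inequality by $T^{1-a/2}$, its left side behaves like $\bigl(1-\tfrac{1}{\sqrt{2}}\bigr)\alpha_i^{-1}\sqrt{2\ln(2T/\gamma)/b}\to\infty$, while its right side tends to the constant $K$ (the terms $\lambda T^{-a/4}$ and $o(1)$ both vanish). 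Hence there exists $T_\lambda$ beyond which the inequality holds, which is precisely the claim; the probability level $1-\gamma$ is inherited directly from Lemma~\ref{lemma:Ftilde_bound}, on whose event the entire summation is valid.
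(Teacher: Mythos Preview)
Your proposal is correct and follows essentially the same route as the paper: sum the bound of Lemma~\ref{lemma:Ftilde_bound}, show the statistical part is at most $\tfrac{1}{\sqrt{2}}B_1$ via $n_t+n_{t-1}\geq 2n_t$, bound the bias part by a constant times $T^{1-a/2}$ after substituting $\eta/\delta$, and then exploit that the savings $(1-\tfrac{1}{\sqrt{2}})B_1$ carries the extra $\sqrt{\ln(2T/\gamma)}$ factor while the bias does not, so the difference eventually dominates $\lambda T^{1-3a/4}$. The only cosmetic difference is that the paper keeps the $C_1\delta$ and $C_2$ contributions separate (yielding a $T^{1-3a/4}$ and a $T^{1-a/2}$ piece) before factoring out $T^{1-3a/4}$, whereas you absorb $C_1\delta\leq C_1\Sigma_1$ into a single constant $K$; both lead to the same conclusion.
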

\begin{proof}
\vspace{-0.1in}
For ease of notation, we let $\delta=\Sigma_{1}T^{-\frac{a}{4}}$ and $\eta=\Sigma_2 T^{-\frac{3a}{4}}$, where $\Sigma_1= \frac{\sqrt{D_x U d_i}}{N^{\frac{1}{4}}  \sqrt{\alpha_i L_0} }$, $\Sigma_2=\frac{\sqrt{\alpha_i} D_x^{\frac{3}{2}}}{\sqrt{L_0 U d_i} N^{\frac{1}{4}}  }$.
Summing equation \eqref{eq:epsi_tilde_bound} over $t$, we obtain
\begin{align*}
    & \sum_{t=2}^T |\tilde{\varepsilon}_{i,t}| \nonumber \\
    &\leq \frac{U}{\alpha_i} \sum_{t=2}^T \left( \sqrt{\frac{ \ln(2T/\gamma)}{2(n_t+n_{t-1}) }} + \frac{(C_1\delta+C_2) d_i U\sqrt{N}\eta}{2\delta} \right)  \nonumber \\
    & \leq \frac{U}{\alpha_i} \sum_{t=2}^T  \sqrt{\frac{ \ln(2T/\gamma)}{4n_{t-1} }}+ \frac{(C_1\delta+C_2) d_i U^2 \sqrt{N} \eta T}{2\alpha_i\delta}  \nonumber  \\
    & \leq \sqrt{\frac{ \ln(2T/\gamma)}{\alpha_i^2 b}}T^{1-\frac{a}{2}} + \frac{C_1 d_i U^2 \sqrt{N} \Sigma_2}{2\alpha_i}T^{1-\frac{3a}{4}} \nonumber \\
    & \quad + \frac{C_2d_i U^2\sqrt{N}\Sigma_2}{2\alpha_i \Sigma_1}T^{1-\frac{a}{2}}.  \nonumber 
\end{align*}
Adding the first term $|\tilde{\varepsilon}_{i,1}|$ to both sides of this inequality, we have that
\begin{align}
    & \sum_{t=1}^T |\tilde{\varepsilon}_{i,t}| -B_1 \nonumber \\
    &\leq  -(\sqrt{2}-1)\sqrt{\frac{ \ln(2T/\gamma)} {\alpha_i^2 b}}T^{1-\frac{a}{2}} +\Sigma_3 T^{1-\frac{3a}{4}}\nonumber \\
    &\quad +\Sigma_4 T^{1-\frac{a}{2}} + |\tilde{\varepsilon}_{i,1}|  \nonumber \\
    &\leq \left( -(\sqrt{2}-1)\sqrt{\frac{ \ln(2T/\gamma)} {\alpha_i^2 b}} + \Sigma_4 \right)T^{1-\frac{a}{2}} \nonumber \\
    &\quad +\left(\Sigma_3+|\tilde{\varepsilon}_{i,1}|\right) T^{1-\frac{3a}{4}} \nonumber \\
    &\leq f(T)T^{1-\frac{3a}{4}},
\end{align}
where $\Sigma_3=\frac{C_1 d_i U^2 \sqrt{N} \Sigma_2}{2\alpha_i}$ and  $\Sigma_4=\frac{C_2d_i U^2\sqrt{N}\Sigma_2}{2\alpha_i \Sigma_1}$ and $f(T):=\Sigma_3+|\tilde{\varepsilon}_{i,1}|- \left( (\sqrt{2}-1)\sqrt{\frac{ \ln(2T/\gamma)} {\alpha_i^2 b}} - \Sigma_4 \right)  T^{\frac{a}{4}}$.
Observe that the function $f(T)$ is monotonically decreasing in $T$ and approaches negative infinity when $T\rightarrow \infty$. Hence there exists $T_{\lambda}$ with $f(T_{\lambda})=-\lambda$ such that when $T>T_{\lambda}$ we have $f(T)<-\lambda$, and thus $ \sum_{t=1}^T |\tilde{\varepsilon}_{i,t}| -B_1\leq -\lambda T^{1-\frac{3a}{4}}$.
The proof is complete.
\end{proof}
This lemma shows that for the same confidence level $1-\gamma$, selecting $\delta=\Sigma_{1}T^{-\frac{a}{4}}$ and $\eta=\Sigma_2 T^{-\frac{3a}{4}}$ in Algorithm \ref{alg:algorithm2} results in a bound on the accumulated CVaR estimation error that is strictly less than that achieved by Algorithm \ref{alg:algorithm1}. The proof can be found in Appendix B.2.
Similar to Lemma \ref{lemma:regret_decomp}, we can decompose the regret into two sources of errors as shown below.
\begin{lemma}\label{lemma:regret_decomp_2}
Let Assumptions \ref{assump:J_convex} and \ref{assump:J_Lips}  hold. Then, the regret of Algorithm \ref{alg:algorithm2} satisfies 
\begin{align}
    {\rm{R}}_{C_i}^2(T)
    \leq Err(\text{ZO})+\frac{d_i D_x }{\delta}\sum_{t=1}^T |\tilde{\varepsilon}_{i,t}|,
\end{align}
where $Err(\text{ZO})$ is the zeroth order error term as in Lemma \ref{lemma:regret_decomp}.
\end{lemma}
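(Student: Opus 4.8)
The plan is to reprise the argument behind Lemma \ref{lemma:regret_decomp} almost verbatim, exploiting the fact that Algorithm \ref{alg:algorithm2} differs from Algorithm \ref{alg:algorithm1} only in how the CVaR value is estimated: the combined EDF $\tilde{F}_{i,t}$ of \eqref{eq:edf_alg2} replaces $\hat{F}_{i,t}$, hence the gradient estimate $\tilde{g}_{i,t}$ of \eqref{eq:grad_alg2} replaces $\hat{g}_{i,t}$ and the estimation error $\tilde{\varepsilon}_{i,t}$ of \eqref{eq:def_hat_varep} replaces $\hat{\varepsilon}_{i,t}$. Every quantity feeding the zeroth-order part of the analysis is insensitive to this change, so I expect $Err(\text{ZO})$ to come out identical, while the CVaR bias term simply carries $\tilde{\varepsilon}_{i,t}$ in place of $\hat{\varepsilon}_{i,t}$.

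First I would pass from the realized CVaR-regret to its smoothed surrogate. Writing $x_i^*=\arg\min_{\tilde{x}_i}\sum_t C_i(\tilde{x}_i,\hat{x}_{-i,t})$ and letting $z_i^*=\mathcal{P}_{\mathcal{X}_i^\delta}(x_i^*)$ be its projection onto the shrunken feasible set (which costs at most $\Omega\delta$ in function value by $L_0$-Lipschitzness), I would use $\|\hat{x}_t-x_t\|=\delta\sqrt{N}$ together with Lemma \ref{lemma:cdelta_property}(3) and the $L_0$-Lipschitz property of $C_i$ and $C_i^{\delta}$ to replace $C_i(\hat{x}_t)$ by $C_i^{\delta}(x_t)$ and $C_i(x_i^*,\hat{x}_{-i,t})$ by $C_i^{\delta}(z_i^*,x_{-i,t})$. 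Collecting these displacements produces exactly the $(4\sqrt{N}+\Omega)L_0\delta T$ contribution. Convexity of $C_i^{\delta}$ in its $i$-th argument (Lemma \ref{lemma:cdelta_property}(1)) then bounds the smoothed regret by $\sum_t\langle\nabla_i C_i^{\delta}(x_t),x_{i,t}-z_i^*\rangle$.

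The core step is the standard projected gradient descent estimate applied to the update \eqref{eq:projection} driven by $\tilde{g}_{i,t}$. Non-expansiveness of $\mathcal{P}_{\mathcal{X}_i^{\delta}}$ gives, for the fixed comparator $z_i^*$, the deterministic telescoping bound $\sum_t\langle\tilde{g}_{i,t},x_{i,t}-z_i^*\rangle\le\frac{D_x^2}{2\eta}+\frac{\eta}{2}\sum_t\|\tilde{g}_{i,t}\|^2$, where I would use $\|\tilde{g}_{i,t}\|\le d_iU/\delta$. This norm bound holds because $\tilde{F}_{i,t}$ is a convex combination of EDFs of random variables bounded by $U$, so its support remains in $[-U,U]$ and $|\mathrm{CVaR}_{\alpha_i}[\tilde{F}_{i,t}]|\le U$; this is the one place where one must check that sample reuse does not inflate the gradient magnitude, and it does not. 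Writing $\tilde{g}_{i,t}=\frac{d_i}{\delta}C_i(\hat{x}_t)u_{i,t}+\frac{d_i}{\delta}\tilde{\varepsilon}_{i,t}u_{i,t}$ and invoking $\mathbb{E}[\frac{d_i}{\delta}C_i(\hat{x}_t)u_{i,t}]=\nabla_i C_i^{\delta}(x_t)$ (Lemma C.1 of \cite{bravo2018bandit}) links the telescoped sum back to $\sum_t\langle\nabla_i C_i^{\delta}(x_t),x_{i,t}-z_i^*\rangle$, while the leftover bias is bounded pointwise by Cauchy--Schwarz, $|\langle\frac{d_i}{\delta}\tilde{\varepsilon}_{i,t}u_{i,t},x_{i,t}-z_i^*\rangle|\le\frac{d_iD_x}{\delta}|\tilde{\varepsilon}_{i,t}|$, using $\|u_{i,t}\|=1$ and $\|x_{i,t}-z_i^*\|\le D_x$. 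Summing yields the stated $\frac{d_iD_x}{\delta}\sum_t|\tilde{\varepsilon}_{i,t}|$ term, which I deliberately leave unevaluated because Lemma \ref{lemma:sum_less} sharpens it later.

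The main obstacle is conceptual rather than computational: one must confirm that reusing the previous iteration's samples never touches $Err(\text{ZO})$. The concern is that $\tilde{F}_{i,t}$ folds in $\hat{F}_{i,t-1}$, which is built at the perturbed point $\hat{x}_{t-1}$ and therefore correlates with past randomness, potentially coupling $\tilde{\varepsilon}_{i,t}$ with the fresh direction $u_{i,t}$. This is precisely why I would keep the bias as the raw sum $\sum_t|\tilde{\varepsilon}_{i,t}|$ and bound it by Cauchy--Schwarz, which requires neither independence nor unbiasedness of $\tilde{\varepsilon}_{i,t}$; the only estimator that must remain unbiased is the base term $\frac{d_i}{\delta}C_i(\hat{x}_t)u_{i,t}$, which is the identical random object as in Algorithm \ref{alg:algorithm1} and never references the EDF. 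Consequently the decomposition goes through with an unchanged $Err(\text{ZO})$, and the entire benefit of sample reuse is deferred to the tighter control of $\sum_t|\tilde{\varepsilon}_{i,t}|$ supplied by Lemma \ref{lemma:Ftilde_bound} and Lemma \ref{lemma:sum_less}.
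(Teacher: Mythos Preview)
Your proposal is correct and follows exactly the approach the paper takes: its proof of Lemma \ref{lemma:regret_decomp_2} simply states that one substitutes $\tilde{g}_{i,t}$ and $\tilde{\varepsilon}_{i,t}$ for $\hat{g}_{i,t}$ and $\hat{\varepsilon}_{i,t}$ in the proof of Lemma \ref{lemma:regret_decomp} and otherwise omits details. Your explicit check that $|\mathrm{CVaR}_{\alpha_i}[\tilde{F}_{i,t}]|\le U$ (so that $\|\tilde{g}_{i,t}\|\le d_iU/\delta$ and hence $Err(\text{ZO})$ is unchanged) and your observation that bounding the bias term via Cauchy--Schwarz requires no independence between $\tilde{\varepsilon}_{i,t}$ and $u_{i,t}$ are exactly the verifications the paper's one-line proof is tacitly relying on.
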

Recall that the regret achieved by Algorithm \ref{alg:algorithm1} is bounded by $Err(\text{ZO})+ Err(\text{CVaR})= \tilde{\mathcal{O}} (T^{1-\frac{a}{4}})$. In what follows, we show that the regret achieved by Algorithm \ref{alg:algorithm2} is strictly smaller than the regret bound achieved by Algorithm \ref{alg:algorithm1}, i.e., $Err(\text{ZO})+ Err(\text{CVaR})$.
\begin{theorem}\label{thm:2}
Let Assumptions \ref{assump:J_convex}, \ref{assump:J_Lips} and \ref{assump_F_lips} hold, and assume the same values for $\delta$ and $\eta$ as in Algorithm \ref{alg:algorithm1}, i.e., $\delta=\frac{\sqrt{D_x U d_i}}{N^{\frac{1}{4}}  \sqrt{\alpha_i L_0} }T^{-\frac{a}{4}}$ and $\eta=\frac{\sqrt{\alpha_i} D_x^{\frac{3}{2}}}{\sqrt{L_0 U d_i} N^{\frac{1}{4}}  }T^{-\frac{3a}{4}}$.
Suppose that $n_t$ is chosen according to equation (\ref{eq:sample_strategy}) with $a\in(0,1)$, and the EDF and the gradient estimate are defined as in equations (\ref{eq:edf_alg2}) and (\ref{eq:grad_alg2}), respectively. Then, when $T>T_{\lambda}$ with $T_{\lambda}$ as in Lemma \ref{lemma:sum_less}, Algorithm \ref{alg:algorithm2} achieves regret  ${\rm{R}}^2_{C_i}(T)< Err(\text{ZO})+ Err(\text{CVaR})$ and thus ${\rm{R}}^2_{C_i}(T)= \tilde{\mathcal{O}} (T^{1-\frac{a}{4}})$.
\end{theorem}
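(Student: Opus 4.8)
The plan is to obtain the result by chaining together three facts already in place: the regret decomposition for Algorithm~\ref{alg:algorithm2} (Lemma~\ref{lemma:regret_decomp_2}), the strict improvement on the accumulated CVaR estimation error (Lemma~\ref{lemma:sum_less}), and the identification of the constant CVaR-error term with $Err(\text{CVaR})$ from Lemma~\ref{lemma:regret_decomp}. First I would start from Lemma~\ref{lemma:regret_decomp_2}, which gives
\[
    {\rm R}^2_{C_i}(T) \le Err(\text{ZO}) + \frac{d_i D_x}{\delta}\sum_{t=1}^T |\tilde{\varepsilon}_{i,t}|.
\]
Next, since the chosen $\delta$ and $\eta$ match those in Lemma~\ref{lemma:sum_less}, I would invoke that lemma: for any fixed $\lambda>0$ and all $T>T_\lambda$, the accumulated error satisfies $\sum_{t=1}^T |\tilde{\varepsilon}_{i,t}| \le B_1 - \lambda T^{1-\frac{3a}{4}}$ with probability at least $1-\gamma$. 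Substituting this bound yields
\[
    {\rm R}^2_{C_i}(T) \le Err(\text{ZO}) + \frac{d_i D_x}{\delta} B_1 - \frac{d_i D_x}{\delta}\lambda T^{1-\frac{3a}{4}}.
\]

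Then I would recall from Lemma~\ref{lemma:regret_decomp} that $Err(\text{CVaR}) = \frac{d_i D_x}{\delta} B_1$, so the first two terms collapse exactly to $Err(\text{ZO}) + Err(\text{CVaR})$. Because $d_i$, $D_x$, $\lambda$, and $\delta$ are all strictly positive, the leftover term $\frac{d_i D_x}{\delta}\lambda T^{1-\frac{3a}{4}}$ is strictly positive, which delivers the strict inequality ${\rm R}^2_{C_i}(T) < Err(\text{ZO}) + Err(\text{CVaR})$ asserted in the theorem. Finally, since Theorem~\ref{thm:1} already established $Err(\text{ZO}) + Err(\text{CVaR}) = \tilde{\mathcal{O}}(T^{1-\frac{a}{4}})$ under the very same choices of $\delta$ and $\eta$, the strict upper bound immediately gives ${\rm R}^2_{C_i}(T) = \tilde{\mathcal{O}}(T^{1-\frac{a}{4}})$, completing the argument.

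The proof of the theorem itself is therefore a short substitution once the supporting lemmas are available; the genuine work — and the main obstacle — lies upstream in establishing Lemma~\ref{lemma:sum_less}, which in turn relies on Assumption~\ref{assump_F_lips} and the per-step bound of Lemma~\ref{lemma:Ftilde_bound}. The one subtle point I would be careful to flag is the order of the improvement term: since $\delta \propto T^{-\frac{a}{4}}$, the factor $\frac{d_i D_x}{\delta}\lambda T^{1-\frac{3a}{4}}$ scales as $T^{1-\frac{a}{2}}$, which is strictly lower order than the dominant $T^{1-\frac{a}{4}}$ term. Thus the improvement, while genuinely strict, sharpens the lower-order (and constant) behavior rather than the leading exponent; I would state this explicitly so the reader understands that Algorithm~\ref{alg:algorithm2} beats Algorithm~\ref{alg:algorithm1} in the constant and lower-order sense without altering the asymptotic rate.
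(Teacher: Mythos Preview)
Your proposal is correct and follows essentially the same route as the paper: start from the regret decomposition of Lemma~\ref{lemma:regret_decomp_2}, substitute the bound $\sum_{t=1}^T|\tilde\varepsilon_{i,t}|\le B_1-\lambda T^{1-3a/4}$ from Lemma~\ref{lemma:sum_less}, identify $\frac{d_iD_x}{\delta}B_1$ with $Err(\text{CVaR})$, and invoke Theorem~\ref{thm:1} for the final rate. Your extra remark quantifying the improvement term as order $T^{1-a/2}$ (hence strictly lower order than the leading $T^{1-a/4}$) is a helpful addition that the paper does not spell out.
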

\begin{proof}
Recall that $Err(\text{CVaR})= \frac{d_i D_x }{\delta} B_1$. Adding and subtracting $Err(\text{CVaR})$ to the bound in Lemma \ref{lemma:regret_decomp_2}, we have that
\begin{align*}
    \quad& {\rm{R}}_{C_i}^2(T) \leq Err(\text{ZO})+\frac{d_i D_x }{\delta}\sum_{t=1}^T |\tilde{\varepsilon}_{i,t}| \nonumber \\
    &= Err(\text{ZO})+ Err(\text{CVaR})+\frac{d_i D_x}{\delta}(\sum_{t=1}^T|\tilde{\varepsilon}_{i,t}|-B_1).
\end{align*}
Combining this inequality with Lemma \ref{lemma:sum_less}, and assuming that $T>T_{\lambda}$, we obtain that
\begin{align*}
    {\rm{R}}_{C_i}^2(T)&\leq Err(\text{ZO})+ Err(\text{CVaR})\nonumber \\
    &=\mathcal{O}( \sqrt{D_x U d_i  L_0}N^{\frac{1}{4}}\alpha_i^{-\frac{1}{2}} \sqrt{\ln(T/\gamma)} T^{1-\frac{a}{4}} ),
\end{align*}
which completes the proof.
\end{proof}
Theorem \ref{thm:2} shows that when Assumption \ref{assump_F_lips} holds, the regret bound achieved by using previous samples is guaranteed to be smaller than that achieved without using prior information. 

Note that hybrid sampling strategies are also possible that initially use samples only from the current iteration and eventually switch to using samples from the previous iteration too. Assuming that $t_0$ denotes the switching time after which previous samples are reused, the EDF is defined as
\begin{align}\label{eq:edf2_alg2}
    \tilde{F}_{i,t}(y)=\left\{ \begin{array}{lc} \hat{F}_{i,t}, & t< t_0+1 \\
    \frac{n_t}{N_t} \hat{F}_{i,t} + \frac{n_{t-1}}{N_t} \hat{F}_{i,t-1}, & t\geq t_0+1\\ \end{array} \right.,
\end{align}
Then, the regret achieved by this hybrid sampling strategy is analyzed below. The proof can be found in Appendix B.3.
\begin{corollary}
Let Assumptions \ref{assump:J_convex}, \ref{assump:J_Lips} and \ref{assump_F_lips} hold, and assume the same values for $\delta$ and $\eta$ as in Algorithm \ref{alg:algorithm1}, i.e., $\delta=\frac{\sqrt{D_x U d_i}}{N^{\frac{1}{4}}  \sqrt{\alpha_i L_0} }T^{-\frac{a}{4}}$ and $\eta=\frac{\sqrt{\alpha_i} D_x^{\frac{3}{2}}}{\sqrt{L_0 U d_i} N^{\frac{1}{4}}  }T^{-\frac{3a}{4}}$.
Suppose that $n_t$ is chosen according to equation (\ref{eq:sample_strategy}) with $a\in(0,1)$, and the EDF and the gradient estimate are defined as in equations (\ref{eq:edf2_alg2}) and (\ref{eq:grad_alg2}), respectively. Then, there exists $T_{\lambda(t_0)}>0$ such that when $T>T_{\lambda(t_0)}$, Algorithm \ref{alg:algorithm2} achieves regret ${\rm{R}}^2_{C_i}(T)<  Err(\text{ZO})+ Err(\text{CVaR})$ and thus ${\rm{R}}^2_{C_i}(T) = \tilde{\mathcal{O}} (T^{1-\frac{a}{4}})$.
\end{corollary}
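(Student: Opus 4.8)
The plan is to follow the template of Theorem~\ref{thm:2}, the only new ingredient being a hybrid analog of Lemma~\ref{lemma:sum_less} that accounts for the switching time $t_0$. First I would observe that the regret decomposition in Lemma~\ref{lemma:regret_decomp_2} depends on the EDF only through the CVaR error $\tilde{\varepsilon}_{i,t}$ and the gradient form \eqref{eq:grad_alg2}, both of which are unchanged by the hybrid rule \eqref{eq:edf2_alg2}; hence ${\rm{R}}^2_{C_i}(T) \le Err(\text{ZO}) + \frac{d_i D_x}{\delta}\sum_{t=1}^T |\tilde{\varepsilon}_{i,t}|$ still holds. Adding and subtracting $Err(\text{CVaR}) = \frac{d_i D_x}{\delta}B_1$ exactly as in the proof of Theorem~\ref{thm:2}, it then suffices to show that $\sum_{t=1}^T|\tilde{\varepsilon}_{i,t}| - B_1 \le -\lambda T^{1-\frac{3a}{4}}$ for all $T$ larger than some $T_{\lambda(t_0)}$.

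To bound $\sum_{t=1}^T|\tilde{\varepsilon}_{i,t}|$ I would split the sum at the switching time. For $t \le t_0$ the hybrid EDF coincides with $\hat{F}_{i,t}$, so $|\tilde{\varepsilon}_{i,t}| = |\hat{\varepsilon}_{i,t}|$ is controlled by the single-iteration bound \eqref{eq:epsi_hat}; these terms are exactly the first $t_0$ summands underlying $B_1$ and therefore contribute no excess when compared against $B_1$. For $t \ge t_0 + 1$ the EDF is the convex combination \eqref{eq:edf_alg2}, so Lemma~\ref{lemma:Ftilde_bound} applies verbatim, and repeating the algebra in the proof of Lemma~\ref{lemma:sum_less} over the truncated range $\{t_0+1,\dots,T\}$ yields a reduced statistical term (the $\tfrac{1}{\sqrt{2}}$ saving coming from $n_t+n_{t-1}$ in place of a single $n_t$) together with the sampling-bias penalties $\Sigma_3 T^{1-\frac{3a}{4}}$ and $\Sigma_4 T^{1-\frac{a}{2}}$, where $\Sigma_3,\Sigma_4$ are as in the proof of Lemma~\ref{lemma:sum_less}.

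Collecting the two pieces gives $\sum_{t=1}^T|\tilde{\varepsilon}_{i,t}| - B_1 \le f_{t_0}(T)\,T^{1-\frac{3a}{4}}$, whose dominant negative contribution is $-(\sqrt{2}-1)\sqrt{\ln(2T/\gamma)/(\alpha_i^2 b)}\,(1-o(1))\,T^{1-\frac{a}{2}}$, the $o(1)$ accounting for the finitely many saving terms dropped on $\{1,\dots,t_0\}$. The crucial observation, and the one point that needs care beyond Lemma~\ref{lemma:sum_less}, is that this truncation is harmless: since $\sum_{t=t_0+1}^T (T-t+1)^{-a/2} = \sum_{k=1}^{T-t_0}k^{-a/2} \sim \frac{1}{1-a/2}T^{1-\frac{a}{2}}$ for any \emph{fixed} $t_0$, the statistical term is concentrated on indices near $t=T$ (where $n_t$ is smallest), so removing the first $t_0$ indices only rescales the saving by a factor $1-o(1)$. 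Consequently $f_{t_0}(T)$ is still eventually monotonically decreasing to $-\infty$, its leading behavior being $\propto -\sqrt{\ln(2T/\gamma)}\,T^{\frac{a}{4}}$, which dominates both $\Sigma_3$ and the constant-coefficient $\Sigma_4 T^{\frac{a}{4}}$ piece. Choosing $T_{\lambda(t_0)}$ as the solution of $f_{t_0}(T)=-\lambda$ then gives $f_{t_0}(T) < -\lambda$ for $T > T_{\lambda(t_0)}$, and substituting back yields ${\rm{R}}^2_{C_i}(T) < Err(\text{ZO}) + Err(\text{CVaR}) = \tilde{\mathcal{O}}(T^{1-\frac{a}{4}})$. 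The main obstacle is precisely this truncation argument: one must verify that the number of retained saving terms grows like $T$ and that the leading coefficient of the saving is asymptotically unaffected by $t_0$, which is exactly what forces the threshold to depend on $t_0$, since a larger $t_0$ defers the savings and hence raises $T_{\lambda(t_0)}$.
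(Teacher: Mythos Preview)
Your proposal is correct and follows essentially the same route as the paper: split $\sum_t|\tilde{\varepsilon}_{i,t}|$ at $t_0$, identify the early block with the corresponding summands of $B_1$, apply Lemma~\ref{lemma:Ftilde_bound} on the tail, and argue that the $\sqrt{2}$ saving on $\{t_0+1,\dots,T\}$ eventually dominates the bias penalties. The only cosmetic difference is that the paper routes the comparison through an intermediate quantity $\bar{B}_1(t_0)=\sum_{t\le t_0}|\tilde{\varepsilon}_{i,t}|+\tfrac{1}{\alpha_i}\sqrt{2\ln(2T/\gamma)/(\alpha_i^2 b)}\,(T-t_0)^{1-a/2}$ and the substitution $T-t_0=T^{p}$ with $p=\ln(T-t_0)/\ln T$, whereas you compare directly against $B_1$ using the fixed-$t_0$ asymptotic $\sum_{k=1}^{T-t_0}k^{-a/2}\sim \tfrac{1}{1-a/2}T^{1-a/2}$; both arrive at the same monotone-to-$-\infty$ function and the same $t_0$-dependent threshold.
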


\subsection{Reducing the CVaR Gradient Estimation Variance}\label{sec_alg3}
Algorithm \ref{alg:algorithm2} discussed in Section \ref{sec_alg2} reduces the CVaR estimation error by using samples from the previous iteration. In this section, we propose an alternative variation of Algorithm \ref{alg:algorithm1} that uses residual feedback \cite{zhang2020boosting,zhang2020improving} to reduce the variance of the zeroth-order CVaR gradient estimates.
The risk-averse learning algorithm with residual feedback is illustrated in Algorithm \ref{alg:algorithm3} and can be found in Appendix C.
Specifically, in Algorithm \ref{alg:algorithm3}, the EDF is computed the same way as in Algorithm \ref{alg:algorithm1},
but the gradient estimate now takes the form
\begin{align}\label{eq:grad_alg3}
    \bar{g}_{i,t}  =\frac{d_i}{\delta} ({\rm{CVaR}}_{\alpha_i} [\hat{F}_{i,t} ]- {\rm{CVaR}}_{\alpha_i} [\hat{F}_{i,t-1} ])  u_{i,t}.
\end{align}
The CVaR gradient estimation is still biased and we can define the error $
    \bar{\varepsilon}_{i,t}= {\rm{CVaR}}_{\alpha_i} [\hat{F}_{i,t} ]- {\rm{CVaR}}_{\alpha_i} [\hat{F}_{i,t-1} ]- {\rm{CVaR}}_{\alpha_i} [F_{i,t} ]$.
Recall the definition of $\hat{\varepsilon}_{i,t}$ in equation \eqref{eq:def_hat_varep}. Substituting in the expression for $\bar{\varepsilon}_{i,t}$, we have that $\bar{\varepsilon}_{i,t}=\hat{\varepsilon}_{i,t} - {\rm{CVaR}}_{\alpha_i} [\hat{F}_{i,t-1} ]$.
Taking the expectation of the gradient in equation \eqref{eq:grad_alg3} with respect to $u_{i,t}$, we have that
\begin{align}\label{eq:bar_git}
    &\mathbb{E}\left[\bar{g}_{i,t}\right] \nonumber =\mathbb{E}[ \frac{d_i}{\delta}(\bar{\varepsilon}_{i,t} + {\rm{CVaR}}_{\alpha_i} [F_{i,t} ] )u_{i,t}] \nonumber \\
    &= \nabla_i C_i^{\delta}(x_t)+ \mathbb{E}\left[\frac{d_i}{\delta}\bar{\varepsilon}_{i,t}u_{i,t} \right]\nonumber \\
    &= \nabla_i C_i^{\delta}(x_t)+ \mathbb{E}\left[\frac{d_i}{\delta}\left( \hat{\varepsilon}_{i,t} -  {\rm{CVaR}}_{\alpha_i} [\hat{F}_{i,t-1} ]\right)u_{i,t} \right]\nonumber \\
    &= \nabla_i C_i^{\delta}(x_t)+ \mathbb{E}\left[\frac{d_i}{\delta} \hat{\varepsilon}_{i,t}u_{i,t}\right] ,
\end{align}
where the last equality is due to the fact that $u_{i,t}$ is independent of $\hat{F}_{i,t-1}$. 
Recall that the gradient estimate in Algorithm \ref{alg:algorithm1} satisfies $\mathbb{E}\left[\hat{g}_{i,t}\right]=\nabla_i C_i^{\delta}(x_t)+ \mathbb{E}\left[\frac{d_i}{\delta} \hat{\varepsilon}_{i,t}u_{i,t}\right]$. Therefore, the expectation of the CVaR gradient estimate using residual feedback is the same as that in Algorithm \ref{alg:algorithm1}.
The following result analyzes the regret achieved by Algorithm \ref{alg:algorithm3}. The proof can be found in Appendix C.
\begin{theorem}\label{thm:3}
Let Assumptions \ref{assump:J_convex} and \ref{assump:J_Lips}  hold, and select $\eta=\frac{D_x  }{d_i L_0 N}T^{-\frac{3a}{4}}$,  $\delta=\frac{D_x }{ N^{\frac{1}{6}}}T^{-\frac{a}{4}}$. 
Suppose that $n_t$ is chosen according to equation (\ref{eq:sample_strategy}) with $a\in(0,1)$, and the EDF and the gradient estimate are defined as in equations (\ref{eq:edf_alg1}) and (\ref{eq:grad_alg3}), respectively. Then, when $T\geq (8N^{\frac{2}{3}} )^{\frac{1}{a}}$, Algorithm \ref{alg:algorithm3} achieves the regret ${\rm{R}}^3_{C_i}(T) =\tilde{\mathcal{O}} (T^{1-\frac{a}{4}})$ with probability at least $1-\gamma$.
\end{theorem}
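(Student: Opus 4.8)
The plan is to reuse the regret decomposition of Algorithm~\ref{alg:algorithm1} (Lemma~\ref{lemma:regret_decomp}) almost verbatim, the only change being that I will \emph{not} collapse the zeroth-order second moment into its worst-case value $d_i^2U^2/\delta^2$. Running the standard projected-gradient argument on the update $x_{i,t+1}=\mathcal{P}_{\mathcal{X}_i^{\delta}}(x_{i,t}-\eta\bar{g}_{i,t})$ against a fixed comparator $\tilde{x}_i$, invoking non-expansiveness of the projection, the smoothing identity $\mathbb{E}[\bar{g}_{i,t}]=\nabla_iC_i^{\delta}(x_t)+\mathbb{E}[\frac{d_i}{\delta}\hat{\varepsilon}_{i,t}u_{i,t}]$ established in \eqref{eq:bar_git}, and convexity of $C_i^{\delta}$, I would obtain
\begin{align*}
    {\rm{R}}_{C_i}^3(T)\leq\frac{D_x^2}{2\eta}+\frac{\eta}{2}\sum_{t=1}^T\|\bar{g}_{i,t}\|^2+(4\sqrt{N}+\Omega)L_0\delta T+\frac{d_iD_x}{\delta}\sum_{t=1}^T|\hat{\varepsilon}_{i,t}|.
\end{align*}
The smoothing term follows from item~3 of Lemma~\ref{lemma:cdelta_property}, and the bias term is controlled by Lemma~\ref{lemma:sum_epsihat_bound}, giving $\frac{d_iD_x}{\delta}\sum_t|\hat{\varepsilon}_{i,t}|\leq\frac{d_iD_x}{\delta}B_1=Err(\text{CVaR})$ with probability at least $1-\gamma$. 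The single new object is the second-moment sum $\sum_t\|\bar{g}_{i,t}\|^2$, and the entire gain of residual feedback lives in showing it is far smaller than the $\frac{d_i^2U^2}{\delta^2}T$ used for Algorithm~\ref{alg:algorithm1}.

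Bounding this sum is the heart of the proof. I would write $\|\bar{g}_{i,t}\|^2=\frac{d_i^2}{\delta^2}\big({\rm{CVaR}}_{\alpha_i}[\hat{F}_{i,t}]-{\rm{CVaR}}_{\alpha_i}[\hat{F}_{i,t-1}]\big)^2$ and split the CVaR difference through the true CDFs as ${\rm{CVaR}}_{\alpha_i}[\hat{F}_{i,t}]-{\rm{CVaR}}_{\alpha_i}[\hat{F}_{i,t-1}]=\hat{\varepsilon}_{i,t}-\hat{\varepsilon}_{i,t-1}+\big(C_i(\hat{x}_t)-C_i(\hat{x}_{t-1})\big)$. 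On the DKW event of Lemma~\ref{lemma:sum_epsihat_bound} the estimation errors are controlled pathwise, while Lipschitz continuity of $C_i$ (Lemma~\ref{lemma:cvar_property}) gives $|C_i(\hat{x}_t)-C_i(\hat{x}_{t-1})|\leq L_0\|\hat{x}_t-\hat{x}_{t-1}\|$. Since $\hat{x}_t-\hat{x}_{t-1}=(x_t-x_{t-1})+\delta(u_t-u_{t-1})$ with $\|x_t-x_{t-1}\|^2\leq\eta^2\sum_{j}\|\bar{g}_{j,t-1}\|^2$ (non-expansiveness) and $\|u_t-u_{t-1}\|^2\leq 4N$, the displacement feeds back into the gradient norms at step $t-1$. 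Summing over agents produces a one-step recursion $V_t\leq BV_{t-1}+A$ for $V_t=\sum_{j}\|\bar{g}_{j,t}\|^2$, with contraction factor $B$ proportional to $L_0^2(\sum_j d_j^2)\eta^2/\delta^2$ and forcing term $A=\mathcal{O}\big(L_0^2(\sum_j d_j^2)N\big)$ plus a lower-order $\hat{\varepsilon}^2/\delta^2$ piece.

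The contraction step is where the horizon condition enters: substituting $\delta=\frac{D_x}{N^{1/6}}T^{-a/4}$ and $\eta=\frac{D_x}{d_iL_0N}T^{-3a/4}$, the factor $B$ scales as $\eta^2/\delta^2\sim T^{-a}$ (times dimension and agent-count factors), so $T\geq(8N^{2/3})^{1/a}$ comfortably forces $B<1$. The recursion then gives $V_t\leq\frac{A}{1-B}+B^{t-1}V_1$, and because the coupling term $\eta^2 V_{t-1}/\delta^2$ is multiplied by the vanishing ratio $\eta^2/\delta^2$, the dominant piece of $\|\bar{g}_{i,t}\|^2$ is the smoothing floor $\mathcal{O}(L_0^2d_i^2N)$ rather than $\mathcal{O}(U^2/\delta^2)$. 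Consequently $\frac{\eta}{2}\sum_t\|\bar{g}_{i,t}\|^2=\mathcal{O}(L_0D_xd_i\,T^{1-3a/4})$, strictly lower order than $T^{1-a/4}$. Plugging $\eta,\delta,B_1$ into the four terms yields $\frac{D_x^2}{2\eta}=\mathcal{O}(NT^{3a/4})$, the variance term $\mathcal{O}(T^{1-3a/4})$, the smoothing term $(4\sqrt{N}+\Omega)L_0\delta T=\mathcal{O}(N^{1/3}T^{1-a/4})$, and $\frac{d_iD_x}{\delta}B_1=\tilde{\mathcal{O}}(N^{1/6}\alpha_i^{-1}T^{1-a/4})$; the last two dominate, so ${\rm{R}}^3_{C_i}(T)=\tilde{\mathcal{O}}(T^{1-a/4})$ with probability at least $1-\gamma$.

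The main obstacle is precisely this second-moment recursion. Because the displacement $\hat{x}_t-\hat{x}_{t-1}$ lives in the \emph{joint} action space, the bound on any single agent's gradient norm couples to all agents' gradient norms at the previous step, so the recursion cannot be closed agent-by-agent and must be run on the aggregate $V_t$. Verifying that its contraction factor drops below one under the stated condition on $T$---which is exactly where the residual-feedback structure and the $\eta/\delta$ scaling pay off---is the crux; everything else parallels the regret analysis of Algorithm~\ref{alg:algorithm1}.
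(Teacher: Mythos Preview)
Your approach is essentially the paper's: the same regret decomposition (left in terms of $\sum_t\|\bar g_{i,t}\|^2$ rather than collapsing to $d_i^2U^2/\delta^2$), the same splitting ${\rm CVaR}_{\alpha_i}[\hat F_{i,t}]-{\rm CVaR}_{\alpha_i}[\hat F_{i,t-1}]=(\hat\varepsilon_{i,t}-\hat\varepsilon_{i,t-1})+(C_i(\hat x_t)-C_i(\hat x_{t-1}))$, the same one-step recursion on the aggregate $V_t=\sum_j\|\bar g_{j,t}\|^2$ with contraction factor $\beta=4d_i^2L_0^2N\eta^2/\delta^2$, and the same use of $T\ge(8N^{2/3})^{1/a}$ to force $\beta\le 1/2$. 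The paper packages this recursion as Lemma~\ref{lemma:RF_gradient}, but it is exactly what you describe.

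One correction to your accounting: the $\hat\varepsilon^2/\delta^2$ contribution to the forcing term $A$ is \emph{not} lower order. On the DKW event one has $|\hat\varepsilon_{i,t}|^2\le \ln(2T/\gamma)/(2b\alpha_i^2)$, so this piece of $A$ is of order $d_i^2\ln(T)\,\delta^{-2}=\Theta(T^{a/2}\ln T)$, which dominates the smoothing floor $\mathcal O(L_0^2d_i^2N^2)$. Hence $\tfrac{\eta}{2}\sum_t\|\bar g_{i,t}\|^2=\tilde{\mathcal O}(T^{1-a/4})$, not $\mathcal O(T^{1-3a/4})$ as you wrote; this is precisely where the $S(\alpha)\ln(T/\gamma)$ factor in the paper's final constant $\mathcal O(D_xd_iL_0NS(\alpha)\ln(T/\gamma)T^{1-a/4})$ comes from. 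Your final $\tilde{\mathcal O}(T^{1-a/4})$ conclusion still stands because your smoothing and CVaR-bias terms already hit that rate, but the variance term is not subdominant.
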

More precisely, Algorithm \ref{alg:algorithm3} actually achieves the regret  
${\rm{R}}^3_{C_i}(T)= \mathcal{O}( D_x d_i L_0 NS(\alpha) \ln(T/\gamma) T^{1-\frac{a}{4}} )$, where $S(\alpha):=\sum_{i=1}^N\frac{1}{\alpha_i^2}$; see Appendix C.2 for more details.
{Note that the poly-logarithmic term $\ln(T)$ in the regret bound achieved by Algorithm \ref{alg:algorithm3} is dominated by the polynomial term $T^{1-\frac{a}{4}}$ when $T$ is large. Recall also that Algorithm \ref{alg:algorithm1} achieves regret ${\rm{R}}_{C_i}^1(T) = \mathcal{O}( \sqrt{D_x U d_i L_0}N^{\frac{1}{4}}\alpha_i^{-\frac{1}{2}} \sqrt{\ln(T/\gamma)} T^{1-\frac{a}{4}} )$. Ignoring the logarithmic dependence in the regret for large $T$, we obtain that the regret bound achieved by Algorithm \ref{alg:algorithm3} is strictly less than that achieved by Algorithm \ref{alg:algorithm1} when $U>D_x d_i L_0N^{\frac{3}{2}} \alpha_i S^2(\alpha)$. }
\section{Numerical Experiments}\label{sec_simu}
\begin{figure}[t] 
\begin{center}
\centerline{\includegraphics[width=0.8\columnwidth]{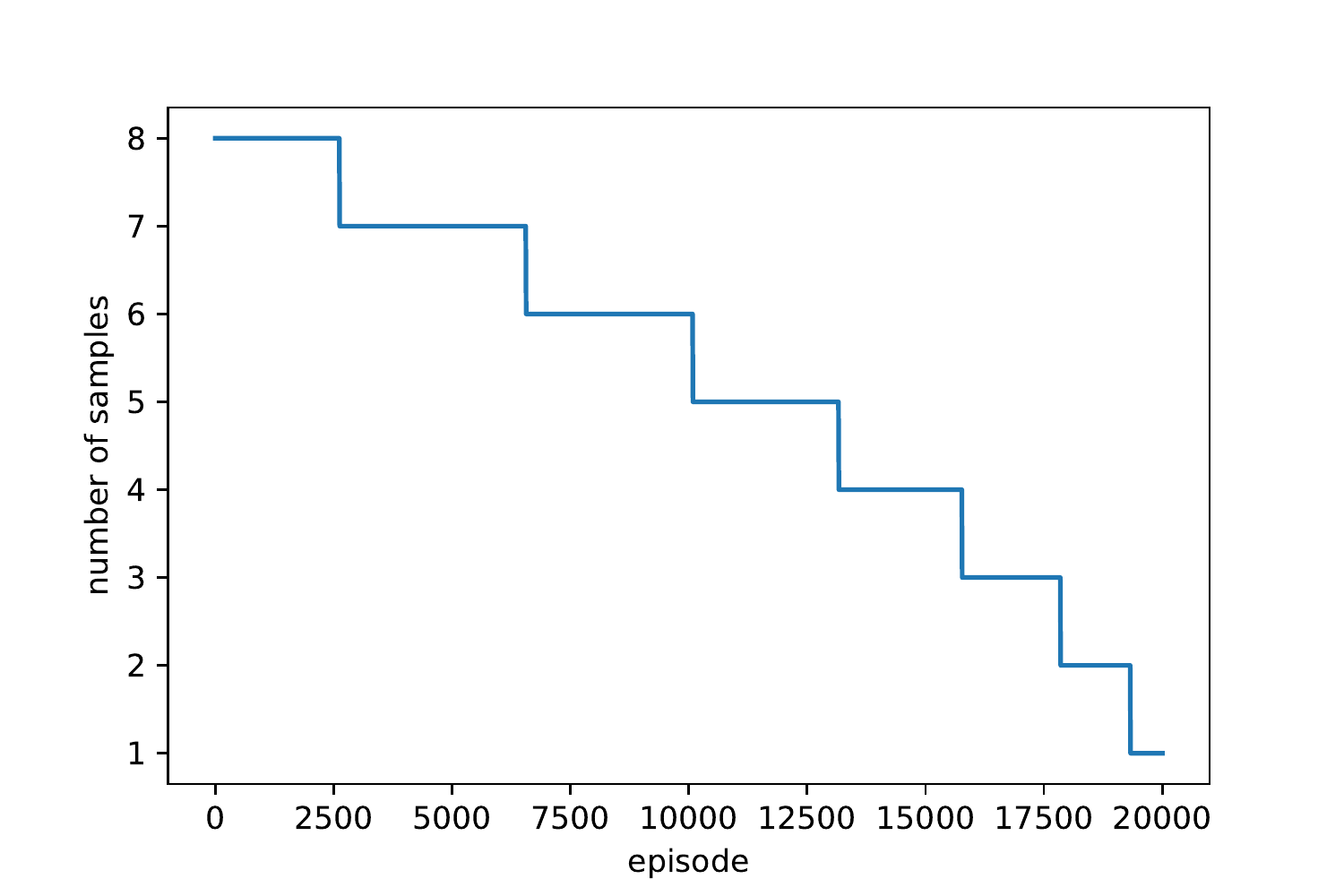}}
\caption{The number of samples of Algorithm \ref{alg:algorithm1}, \ref{alg:algorithm2} and \ref{alg:algorithm3}.}
\label{fig_samples}
\end{center}
\vskip -0.4in
\end{figure}
\begin{figure}[t]
\begin{center}
\centerline{\includegraphics[width=0.8\columnwidth]{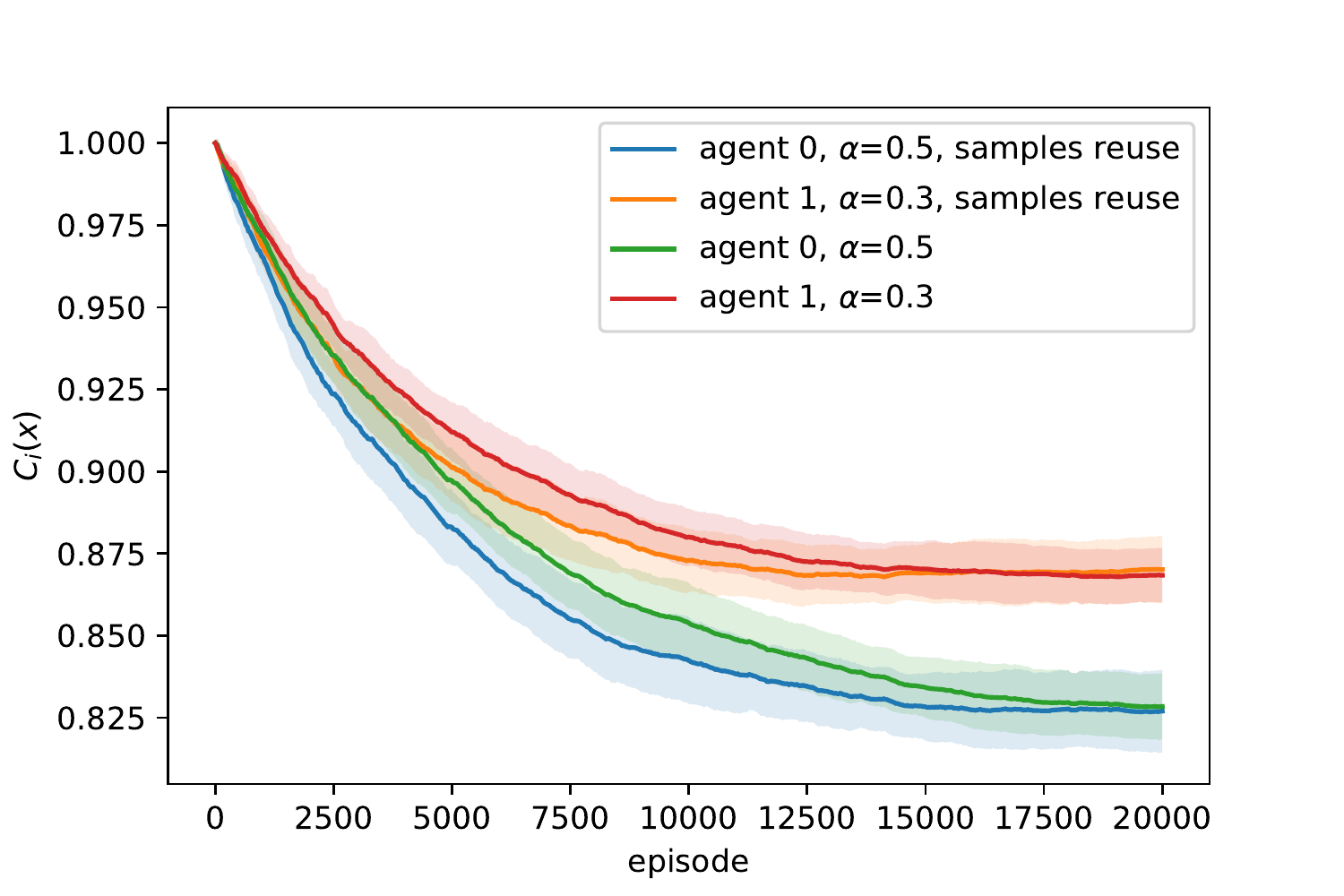}}
\caption{CVaR values achieved by Algorithm \ref{alg:algorithm1} (green and red) and Algorithm \ref{alg:algorithm2} (blue and orange). The solid lines and shades are averages and standard deviations over 20 runs.}
\label{fig_reuse samples}
\end{center}
\vskip -0.4in
\end{figure}
\begin{figure}[t]
\begin{center}
\centerline{\includegraphics[width=0.8\columnwidth]{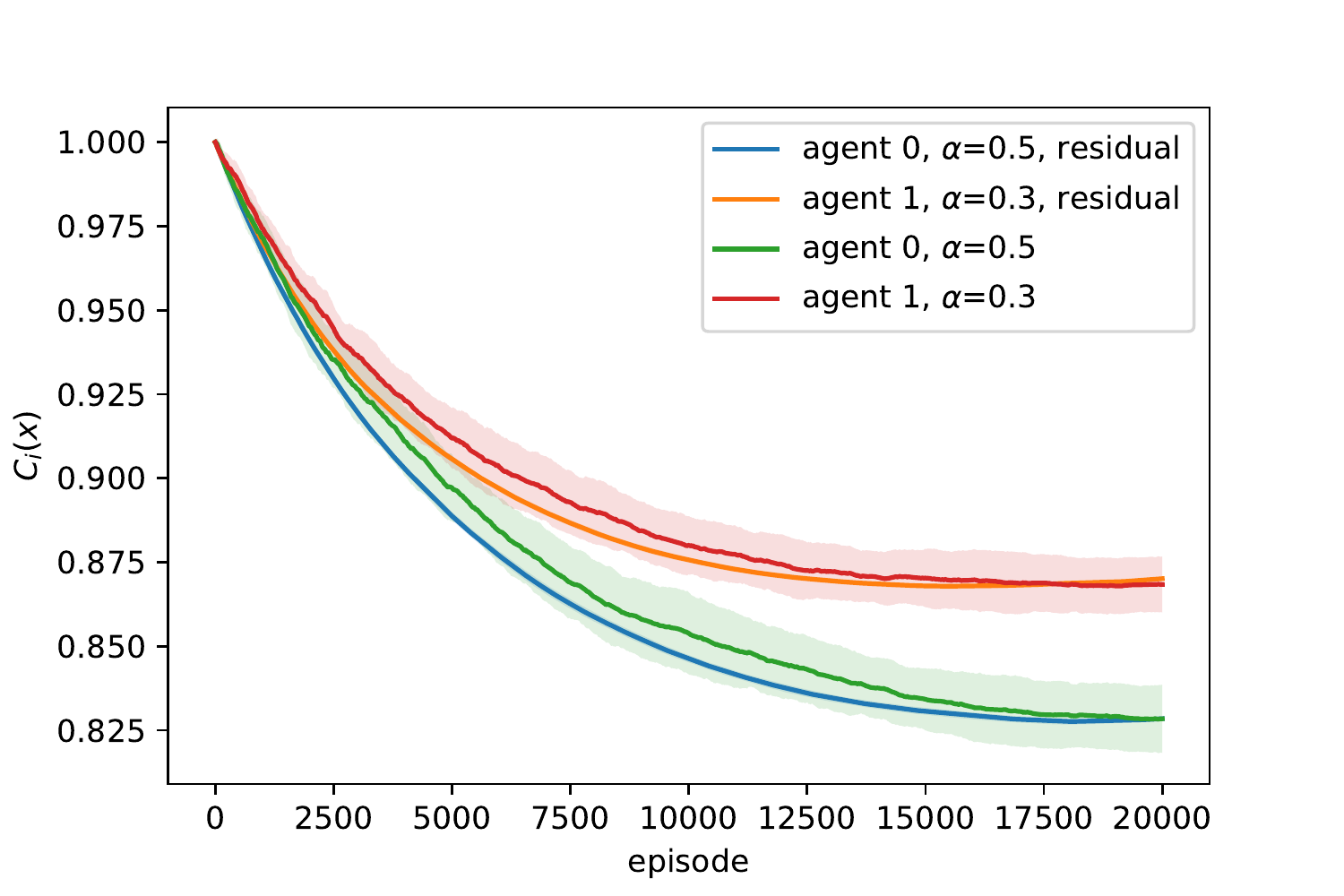}}
\caption{CVaR values achieved by Algorithm \ref{alg:algorithm1} (green and red) and Algorithm \ref{alg:algorithm3} (blue and orange). The solid lines and shades are averages and standard deviations over 20 runs.}
\label{fig_residual}
\end{center}
\vskip -0.4in
\end{figure}
Consider a Cournot game involving two risk-averse firms (agents) in the same market with different risk levels $\alpha_i$.
We let $\alpha_0 =0.5$ and $\alpha_1=0.3$, i.e, firm $1$ is more risk sensitive than firm $0$.
Suppose that firm $i$ supplies the market with a quantity $x_i$, and the total supply of both firms defines the price of the goods in the market.
We assume that the cost function for each firm $i\in\{0,1\}$ is defined as $J_i=-(2-\sum_jx_j)x_i+0.1x_i+\xi_i x_i+1$, where $\xi_i\sim U(0,1)$ is a uniform random variable. The cost term $\xi_i x_i$ models the uncertainty in the market, which is proportional to production.
The goal of each firm is to minimize the CVaR of their local cost function.

Recall the definition of the regret in Section \ref{sec_prob}, and note that it is not possible to compare the performance of Algorithm \ref{alg:algorithm1}, \ref{alg:algorithm2} and \ref{alg:algorithm3} in terms of regret, since the baseline term $\mathop{\rm{min}}_{\tilde{x}_i \in \mathcal{X}_i} \sum_{t=1}^{T} C_i(\tilde{x}_i,x_{-i,t})$ depends on the given sequence of $\{x_{-i,t}\}_{t=1}^T$ and there is no golden rule to help select this sequence. In addition, computing the analytical solution to the baseline term is challenging due to the variational definition of CVaR. Instead, in what follows, we compare Algorithms 1,2 and 3 in terms of their empirical performances.
Specifically, we run Algorithms 1, 2, and 3 and calculate the CVaR values achieved for each action. 
The algorithm with lower CVaR values is preferred since it achieves better performance in terms of risk-aversion. 

The number of samples used by Algorithm \ref{alg:algorithm1} is determined by equation \eqref{eq:sample_strategy} and is shown in Figure \ref{fig_samples}. 
We implement a hybrid sampling strategy for Algorithm \ref{alg:algorithm2} and select the switching time step as 15000, after which prior samples are reused as in equation \eqref{eq:edf2_alg2}.
The reason for this choice is that after 15000 time steps the number of samples becomes small (less than or equal to 3), which causes large errors in CVaR estimation. 
All other parameters in Algorithms \ref{alg:algorithm1}, \ref{alg:algorithm2} and \ref{alg:algorithm3} are tuned so that the three algorithms achieve individually their best performance. Figure \ref{fig_reuse samples} compares empirically the performance of Algorithms 1 and 2. 
We observe that both Algorithms 1 and 2 both converge to the same CVaR values, but Algorithm \ref{alg:algorithm2} that reuses samples converges at a faster speed.
Indeed, the learning rates of both algorithms depend on the number of samples; Algorithm \ref{alg:algorithm2} converges faster because sample reuse increases the effective number of samples per iteration and, as a result, decreases the CVaR estimation errors. This allows for a larger learning rate. 
Figure \ref{fig_residual} shows the variance reduction effect achieved by Algorithm \ref{alg:algorithm3} that employs residual feedback to estimate the CVaR gradients. Note the very low variance (almost non-existent) associated with the blue and orange curves. As with Algorithm \ref{alg:algorithm2}, residual feedback allows Algorithm \ref{alg:algorithm3} to use a larger learning rate and still converge to the same CVaR values as Algorithm \ref{alg:algorithm1}. Additional numerical simulations for different sampling strategies are provided in Appendix D, where we also discuss convergence to the Nash equilibrium in practice.

Motivated by the improvements in performance that can be achieved by reusing prior samples to estimate the CVaR values (Algorithm \ref{alg:algorithm2}) and relying on residual feedback to estimate the CVaR gradients (Algorithm \ref{alg:algorithm3}), it is of interest to analyze the combined effect of these methods for risk-averse learning in online convex games. 
However, the theoretical analysis of this method is nontrivial and, for this reason, it is left for future research.

\section{Conclusion}\label{sec_conclu}
In this work, we proposed a first no-regret algorithm for risk-averse online convex games. Our algorithm relied on a new sampling strategy to estimate the CVaR values of the agents' cost functions, and a zeroth-order estimator of the CVaR gradients to update the agents' actions. To further improve the regret bounds achieved by our algorithm, we proposed two novel modifications; one that reuses samples from the previous iteration to better estimate the CVaR values and another that uses residual feedback to reduce the variance of the CVaR gradient estimation.  We illustrated our proposed method on an online market example modeled as a Cournot game.

\section*{Acknowledgements}
This work is supported in part by AFOSR under award \#FA9550-19-1-0169 and by NSF under award CNS-1932011.

\bibliography{ref}
\bibliographystyle{icml2022}
\newpage
\onecolumn
\section*{A. Proofs of Key Results Supporting Algorithm \ref{alg:algorithm1}}

\subsection*{A.1 Proof of Lemma \ref{lemma:cdelta_property}}\label{proof_lemma:cdelta_property}
\begin{proof}
1. 
From the convexity of $C_i(x_i,x_{-i})$, we can get that $C_i(\theta p_1+ (1-\theta) p_2,x_{-i})\leq \theta C_i(p_1,x_{-i})+(1-\theta)C_i(p_2,x_{-i}) $ for any $p_1,p_2 \in \mathcal{X}_i^{\delta}$ and $\theta \in [0,1]$, . Here we denote $\mathop{\mathbb{E}}_{w_i\sim \mathbb{B}_i, u_{-i}\sim \mathbb{S}_{-i}}$ as $\mathbb{E}$ due to space limit. Thus we have
\begin{align*}
&C_i^{\delta}(\theta p_1+(1-\theta)p_2,x_{-i})\nonumber \\
=& \mathbb{E} \big[ C_i(\theta p_1+ (1-\theta) p_2+\delta w_i,x_{-i}+\delta u_{-i}) \big] \nonumber \\
=& \mathbb{E} \big[ C_i(\theta (p_1+\delta w_i) +(1-\theta) (p_2+\delta w_i),x_{-i}+\delta u_{-i}) \big] \nonumber \\
\leq &  \mathbb{E} \big[ \theta  C_i( p_1+\delta w_i,x_{-i}+\delta u_{-i})+ (1-\theta)C_i( p_2+\delta w_i,x_{-i}+\delta u_{-i})  \big] \nonumber \\
=&  \theta  C_i^{\delta}( p_1+\delta w_i,x_{-i}) + (1-\theta)C_i^{\delta}( p_2+\delta w_i,x_{-i}), \nonumber \\
\end{align*}
which completes the proof. \\
2. According to the definition of $C_i^{\delta}$ function, we have $|C_i^{\delta}(x)-C_i^{\delta}(y)|=|\mathop{\mathbb{E}}_{w_i\sim \mathbb{B}_i, u_{-i}\sim \mathbb{S}_{-i}}[C_i(x_i+\delta w_i,x_{-i}+\delta u_{-i}) - C_i(y_i+\delta w_i,y_{-i}+\delta u_{-i}) ]| \leq \mathop{\mathbb{E}}_{w_i\sim \mathbb{B}_i, u_{-i}\sim \mathbb{S}_{-i}}[ L_0 \left\| x-y\right\|]\leq L_0 \left\| x-y\right\| $. The proof is complete. \\
3. Since the $C_i$ function is $L_0$-Lipschitz continuous, we have that 
\begin{align*}
    |C_i^{\delta}(x)-C_i(x)| 
    =& |\mathbb{E} [C_i(x_i+\delta w_i,x_{-i}+\delta u_{-i})] -C_i(x)| \nonumber \\
    \leq & L_0 \left\| (\delta w_i, \delta u_{-i})\right\| \\
    \leq  L_0 \delta \sqrt{N},\nonumber
\end{align*}
where the last inequality is due to the fact that $\left\| w_i\right\|\leq 1$, $\left\| u_i\right\|\leq 1$. 
\end{proof}

\subsection*{A.2 Proof of Lemma \ref{lemma:CVaR_bound}}\label{proof_lemma:CVaR_bound}
Before the proof of Lemma \ref{lemma:CVaR_bound}, we first give the following result.
\begin{lemma}\label{lemma_Expectation}
Let $F$ be the CDF of a non-negative random variable bounded by $U$, then we have that
\begin{align*}
    \mathbb{E}_F[X-\nu]_{+}=\int_{0}^U (1-F(y))dy-\nu + \int_{0}^{\nu} F(y) dy.
\end{align*}
\end{lemma}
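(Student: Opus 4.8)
The plan is to express the one-sided expectation $\mathbb{E}_F[X-\nu]_+$ as an integral of the tail $1-F$ using the layer-cake (Tonelli) identity, and then match it to the claimed form by splitting the domain of integration at $\nu$. First I would set $Y:=[X-\nu]_+$, observe that $Y\geq 0$, and invoke the tail formula $\mathbb{E}[Y]=\int_0^\infty \mathbb{P}(Y>y)\,dy$, which holds for any non-negative random variable since $Y=\int_0^\infty \mathbf{1}\{y<Y\}\,dy$ and Tonelli's theorem permits exchanging expectation and integration.

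Next I would simplify the integrand. For $y\geq 0$ we have $\{[X-\nu]_+>y\}=\{X>\nu+y\}$, so $\mathbb{E}_F[X-\nu]_+=\int_0^\infty \mathbb{P}(X>\nu+y)\,dy$. Substituting $z=\nu+y$ turns this into $\int_\nu^\infty (1-F(z))\,dz$. Because $X$ is bounded by $U$, the tail $1-F(z)$ vanishes for $z\geq U$, which truncates the upper limit and yields $\mathbb{E}_F[X-\nu]_+=\int_\nu^U (1-F(z))\,dz$ (for $0\le\nu\le U$).

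Finally I would perform the algebraic rearrangement. Writing $\int_\nu^U(1-F)=\int_0^U(1-F)-\int_0^\nu(1-F)=\int_0^U(1-F)-\nu+\int_0^\nu F$, where I used $\int_0^\nu 1\,dy=\nu$, recovers exactly the claimed right-hand side. The computation is elementary; the only real care needed is the bookkeeping of the integration limits together with the boundedness convention, i.e., that $1-F$ vanishes above $U$ and $F$ vanishes below $0$ (which also lets the identity extend painlessly to the degenerate ranges $\nu<0$ and $\nu>U$). I would prefer this layer-cake route over integrating $\int_\nu^U(x-\nu)\,dF(x)$ by parts precisely because it sidesteps any Riemann--Stieltjes subtleties when $F$ has atoms; the main---though modest---obstacle is simply ensuring that the exchange of order of integration and the truncation at $U$ are justified.
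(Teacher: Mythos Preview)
Your proof is correct and somewhat more direct than the paper's. Both arguments rest on the same underlying tool---writing a non-negative random variable as a layer-cake integral and applying Tonelli---but they apply it at different points. The paper first expands $(X-\nu)\mathbf{1}\{X>\nu\}=(X-\nu)(1-\mathbf{1}\{X\le\nu\})$, takes expectations term by term, and then invokes the identity $a=\int_0^\infty\mathbf{1}\{y\le a\}\,dy$ on $\mathbb{E}_F[X\,\mathbf{1}\{X\le\nu\}]$ together with $\mathbb{E}_F[X]=\int_0^U(1-F)$ at the end. You instead apply the tail formula directly to $Y=[X-\nu]_+$, obtain $\int_\nu^U(1-F)$ in one step, and finish by splitting the integral at $\nu$. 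Your route avoids the intermediate indicator algebra and the separate treatment of $\mathbb{E}_F[X]$, at the cost of nothing: the justification via Tonelli and the truncation at $U$ are identical in both cases. Your remark that this sidesteps any Stieltjes integration-by-parts issues when $F$ has atoms is apt and is equally a feature of the paper's argument.
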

\begin{proof}
It follows that
\begin{align*}
    \mathbb{E}_F[X-\nu]_{+}
    &= \mathbb{E}_F[(X-\nu) \mathbf{1}\{X>\nu \} ] \nonumber \\
    &=\mathbb{E}_F[(X-\nu) (1-\mathbf{1}\{X\leq \nu \}) ]\nonumber \\
    &=\mathbb{E}_F[X]-\nu + \mathbb{E}_F[X \mathbf{1}\{X\leq \nu \}] +\nu F(\nu) .
\end{align*}
Since $a=\int_{0}^a dy= \int_{0}^{\infty} \mathbf{1}\{ y\leq a\}dy$, we obtain that
\begin{align*}
     \mathbb{E}_F[X-\nu]_{+} 
    &=\mathbb{E}_F[X]-\nu  +\nu F(\nu) - \mathbb{E}_F\left[  \mathbf{1}\{X\leq \nu\}  \int_{0}^{\infty} \mathbf{1}\{ y\leq X \}dy \right] \nonumber \\
    &= \mathbb{E}_F[X]-\nu  +\nu F(\nu) - \int_{0}^{\infty} \mathbb{P}_F(y\leq X\leq \nu)dy \nonumber \\
    &= \mathbb{E}_F[X]-\nu  +\nu F(\nu) -\int_{0}^{\nu} (F(\nu)-F(y))dy \nonumber \\
    &=\int_{0}^U (1-F(y))dy-\nu +\int_{0}^{\nu} F(y) dy.
\end{align*}
This ends the proof.  
\end{proof}

Now we are ready to give the proof of Lemma \ref{lemma:CVaR_bound}.
\begin{proof}
According to the CVaR property, we have ${\rm{CVaR}}_{\alpha}[F]=\nu_F+\frac{1}{\alpha}\mathbb{E}_F[X-\nu_F]_{+}$, ${\rm{CVaR}}_{\alpha}[G]=\nu_G+\frac{1}{\alpha}\mathbb{E}_G[X-\nu_G]_{+}$. Since $\nu_F$ is the value that minimizes the ${\rm{CVaR}}_{\alpha}[F]$, we further obtain that 
\begin{align}
    &{\rm{CVaR}}_{\alpha}[F]-{\rm{CVaR}}_{\alpha}[G]  \leq \frac{1}{\alpha}\left( \mathbb{E}_F[X-\nu_G]_{+}-\mathbb{E}_G[X-\nu_G]_{+} \right).
\end{align}

Using Lemma \ref{lemma_Expectation},  we can obtain
\begin{align*}
    &{\rm{CVaR}}_{\alpha}[F]-{\rm{CVaR}}_{\alpha}[G] \nonumber \\
    \leq& \frac{1}{\alpha}\Big(  \int_0^U (1-F(y)) dy-\nu_G + \int_{0}^{\nu_G} F(y)dy  -\int_0^U (1-G(y)) dy+\nu_G - \int_{0}^{\nu_G} G(y)dy\Big) \nonumber \\
    \leq & \frac{1}{\alpha}\left(  \int_0^U (1-F(y)) dy-\int_0^U (1-G(y)) dy  + \int_{0}^{\nu_G}(F(y)-G(y))dy  \right) \nonumber \\
    \leq & \frac{1}{\alpha} \left(\int_0^U (G(y)-F(y)) dy + \int_{0}^{\nu_G}(F(y)-G(y))dy  \right) \nonumber \\
    \leq &  \frac{1}{\alpha} \int_{\nu_G}^U (G(y)-F(y)) dy \nonumber \\
    \leq &  \frac{U}{\alpha}
     \mathop{\rm{sup}}_{y} |F(y)-G(y)|. \nonumber \\
\end{align*}
By symmetry, we can bound ${\rm{CVaR}}_{\alpha}[G]-{\rm{CVaR}}_{\alpha}[F]$ as well and the proof is omitted. The proof is complete.
\end{proof}

\subsection*{A.3 Proof of Lemma \ref{lemma:sum_epsihat_bound}}\label{proof_lemma:sum_epsihat_bound}
We first give the DKW inequality, which is helpful in our subsequent analysis.
\begin{lemma}[DKW inequality] Let $F$ be the CDF of a random variable and $\hat{F}$ be the empirical CDF obtained by $n$ i.i.d. samples. For a gien constant $\epsilon>0$, we have 
\begin{align*}
    \mathbb{P}\left\{\sup_y|F(y)-\hat{F}(y)|> \epsilon \right\} \leq 2e^{-2n\epsilon^2}.
\end{align*}
\end{lemma}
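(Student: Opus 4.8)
The plan is to recognize the stated bound as the classical Dvoretzky--Kiefer--Wolfowitz inequality and to prove it in three stages: a distribution-free reduction, a pointwise concentration estimate, and a uniformization step. The factor $2$ in front and the constant $2$ in the exponent are sharp and are due to Massart, so in a paper like this one the cleanest route is simply to invoke that result; below I describe how a self-contained argument proceeds and where the real difficulty sits.

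\textbf{Reduction to the uniform case.} First I would remove the dependence on $F$. Assuming $F$ is continuous, the probability integral transform $U_i=F(X_i)$ sends the i.i.d.\ samples $X_1,\dots,X_n\sim F$ to i.i.d.\ uniforms on $[0,1]$, and the statistic $\sup_y|\hat F(y)-F(y)|$ is invariant under this monotone reparametrization. Hence $\sup_y|\hat F(y)-F(y)|$ is equal in distribution to $\sup_{t\in[0,1]}|G_n(t)-t|$, where $G_n$ is the empirical CDF of $n$ i.i.d.\ uniforms; this exhibits the statistic as distribution-free and reduces everything to the uniform empirical process (the case of $F$ with atoms only makes the deviations smaller, so the continuous case is the extremal one).

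\textbf{Pointwise concentration.} For any fixed $y$, $n\hat F(y)$ is a sum of $n$ i.i.d.\ Bernoulli$(F(y))$ variables, each lying in $[0,1]$, so Hoeffding's inequality gives
\begin{align*}
\mathbb{P}\{\,|\hat F(y)-F(y)|>\epsilon\,\}\le 2e^{-2n\epsilon^2},
\end{align*}
which already matches the target bound at a single point. The entire content of the DKW inequality is that this very bound survives when the pointwise deviation is replaced by the uniform one $\sup_y|\hat F(y)-F(y)|$.

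\textbf{Uniformization, and the main obstacle.} A crude route is to observe that $\hat F-F$ changes value only at the $n$ order statistics, so a union bound over these at most $n+1$ relevant levels, combined with the pointwise estimate, yields a uniform tail of the form $Cn\,e^{-2n\epsilon^2}$; absorbing the polynomial prefactor into the exponent degrades the constants and leaves a bound of type $C_1 e^{-C_2 n\epsilon^2}$. The hard part is doing the uniformization with the \emph{sharp} constant: obtaining exactly the factor $2$ and the exponent $2n\epsilon^2$ uniformly in $y$ requires Massart's refined control of the one-sided statistic $\sup_y(\hat F(y)-F(y))$ via a reflection/martingale argument on the empirical process, rather than a simple union bound. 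For this paper the sharp constant is inessential, since the DKW bound is used only inside \eqref{eq:epsi_hat} and then propagated through $\tilde{\mathcal{O}}(\cdot)$ estimates, so any exponential tail $C_1 e^{-C_2 n\epsilon^2}$ would suffice; I would therefore cite Massart's sharp form and apply it directly with $\epsilon=\sqrt{\ln(2/\bar\gamma)/(2n_t)}$ to recover \eqref{eq:epsi_hat}.
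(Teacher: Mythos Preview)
Your outline is correct as a sketch of how the DKW inequality with the sharp Massart constant is actually established, and you are right that any exponential tail $C_1 e^{-C_2 n\epsilon^2}$ would suffice for the downstream $\tilde{\mathcal{O}}(\cdot)$ bounds in this paper. However, the paper does not prove this lemma at all: it simply states the DKW inequality as a known classical result and then applies it directly to derive \eqref{eq:epsi_hat} and the subsequent estimates. So your proposal goes well beyond what the paper does; the intended ``proof'' here is a citation, and your final sentence --- invoke Massart and plug in $\epsilon=\sqrt{\ln(2/\bar\gamma)/(2n_t)}$ --- is exactly the level at which the paper operates.
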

Now we are ready to show the proof of Lemma \ref{lemma:sum_epsihat_bound}.
\begin{proof}
According to the DKW inequality, we have that
\begin{align}\label{event_At}
    \mathbb{P}\left\{ \mathop{\rm{sup}}_{y} |F_{i,t}(y)-\hat{F}_{i,t}(y)| \geq \sqrt{\frac{ {\rm{ln}}(2/\bar{\gamma}) }{2n_t}}\right\} \leq \bar{\gamma}.
\end{align}
Define the events in (\ref{event_At}) as $A_t$. Recall that $\gamma=\bar{\gamma}T$, then the following holds
\begin{align}\label{eq_probability1}
\mathop{\rm{sup}}_{y} |F_{i,t}(y)-\hat{F}_{i,t}(y)| \leq \sqrt{\frac{ {\rm{ln}}(2T/\gamma) }{2n_t}}, \forall t=1,\ldots,T,
\end{align}
with probability at least $1-\gamma$ due to the fact that $1-\mathbb{P} \{ \bigcup_{t=1}^T A_t\} \geq 1- \sum_{t=1}^T\mathbb{P}\{A_t\}\geq 1-T \frac{\gamma}{T}\geq1-\gamma$. 

Combining with the sampling strategy defined in equation (\ref{eq:sample_strategy}), and applying Lemma \ref{lemma:CVaR_bound}, the accumulated error of CVaR estimation can be bounded as follows
\begin{align*}
\sum_{t=1}^T |\hat{\varepsilon}_{i,t}|  &\leq \sum_{t=1}^T \frac{U}{\alpha_i}\sqrt{\frac{ \ln(2T/\gamma)}{2n_t }} 
 \leq  \sum_{t=1}^T \frac{U}{\alpha_i}\sqrt{\frac{ \ln(2T/\gamma)}{2(b  U^2(T-t+1)^a) }} \nonumber \\
& =  \sum_{t=1}^T  \frac{1}{\alpha_i}\sqrt{\frac{ \ln(2T/\gamma)}{2b  t^a }}
\leq  \frac{1}{\alpha_i}\sqrt{\frac{ \ln(2T/\gamma)}{2b   }} \left( 1+\int_{t=1}^T \frac{1}{\sqrt{t}} dt\right)\nonumber \\
& \leq  \frac{1}{\alpha_i}\sqrt{\frac{ \ln(2T/\gamma)}{2b   }} \left( 1+ \frac{1}{1-\frac{a}{2}} (T^{1-\frac{a}{2}}-1)  \right)\nonumber 
\leq \frac{1}{\alpha_i}\sqrt{\frac{2\ln(2T/\gamma)}{ b }}T^{1-\frac{a}{2}},
\end{align*}
where the last inequality holds due to the fact that $\frac{1}{1-\frac{a}{2}}<2$, which completes the proof.
\end{proof}

\subsection*{A.4 Proof of Lemma \ref{lemma:regret_decomp}}\label{proof_lemma:regret_decomp}
\begin{proof} We first present an observation.\\
\textbf{Observation 1}: There exist a constant $\Omega>0$ such that
\begin{align*}
&\quad  \mathop{{\rm{min}}}_{\tilde{x}_i \in \mathcal{X}_i^{\delta}} \sum_{t=1}^{T}  C_i^{\delta}(\tilde{x}_i, x_{-i,t}) \leq \mathop{{\rm{min}}}_{\tilde{x}_i \in \mathcal{X}_i} \sum_{t=1}^{T} C_i^{\delta}(\tilde{x}_i, x_{-i,t})+\Omega L_{0} \delta T.
\end{align*}
\begin{proof}
Let $x_i^{1}= \mathop{{\rm{ arg \; min}}}_{\tilde{x}_i \in \mathcal{X}_i^{\delta}} \sum_{t=1}^{T} C_i^{\delta}(\tilde{x}_i, x_{-i,t})$,
$x_i^{2}= \mathop{{\rm{ arg \; min}}}_{\tilde{x}_i \in \mathcal{X}_i} \sum_{t=1}^{T} C_i^{\delta}(\tilde{x}_i, x_{-i,t})$. According to Lemma 3 in \cite{tatarenko2020bandit}, $\left\|x_i^{1} -x_i^{2}\right\| = O(\delta)$, i.e, there exist a constant $\Omega>0$ such that $\left\|x_i^{1} -x_i^{2}\right\|\leq \Omega \delta$. Adding that $C_i^{\delta}(x_i,x_{-i})$ is $L_{0}$-Lipschitz continuous, we can easily obtain the claim. 
\end{proof}

Let $x_{\delta_i}^{*}= \mathop{\rm{min}}_{\tilde{x}_i \in \mathcal{X}_i} \sum_{t=1}^{T} C_i(\tilde{x}_i,x_{-i,t}) $. We have that
\begin{align}\label{append:eq:diff}
 \left\| x_{i,t+1}-x_{\delta_i}^{*}\right\|^2 
&= \left\| \mathcal{P}_{\mathcal{X}_i^{\delta}}(x_{i,t}- \eta \hat{g}_{i,t} )-x_{\delta_i}^{*}\right\|^2  \leq \left\|x_{i,t}- \eta \hat{g}_{i,t} -x_{\delta_i}^{*}\right\|^2  \nonumber \\
& = \left\| x_{i,t}-x_{\delta_i}^{*}\right\|^2 +  \eta^2 \left\|\hat{g}_{i,t}\right\|^2 - 2 \eta \langle \hat{g}_{i,t}, x_{i,t}-x_{\delta_i}^{*} \rangle.
\end{align}

Since $C_i^{\delta}(x_i,x_{-i})$ is convex in $x_i$, we have that
\begin{align}\label{append:eq:ineq}
& C_i^{\delta}(x_t) - C_i^{\delta}(x_{\delta_i}^{*},x_{-i,t}) 
\leq \nabla_i C_i^{\delta}(x_t) (x_{i,t}-x_{\delta_i}^{*}) = \mathbb{E} \langle \hat{g}_{i,t}, x_{i,t} - x_{\delta_i}^{*} \rangle - \mathbb{E} \langle \frac{d_i}{\delta}\hat{\varepsilon}_{i,t} u_{i,t}, x_{i,t} - x_{\delta_i}^{*} \rangle  \nonumber \\
& \leq \frac{1}{2\eta} \mathbb{E} \big(\left\| x_{i,t}-x_{\delta_i}^{*}\right\|^2 - \left\|x_{i,t+1}-x_{\delta_i}^{*}\right\|^2   \big) + \frac{\eta}{2} \mathbb{E} \left\|\hat{g}_{i,t}\right\|^2  + \mathbb{E} \big[ \frac{d_i}{\delta}\left\| \hat{\varepsilon}_{i,t}\right\| \left\| x_{i,t}-x_{\delta_i}^{*}\right\| \big],
\end{align}
where the last inequality follows from equation \eqref{append:eq:diff}.

Taking the sum from $t=1$ to $T$ on both sides of equation \eqref{append:eq:ineq}, we obtain that
\begin{align}\label{eq:Ci_delta}
&   \sum_{t=1}^{T} C_i^{\delta}(x_{i,t},x_{-i,t}) -  \mathop{{\rm{min}}}_{\tilde{x}_i \in \mathcal{X}_i^{\delta}} \sum_{t=1}^{T} C_i^{\delta}(\tilde{x}_i, x_{-i,t})  \nonumber \\
\leq & \frac{\left\| x_{i,1}- x_{\delta_i}^{*}\right\|^2}{2\eta} + \frac{\eta}{2}  \mathbb{E} \big[ \sum_{t=1}^{T} \left\|\hat{g}_{i,t}\right\|^2 \big] +  \mathbb{E}\big[ \sum_{t=1}^{T} \frac{d_i}{\delta}\left\| \hat{\varepsilon}_{i,t}\right\| \left\| x_{i,t}-x_{\delta_i}^{*}\right\| \big] \nonumber \\
\leq &  \frac{D_x^2}{2\eta} + \frac{\eta}{2}  \mathbb{E} \big[ \sum_{t=1}^{T} \left\|\hat{g}_{i,t}\right\|^2 \big] + \frac{d_i D_x }{\delta}\mathbb{E} \big[ \sum_{t=1}^{T} \left\| \hat{\varepsilon}_{i,t}\right\| \big].
\end{align}
Recalling that $|C_i^{\delta}(x)-C_i(x)|\leq L_0\sqrt{N}\delta$, and $C_i^{\delta}(x)$ is Lipschitz continuous, it follows that
\begin{align*}
{\rm{R}}_i^C(T) 
&= \sum_{t=1}^{T} C_i( \hat{x}_t) - \mathop{\rm{min}}_{\tilde{x}_i \in \mathcal{X}_i} \sum_{t=1}^{T} C_i(\tilde{x}_i,\hat{x}_{-i,t})\nonumber \\
& \leq \sum_{t=1}^{T} C_i^{\delta}(\hat{x}_t) - \mathop{\rm{min}}_{\tilde{x}_i \in \mathcal{X}_i} \sum_{t=1}^{T} C_i^{\delta}(\tilde{x}_i,\hat{x}_{-i,t}) +2 \delta L_0\sqrt{N}T \nonumber \\
& \leq \sum_{t=1}^{T} C_i^{\delta}(x_t) - \mathop{\rm{min}}_{\tilde{x}_i \in \mathcal{X}_i} \sum_{t=1}^{T} C_i^{\delta}(\tilde{x}_i,x_{-i,t}) +4 \delta L_0\sqrt{N}T.
\end{align*}
Applying Observation 1 and (\ref{eq:Ci_delta}) into the inequality above, it gives that
\begin{align}
{\rm{R}}_i^C(T) 
& \leq \sum_{t=1}^{T} C_i^{\delta}(x_t) - \mathop{\rm{min}}_{\tilde{x}_i \in \mathcal{X}_i} \sum_{t=1}^{T} C_i^{\delta}(\tilde{x}_i,x_{-i,t}) +4 \delta L_0\sqrt{N}T \nonumber \\
& \leq \sum_{t=1}^{T} C_i^{\delta}(x_t) - \mathop{\rm{min}}_{\tilde{x}_i \in \mathcal{X}_i^{\delta}} \sum_{t=1}^{T} C_i^{\delta}(\tilde{x}_i,x_{-i,t})   + \Omega L_0 \delta T  +4 \delta L_0\sqrt{N}T \nonumber \\
& \leq  \frac{D_x^2}{2\eta}   + \frac{\eta}{2}  \mathbb{E} \big[ \sum_{t=1}^{T} \left\|\hat{g}_{i,t}\right\|^2 \big]+ \frac{d_i D_x }{\delta}\mathbb{E} \big[ \sum_{t=1}^{T} \left\| \hat{\varepsilon}_{i,t}\right\| \big] +4 \delta L_0\sqrt{N}T  + \Omega L_0 \delta T \nonumber \\
& \leq \frac{D_x^2}{2\eta} + \frac{d_i^2 U^2 \eta}{2\delta^2}  T  + \frac{d_i D_x }{\delta}\mathbb{E} \big[ \sum_{t=1}^{T} \left\| \hat{\varepsilon}_{i,t}\right\| \big]  + (4\sqrt{N}+\Omega)L_0 \delta T,
\end{align}
where the last inequality follows from the fact that $|\hat{C}_i|\leq U$ and $\left\|\hat{g}_{i,t}\right\|=\left\|\frac{d_i}{\delta} {\rm{CVaR}}_{\alpha_i} [\hat{F}_{i,t} ] u_{i,t}\right\| \leq \frac{d_iU}{\delta}$. 
\end{proof}

\section*{B. Proofs of Key Results Supporting Algorithm \ref{alg:algorithm2}}\label{appendix_B}
\begin{algorithm}[t]
\caption{Risk-averse learning with sample reuse} \label{alg:algorithm2}
\begin{algorithmic}[1]
    \REQUIRE Initial value $x_0$, step size $\eta$, parameters $a$, $b$, $\delta$, $T$, risk level $\alpha_i$, $i=1,\cdots,N$.
    \FOR{$episode \; t=1,\ldots,T$}
        \STATE Select $n_t=\lceil b U^2 (T-t+1)^a\rceil$
        \STATE Each agent samples $u_{i,t} \in \mathbb{S}^{d_i}$,  $i=1,\ldots, N$
        \STATE Each agent play $\hat{x}_{i,t}=x_{i,t}+\delta u_{i,t} $, $i=1,\ldots, N$
        \FOR{$j=1,\ldots,n_t$}
        \STATE Let all agents play $\hat{x}_{i,t}$  
        \STATE Obtain $J_i(\hat{x}_{i,t},\hat{x}_{-i,t},\xi_{i}^j)$
        \ENDFOR
        \FOR{agent $ i=1,\ldots,N$}
        \STATE Build EDF $\tilde{F}_{i,t}(y)$ 
        \STATE Calculate CVaR estimate: $ {\rm{CVaR}}_{\alpha_i}[\tilde{F}_{i,t}(y)] $ 
        \STATE Construct gradient estimate $\tilde{g}_{i,t}=\frac{d_i}{\delta} {\rm{CVaR}}_{\alpha_i} [\tilde{F}_{i,t}(y) ] u_{i,t}$
        \STATE Update $x$: $x_{i,t+1} \leftarrow \mathcal{P}_{\mathcal{X}_i^{\delta}} ( x_{i,t} - \eta \tilde{g}_{i,t})$
        \ENDFOR
    \ENDFOR
\end{algorithmic}
\end{algorithm}
\subsection*{B.1 Proof of Lemma \ref{lemma:Ftilde_bound}}
\begin{proof}
In order to give the formal proof, we need to introduce some new definitions.
Define a new random variable $\check{J}_{i,t}=z_t J_{i,t}+(1-z_t) J_{i,t-1}$, where $J_{i,t}$, $J_{i,t-1}$ are abbreviation of $J_i(\hat{x}_t,\xi_i)$, $J_i(\hat{x}_{t-1},\xi_i)$, $z_t$ is an independent Bernoulli random variable with $\mathbb{P}\{ z_t=1\}=\frac{n_t}{N_t}$. Define the events $B_t:=\{ \# (z_t=1)=n_t \; {\rm{from}} \; N_t\; {\rm{samples}} \; {\rm{of}}\;  \check{J}_{i,t}\}$, where $\#$ denotes the times that the event occurs. 
Then we define
the conditional expectation of $\check{J}_{i,t}$ given event $B_t$ as $\bar{J}_{i,t}=\mathbb{E}[\check{J}_{i,t}|B_t]$, which is still a random variable \cite{durrett2019probability}.
Define the CDF of $\bar{J}_{i,t}$ as $\bar{F}_{i,t}$, then we have that
\begin{align}
    &\bar{F}_{i,t}(y) =\mathbb{P}\{ \bar{J}_{i,t}\leq y \} = \mathbb{P}\{z_t J_{i,t}+(1-z_t) J_{i,t-1}\leq y | B_t \}  \nonumber \\
    &= \mathbb{P}\{z_t=1 |B_t\} \mathbb{P}\{J_{i,t} \leq y\} + \mathbb{P}\{z_t=0 |B_t\} \mathbb{P}\{J_{i,t-1} \leq y\} =\frac{n_t}{N_t}F_{i,t} + \frac{n_{t-1}}{N_t}F_{i,t-1},
\end{align}
where the last equality is obtained by definitions of the random variables $z_t$, $J_{i,t}$, $J_{i,t-1}$. The second last equality is due to the fact that $B_t$ is only related to $z_t$, and $z_t$ is independent of $J_{i,t}$,$J_{i,t-1}$.  
Then it gives that
\begin{align}
    &\mathop{\rm{sup}}_{y} |\bar{F}_{i,t}(y)- F_{i,t}(y)| \nonumber \\
    \leq &\mathop{\rm{sup}}_{y}  |\frac{n_{t-1}}{N_t}(F_{i,t-1}(y)- F_{i,t}(y) )| \nonumber \\
    \leq& \frac{1}{2} \mathop{\rm{sup}}_{y} |F_{i,t-1}(y)- F_{i,t}(y) |,
\end{align}
where the last inequality is due to $n_{t-1}\leq n_t$ and thus $\frac{n_{t-1}}{N_t}\leq \frac{1}{2}$.
From the definition of $\bar{J}_{i,t}$, we have that $\tilde{F}_{i,t}$ is an EDF of $\bar{F}_{i,t}$. Applying DKW inequality, we have that, for $t\geq 2$
\begin{align}\label{event_Et}
    \mathbb{P}\left\{ \mathop{\rm{sup}}_{y} |\tilde{F}_{i,t}(y)-\bar{F}_{i,t}(y)| \geq \sqrt{\frac{ {\rm{ln}}(2/\bar{\gamma}) }{2(n_t+n_{t-1})}}\right\} \leq \bar{\gamma}.
\end{align}
Define the event in (\ref{event_Et}) as $E_t$. 
Then the following holds
\begin{align}\label{eq_probability2}
\mathop{\rm{sup}}_{y} |\tilde{F}_{i,t}(y)-\bar{F}_{i,t}(y)| \leq \sqrt{\frac{ {\rm{ln}}(2T/\gamma) }{2(n_t+n_{t-1})}}, \forall t=2,\ldots,T,
\end{align}
with the probability at least $1-\gamma$ due to the fact that $1-\mathbb{P} \{ \bigcup_{t=2}^T E_t\} \geq 1- \sum_{t=2}^T\mathbb{P}\{E_t\}\geq 1-(T-1) \frac{\gamma}{T}\geq 1-\gamma$.
Together with Assumption \ref{assump_F_lips}, the following holds 
\begin{align}
    \mathop{\rm{sup}}_{y} |\tilde{F}_{i,t}(y)-F_{i,t}(y)| =& \mathop{\rm{sup}}_{y} |\tilde{F}_{i,t}(y)- \bar{F}_{i,t}(y) +\bar{F}_{i,t}(y) -F_{i,t}(y)| \nonumber \\
    \leq & \mathop{\rm{sup}}_{y} |\tilde{F}_{i,t}(y)- \bar{F}_{i,t}(y)| + \mathop{\rm{sup}}_{y} |\bar{F}_{i,t}(y)- F_{i,t}(y)| \nonumber \\
    \leq & \sqrt{\frac{ {\rm{ln}}(2/\bar{\gamma})} {2(n_t+n_{t-1})}}
    + \frac{1}{2} \mathop{\rm{sup}}_{y} |F_{i,t-1}(y)- F_{i,t}(y)| \nonumber \\
     \leq & \sqrt{\frac{ {\rm{ln}}(2/\bar{\gamma})} {2(n_t+n_{t-1})}} + \frac{C_1 \delta +C_2}{2} \left\| x_t-x_{t-1}\right\|,
\end{align}
with probability at least $1-\gamma$ for $\forall t=2,\ldots,T$. Then, applying Lemma \ref{lemma:CVaR_bound}, it gives that
\begin{align}
    |\tilde{\varepsilon}_{i,t}|& \leq \frac{U}{\alpha_i} \mathop{\rm{sup}}_{y} |\tilde{F}_{i,t}(y)-F_{i,t}(y)| \nonumber \\ 
    &\leq \frac{U}{\alpha_i} \left( \sqrt{\frac{ {\rm{ln}}(2/\bar{\gamma})} {2(n_t+n_{t-1})}} + \frac{C_1 \delta +C_2}{2} \left\| x_t-x_{t-1}\right\|\right) \nonumber \\
    & \leq \frac{U}{\alpha_i}\left( \sqrt{\frac{ \ln(2T/\gamma)}{2(n_t+n_{t-1}) }} + \frac{(C_1\delta+C_2) d_i U\sqrt{N}\eta}{2\delta} \right),
\end{align}
where the inequality holds due to the fact that  $\left\| \tilde{g}_{i,t} \right\|=\left\|\frac{d_i}{\delta} {\rm{CVaR}}_{\alpha_i} [\tilde{F}_{i,t} ] u_{i,t}\right\| \leq \frac{d_i}{\delta}U$ and
$\left\|x_t - x_{t-1} \right\|\leq \eta \left\| \tilde{g}_t \right\| \leq \frac{\eta d_i U\sqrt{N}}{\delta}$.
The proof is complete.
\end{proof}

\subsection*{B.2 Proof of Lemma \ref{lemma:regret_decomp_2}}
\begin{proof}
The proof is very similar to the proof of Lemma \ref{lemma:regret_decomp}. By substituting $\hat{g}_{i,t}$, $\hat{\varepsilon}_{i,t}$ with $\tilde{g}_{i,t}$, $\tilde{\varepsilon}_{i,t}$, we can obtain the claim. The detailed proof is omitted.
\end{proof}

\subsection*{B.3 Proof of Corollary 1}
\begin{proof}
According to the sampling strategy in (\ref{eq:sample_strategy}), we have that
\begin{align*}
    \sum_{t=1}^T|\tilde{\varepsilon}_{i,t}| &=\sum_{t=1}^{t_0}|\tilde{\varepsilon}_{i,t}|+\sum_{t=t_0+1}^T|\tilde{\varepsilon}_{i,t}|  \\
    &\leq \sum_{t=1}^{t_0}|\tilde{\varepsilon}_{i,t}| + \frac{1}{\alpha_i}\sqrt{\frac{2\ln(2T/\gamma)}{ \alpha_i^2 b }}(T-t_0)^{1-\frac{a}{2}}:= \bar{B}_1(t_0).
\end{align*}
When $t\leq t_0$, since $\tilde{F}_{i,t}=\hat{F}_{i,t}$, we have that $\tilde{\varepsilon}_{i,t}= \hat{\varepsilon}_{i,t}$. Next we focus on the term $\sum_{t=t_0+1}^T|\tilde{\varepsilon}_{i,t}|$.
Applying the inequality in (\ref{eq:epsi_tilde_bound}) and substituting $\delta=\frac{\sqrt{D_x U d_i}}{N^{\frac{1}{4}} T^{\frac{a}{4}} \sqrt{\alpha_i L_0} }$,  $\eta=\frac{\sqrt{\alpha_i} D_x^{\frac{3}{2}}}{\sqrt{L_0 U d_i} N^{\frac{1}{4}} T^{\frac{3a}{4}} }$ into it, we have that
\begin{align}
     \sum_{t=t_0+1}^T |\tilde{\varepsilon}_{i,t}| 
    & \leq \frac{U}{\alpha_i} \sum_{t=t_0+1}^T  \sqrt{\frac{ \ln(2T/\gamma)}{4n_{t-1} }}+ \frac{(C_1\delta+C_2) d_i U^2 \sqrt{N} \eta (T-t_0)}{2\alpha_i\delta} \nonumber  \\
    & \leq \sqrt{\frac{ \ln(2T/\gamma)}{\alpha_i^2 b}}(T-t_0)^{1-\frac{a}{2}} + \Sigma_5 (T-t_0)T^{-\frac{3a}{4}}  +\Sigma_6 (T-t_0)T^{-\frac{a}{2}}, \nonumber
\end{align}
where $\Sigma_5=\frac{C_1 D_x^{\frac{3}{2}} N^{\frac{1}{2}}  d_i^{\frac{1}{2}} U^{\frac{3}{2}} }{2 \alpha_i^{\frac{1}{2}} L_0^{\frac{1}{2}}}$, $\Sigma_6=\frac{C_2U N^{\frac{1}{2}}D_x }{2}$.

Set $p= \frac{\ln(T-t_0)}{\ln T} $, then we have $T-t_0=T^p$ with $p\in (0,1)$.
Then, it gives that
\begin{align}
    &\sum_{t=1}^T |\tilde{\varepsilon}_{i,t}| - \bar{B}_1(t_0) \nonumber \\
    &\leq \sqrt{\frac{ \ln(2T/\gamma)}{\alpha_i^2 b}}(T-t_0)^{1-\frac{a}{2}} + \Sigma_5 (T-t_0)T^{-\frac{3a}{4}}  +\Sigma_6 (T-t_0)T^{-\frac{a}{2}} - \sqrt{\frac{ 2 \ln(2T/\gamma)}{\alpha_i^2 b}}(T-t_0)^{1-\frac{a}{2}} \nonumber \\
    &\leq (1-\sqrt{2})\sqrt{\frac{ \ln(2T/\gamma)}{\alpha_i^2 b}} T^{p(1-\frac{a}{2})} + \Sigma_5T ^{p-\frac{3a}{4}} + \Sigma_6 T^{p-\frac{a}{2}} \nonumber \\
    &\leq T^{p-\frac{a}{2}}\left( (1-\sqrt{2})\sqrt{\frac{ \ln(2T/\gamma)}{\alpha_i^2 b}}T^{\frac{a}{2}(1-y)} + \frac{\Sigma_5}{T^{\frac{a}{4}}}  +\Sigma_6 \right),
\end{align}
Define the function $g(T)=(1-\sqrt{2})\sqrt{\frac{ \ln(2T/\gamma)}{\alpha_i^2 b}}T^{\frac{a}{2}(1-y)} + \frac{\Sigma_5}{T^{\frac{a}{4}}}  +\Sigma_6$. 
It can be verified that the function $g(T)$ is monotonically deceasing in $T$ and approaches negative infinity when $T\rightarrow \infty$. 
Then there must exist $T_{\lambda(t_0)}$ such that when $T>T_{\lambda(t_0)}$, we have $g(T)\leq -\lambda$,
and thus $\sum_{t=1}^T |\tilde{\varepsilon}_{i,t}| - \bar{B}_1(t_0)\leq -\lambda T^{p-\frac{a}{2}} $. 
Recall that $ \bar{B}_1(t_0)=\sum_{t=1}^{t_0}|\tilde{\varepsilon}_{i,t}| + \frac{1}{\alpha_i}\sqrt{\frac{2\ln(2T/\gamma)}{ \alpha_i^2 b }}(T-t_0)^{1-\frac{a}{2}}\leq \frac{1}{\alpha_i}\sqrt{\frac{2\ln(2T/\gamma)}{ \alpha_i^2 b }}( T^{1-\frac{a}{2}}-   (T-t_0)^{1-\frac{a}{2}}) + \frac{1}{\alpha_i}\sqrt{\frac{2\ln(2T/\gamma)}{ \alpha_i^2 b }}(T-t_0)^{1-\frac{a}{2}}=B_1$.
By virtue of Lemma \ref{lemma:regret_decomp_2}, it gives that
\begin{align}
    {\rm{R}}_{C_i}^2(T) &\leq  Err(ZO)+\frac{d_i D_x }{\delta}\sum_{t=1}^T |\tilde{\varepsilon}_{i,t}| \nonumber \\
    &\leq  Err(ZO)+Err(CVaR) + \frac{d_i D_x }{\delta}( \sum_{t=1}^T |\tilde{\varepsilon}_{i,t}| - \bar{B}_1(t_0)) \nonumber \\
    &\leq  Err(ZO)+Err(CVaR) - \lambda T^{p-\frac{a}{2}} . \nonumber 
\end{align}
By choosing $\delta=\frac{\sqrt{D_x U d_i}}{N^{\frac{1}{4}} T^{\frac{a}{4}} \sqrt{\alpha_i L_0} }$,  $\eta=\frac{\sqrt{\alpha_i} D_x^{\frac{3}{2}}}{\sqrt{L_0 U d_i} N^{\frac{1}{4}} T^{\frac{3a}{4}} }$, the result follows from the same proof as in Theorem \ref{thm:1}.
The proof is complete.
\end{proof}
\section*{C. Proofs of Key Results Supporting Algorithm \ref{alg:algorithm3}}
\begin{algorithm}[htp]
\caption{Risk-averse learning with residual feedback} \label{alg:algorithm3}
\begin{algorithmic}[1]
    \REQUIRE Initial value $x_0$, step size $\eta$, parameters $a$, $b$, $\delta$, $T$, risk level $\alpha_i$, $i=1,\cdots,N$.
    \FOR{$episode \; t=1,\ldots,T$}
        \STATE Select $n_t=\lceil b U^2 (T-t+1)^a\rceil$
        \STATE Each agent samples $u_{i,t} \in \mathbb{S}^{d_i}$,  $i=1,\ldots, N$
        \STATE Each agent play $\hat{x}_{i,t}=x_{i,t}+\delta u_{i,t} $, $i=1,\ldots, N$
        \FOR{$j=1,\ldots,n_t$}
        \STATE Let all agents play $\hat{x}_{i,t}$  
        \STATE Obtain $J_i(\hat{x}_{i,t},\hat{x}_{-i,t},\xi_{i}^j)$
        \ENDFOR
        \FOR{agent $ i=1,\ldots,N$}
        \STATE Build EDF $\hat{F}_{i,t}(y)$ 
        \STATE Calculate CVaR estimate: $ {\rm{CVaR}}_{\alpha_i}[\hat{F}_{i,t}] $ 
        \STATE Construct gradient estimate $\bar{g}_{i,t}=\frac{d_i}{\delta}\left( {\rm{CVaR}}_{\alpha_i} [\hat{F}_{i,t} ]-{\rm{CVaR}}_{\alpha_i} [\hat{F}_{i,t-1} ]\right) u_{i,t}$
        \STATE Update $x$: $x_{i,t+1} \leftarrow \mathcal{P}_{\mathcal{X}_i^{\delta}} ( x_{i,t} - \eta \bar{g}_{i,t})$
        \ENDFOR
    \ENDFOR
\end{algorithmic}
\end{algorithm}

Observe that the zeroth-order CVaR gradient satisfies the following inequality.  
\begin{lemma}\label{lemma:RF_gradient}
\begin{align}
    \sum_{t=1}^T\left\| \bar{g}_{i,t} \right\|^2 \leq &\frac{1}{1-\beta}\left\|\bar{g}_{1} \right\|^2 + \frac{16d_i^2 N^2L_0^2}{1-\beta}T +\frac{4d_i^2 {\rm{ln}}(2T/\gamma) S(\alpha)}{(1-\beta)b \delta^2}T,
\end{align}
where $\beta=\frac{4d_i^2L_0^2N \eta^2}{\delta^2}$.
\end{lemma}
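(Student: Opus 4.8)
The plan is to establish a one-step recursion of contraction type for the squared norm of the stacked gradient $\bar g_t=(\bar g_{1,t},\dots,\bar g_{N,t})$ and then unroll it as a geometric series. A key observation is that the cross-agent coupling forces us to work with the full gradient $\|\bar g_t\|^2=\sum_{i=1}^N\|\bar g_{i,t}\|^2$ rather than a single agent's, since the drift of the joint action depends on every agent's previous update; this is precisely where the factor $N$ in $\beta$ and the sum $S(\alpha)=\sum_i\alpha_i^{-2}$ will come from. Since $u_{i,t}\in\mathbb{S}^{d_i}$ is a unit vector, I would start from $\|\bar g_{i,t}\|^2=\frac{d_i^2}{\delta^2}\big({\rm{CVaR}}_{\alpha_i}[\hat F_{i,t}]-{\rm{CVaR}}_{\alpha_i}[\hat F_{i,t-1}]\big)^2$ and split the CVaR difference through the estimation error $\hat\varepsilon_{i,t}$, writing ${\rm{CVaR}}_{\alpha_i}[\hat F_{i,t}]-{\rm{CVaR}}_{\alpha_i}[\hat F_{i,t-1}]=\big(C_i(\hat x_t)-C_i(\hat x_{t-1})\big)+\big(\hat\varepsilon_{i,t}-\hat\varepsilon_{i,t-1}\big)$, since ${\rm{CVaR}}_{\alpha_i}[\hat F_{i,t}]=C_i(\hat x_t)+\hat\varepsilon_{i,t}$.

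Next I would bound each piece. For the true CVaR difference I would invoke the $L_0$-Lipschitz continuity of $C_i$ from Lemma \ref{lemma:cvar_property} to get $|C_i(\hat x_t)-C_i(\hat x_{t-1})|\le L_0\|\hat x_t-\hat x_{t-1}\|$. The joint-action drift is controlled by the nonexpansiveness of the projection in equation \eqref{eq:projection}, which gives $\|x_{i,t}-x_{i,t-1}\|\le\eta\|\bar g_{i,t-1}\|$, together with $\|u_{i,t}-u_{i,t-1}\|\le 2$; squaring, using $(a+b)^2\le 2a^2+2b^2$, and summing over $i$ yields $\|\hat x_t-\hat x_{t-1}\|^2\le 2\eta^2\|\bar g_{t-1}\|^2+8\delta^2N$. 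For the estimation errors I would reuse the DKW bound and Lemma \ref{lemma:CVaR_bound} exactly as in equation \eqref{eq:epsi_hat}, together with a union bound over $t$ so that all such events hold simultaneously with probability at least $1-\gamma$, and the fact that the sampling rule \eqref{eq:sample_strategy} guarantees $n_t\ge bU^2$, so that $\hat\varepsilon_{i,t}^2\le\frac{\ln(2T/\gamma)}{2b\alpha_i^2}$.

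I would then assemble the recursion by applying $(a+b)^2\le 2a^2+2b^2$ twice to $\big(L_0\|\hat x_t-\hat x_{t-1}\|+|\hat\varepsilon_{i,t}|+|\hat\varepsilon_{i,t-1}|\big)^2$, multiplying by $d_i^2/\delta^2$, and summing over $i$. The term proportional to $\|\bar g_{t-1}\|^2$ collects to $\sum_i\frac{4d_i^2L_0^2\eta^2}{\delta^2}=\beta$, the perturbation term collects to $\sum_i 16 d_i^2L_0^2N$, and the estimation-error term collects to $\frac{4d_i^2\ln(2T/\gamma)}{b\delta^2}\sum_i\alpha_i^{-2}$. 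This produces $\|\bar g_t\|^2\le\beta\|\bar g_{t-1}\|^2+K$ for $t\ge 2$, with $K=16 d_i^2N^2L_0^2+\frac{4d_i^2\ln(2T/\gamma)S(\alpha)}{b\delta^2}$. Finally, summing this recursion over $t$, bounding $\sum_{t=2}^T\|\bar g_{t-1}\|^2\le\sum_{t=1}^T\|\bar g_t\|^2$, and rearranging gives $(1-\beta)\sum_{t=1}^T\|\bar g_t\|^2\le\|\bar g_1\|^2+KT$, which is exactly the claimed bound after dividing by $1-\beta$.

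The main obstacle I anticipate is twofold. First, one must correctly handle the cross-agent coupling so that the recursion closes on the full gradient norm, and verify that the coefficient is exactly $\beta$; this implicitly requires $\beta<1$ for the geometric sum to converge, which holds for $T$ large under the parameter choices of Theorem \ref{thm:3}. Second, one must ensure the high-probability estimation-error bounds hold uniformly over all $T$ time steps through a single union bound, so that the entire recursion, and hence the final inequality, holds with probability at least $1-\gamma$.
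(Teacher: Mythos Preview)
Your proposal is correct and follows essentially the same route as the paper's proof: split ${\rm CVaR}_{\alpha_i}[\hat F_{i,t}]-{\rm CVaR}_{\alpha_i}[\hat F_{i,t-1}]$ into the true CVaR increment plus the estimation-error increment, bound the former via $L_0$-Lipschitzness of $C_i$ together with $\|\hat x_t-\hat x_{t-1}\|^2\le 2\eta^2\|\bar g_{t-1}\|^2+8N\delta^2$, bound the latter via the DKW-based estimate $|\hat\varepsilon_{i,t}|^2\le \ln(2T/\gamma)/(2b\alpha_i^2)$ uniformly in $t$ with probability $1-\gamma$, sum over $i$ to close the recursion $\|\bar g_t\|^2\le\beta\|\bar g_{t-1}\|^2+K$, and then telescope. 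The only cosmetic difference is that the paper applies $(a+b)^2\le 2a^2+2b^2$ first to the pair $\big(C_i(\hat x_t)-C_i(\hat x_{t-1}),\,\hat\varepsilon_{i,t}-\hat\varepsilon_{i,t-1}\big)$ and then again to each piece, whereas you pass through the three-term sum $L_0\|\hat x_t-\hat x_{t-1}\|+|\hat\varepsilon_{i,t}|+|\hat\varepsilon_{i,t-1}|$; both routes yield the identical constants appearing in the lemma.
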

\subsection*{C.1 Proof of Lemma \ref{lemma:RF_gradient}}
\begin{proof}
According to the definition of $\bar{g}_{i,t}$ in (\ref{eq:grad_alg3}), it gives that
\begin{align}\label{eq_bar_git}
    \left\|\bar{g}_{i,t}\right\|^2 
    &=\frac{d_i^2}{\delta^2} \left( ({\rm{CVaR}}_{\alpha_i} [\hat{F}_{i,t} ]- {\rm{CVaR}}_{\alpha_i} [\hat{F}_{i,t-1} ] ) u_{i,t}\right)^2 \nonumber \\
    &\leq \frac{d_i^2}{\delta^2} \left({\rm{CVaR}}_{\alpha_i} [\hat{F}_{i,t} ]- {\rm{CVaR}}_{\alpha_i} [\hat{F}_{i,t-1} ] \right)^2 \left\| u_{i,t}\right\|^2 \nonumber \\
    &\leq \frac{d_i^2}{\delta^2} \Big( 2 ({\rm{CVaR}}_{\alpha_i} [F_{i,t} ]- {\rm{CVaR}}_{\alpha_i} [F_{i,t-1} ])^2  + 2 (\hat{\varepsilon}_{i,t}-\hat{\varepsilon}_{i,t-1})^2 \Big)  \left\| u_{i,t}\right\|^2,
\end{align}
where the last inequality is due to the fact that $(a+b)^2\leq 2a^2+2b^2$. Note that even though ${\rm{CVaR}}_{\alpha_i} [F_{i,t} ]$ is Lipschitz continuous, the ${\rm{CVaR}}_{\alpha_i} [\hat{F}_{i,t} ]$ is not. We can not bound the difference ${\rm{CVaR}}_{\alpha_i} [\hat{F}_{i,t} ]- {\rm{CVaR}}_{\alpha_i} [\hat{F}_{i,t-1} ]$ directly. Then, by virtue of the Lipschitz property of the function $C_i(x)$, we have that
\begin{align}\label{eq_term1}
    &({\rm{CVaR}}_{\alpha_i} [F_{i,t} ]- {\rm{CVaR}}_{\alpha_i} [F_{i,t-1} ])^2 = (C_i(\hat{x}_t) -C_i(\hat{x}_{t-1}) )^2 \nonumber \\
    &\leq L_0^2 \left\|\hat{x}_t-\hat{x}_{t-1} \right\|^2 \leq L_0^2 \left\| x_t-x_{t-1} +\delta u_t-\delta u_{t-1} \right\|^2 \leq L_0^2(2\left\|x_t-x_{t-1}\right\|^2 + 2\left\|\delta u_t-\delta u_{t-1}\right\|^2  ) \nonumber \\
    &\leq 2L_0^2  \left\| x_t - x_{t-1}\right\|^2 +2\delta^2(2\left\|u_t \right\|^2 + 2\left\|u_{t-1} \right\|^2 ) \leq 2L_0^2  \left\| x_t - x_{t-1}\right\|^2 + 8L_0^2 N\delta^2,
\end{align}
where the last inequality is because $\left\|u_t \right\|^2 =\sum_{j=1}^N\left\|u_{i,t} \right\|^2=N $.
Recall that $x_{i,t}= \mathcal{P}_{\mathcal{X}_i^{\delta}} ( x_{i,t-1} - \eta_{i} \bar{g}_{i,t-1})$, we get that $\left\|x_{i,t}-x_{i,t-1}\right\|= \left\| \mathcal{P}_{\mathcal{X}_i^{\delta}} ( x_{i,t-1} - \eta \bar{g}_{i,t-1}) - \mathcal{P}_{\mathcal{X}_i^{\delta}} ( x_{i,t-1}) \right\| \leq \eta \left\|\bar{g}_{i,t-1}\right\| $. Substituting this into inequality (\ref{eq_term1}), we have that
\begin{align*}
    ({\rm{CVaR}}_{\alpha_i} [F_{i,t} ]- {\rm{CVaR}}_{\alpha_i} [F_{i,t-1} ])^2 
    \leq 2L_0^2 \eta^2 \left\|\bar{g}_{t-1}\right\|^2 + 8L_0^2 N\delta^2,
\end{align*}
where $\bar{g}_{t-1}$ is a concatenated vector of $\bar{g}_{i,t-1}$, $i=1\ldots,N$.
According to Lemma \ref{lemma:sum_epsihat_bound}, with probability $1-\gamma$, we have$|\hat{\varepsilon}_{i,t}|^2 \leq 
\frac{U^2}{\alpha_i^2}\frac{ \ln(2T/\gamma)}{2n_t}\leq \frac{U^2}{\alpha_i^2}\frac{ \ln(2T/\gamma)}{2bU^2(T-t+1) }\leq \frac{\ln(2T/\gamma) }{2b\alpha_i^2}$. Applying this inequality and (\ref{eq_term1}) into (\ref{eq_bar_git}), it gives that
\begin{align*}
    \left\|\bar{g}_{i,t}\right\|^2 
    & \leq \frac{d_i^2}{\delta^2}(4L_0^2\eta^2\left\| \bar{g}_{t-1}\right\|^2+16L_0^2N\delta^2 +  \frac{4\ln(2T/\gamma)}{b\alpha_i^2}  )\nonumber \\
    &\leq \frac{4d_i^2L_0^2\eta^2}{\delta^2} \left\| \bar{g}_{t-1}\right\|^2 +16d_i^2NL_0^2 +\frac{4d_i^2 \ln(2T/\gamma)}{b\alpha_i^2 \delta^2}.
\end{align*}
Summing up this inequality over $i=1,\ldots,N$, and setting $\sum_{i}\frac{1}{\alpha_i^2}=S(\alpha)$, we obtain that 
\begin{align*}
    \left\|\bar{g}_{t}\right\|^2 
    \leq \frac{4d_i^2L_0^2N \eta^2}{\delta^2} \left\| \bar{g}_{t-1}\right\|^2 +16d_i^2N^2L_0^2 +\frac{4d_i^2 \ln(2T/\gamma)S(\alpha)}{b \delta^2}.
\end{align*}
Note that $\beta=\frac{4d_i^2L_0^2N \eta^2}{\delta^2}$. Telescoping this inequality and rearranging the term, we have that
\begin{align*}
    \sum_{t=1}^T\left\| \bar{g}_{t} \right\|^2 \leq \frac{1}{1-\beta}\left\|\bar{g}_{1} \right\|^2 + \frac{16d_i^2 N^2L_0^2}{1-\beta} 
    +\frac{4d_i^2 {\rm{ln}}(2T/\gamma) S(\alpha)}{(1-\beta) b \delta^2}T.
\end{align*}
The proof ends by using $\left\| \bar{g}_{i,t} \right\| \leq \left\| \bar{g}_{t} \right\| $.
\end{proof}

\subsection*{C.2 Proof of Theorem \ref{thm:3}}
\begin{proof}
With the result in (\ref{eq:bar_git}) and following similar lines as in the proof of Lemma \ref{lemma:regret_decomp}, it can be verified that the following holds
\begin{align*}
    {\rm{R}}_{C_i}^3(T) \leq &\frac{D_x^2}{2\eta}   + \frac{\eta}{2}  \mathbb{E} \big[ \sum_{t=1}^{T} \left\|\bar{g}_{i,t}\right\|^2 \big]+ \frac{d_i D_x }{\delta}\mathbb{E} \big[ \sum_{t=1}^{T} \left\| \hat{\varepsilon}_{i,t}\right\| \big]  
    +4 \delta L_0\sqrt{N}T  + \Omega L_0 \delta T , \nonumber 
\end{align*}
where $ \hat{\varepsilon}_{i,t}$ as defined in (\ref{eq:def_hat_varep}). Applying Lemmas \ref{lemma:sum_epsihat_bound} and \ref{lemma:RF_gradient} and substituting the bounds into the inequality above, it gives that
\begin{align}
    {\rm{R}}_{C_i}^3(T) \leq &\frac{D_x^2}{2\eta}   +\frac{\left\|\bar{g}_{1} \right\|^2 \eta }{2(1-\beta)} + \frac{8d_i^2 N^2L_0^2\eta}{1-\beta} 
    +\frac{2d_i^2 {\rm{ln}}(2T/\gamma) S(\alpha) \eta}{(1-\beta) b \delta^2}T   \nonumber \\
    &+ \frac{d_i D_x \sqrt{2 \ln(2T/\gamma)}}{\alpha_i \delta\sqrt{b}}T^{1-\frac{a}{2}} +4 \delta L_0\sqrt{N}T  + \Omega L_0 \delta T . \nonumber
\end{align}
Recall that $\eta=\frac{D_x  }{d_i L_0 N}T^{-\frac{3a}{4}}$,  $\delta=\frac{D_x }{ N^{\frac{1}{6}}}T^{-\frac{a}{4}}$, we have $\beta=\frac{4d_i^2L_0^2N \eta^2}{\delta^2}=4N^{\frac{2}{3}}  T^{-a}\leq \frac{1}{2}$ when $T\geq (8N^{\frac{2}{3}} )^{\frac{1}{a}}$. Therefore, we have $\frac{1}{1-\beta}\leq 2$, and it follows that
\begin{align*}
    {\rm{R}}_{C_i}^3(T) \leq &\frac{D_x^2}{2\eta}   +\left\|\bar{g}_{1} \right\|^2 \eta + 16d_i^2 N^2L_0^2\eta +\frac{4d_i^2 {\rm{ln}}(2T/\gamma) S(\alpha)\eta}{b \delta^2}T   \nonumber \\
    &+ \frac{d_i D_x \sqrt{2 \ln(2T/\gamma)}}{\alpha_i \delta\sqrt{b}}T^{1-\frac{a}{2}} +4 \delta L_0\sqrt{N}T  + \Omega L_0 \delta T  .\nonumber
\end{align*}
Substituting $\delta$ and $\eta$ into this inequality and we can get $ {\rm{R}}^3_{C_i}(T) = \mathcal{O}( D_x d_i L_0 NS(\alpha) \ln(T/\gamma) T^{1-\frac{a}{4}} )$. The proof is complete.
\end{proof}

\section*{D. Additional Numerical Experiments}
We provide additional simulation results for different sampling strategies related to different confidence levels $\gamma$. Two different sampling strategies are shown in Figure \ref{append:fig:samples}. Figure \ref{append:fig:cost} shows that Algorithm \ref{alg:algorithm1} converges faster if more samples are collected and large sample size can decrease the variance of the algorithm.  
\begin{figure}[h]
	\centering
	\subfigure{\includegraphics[scale=0.45]{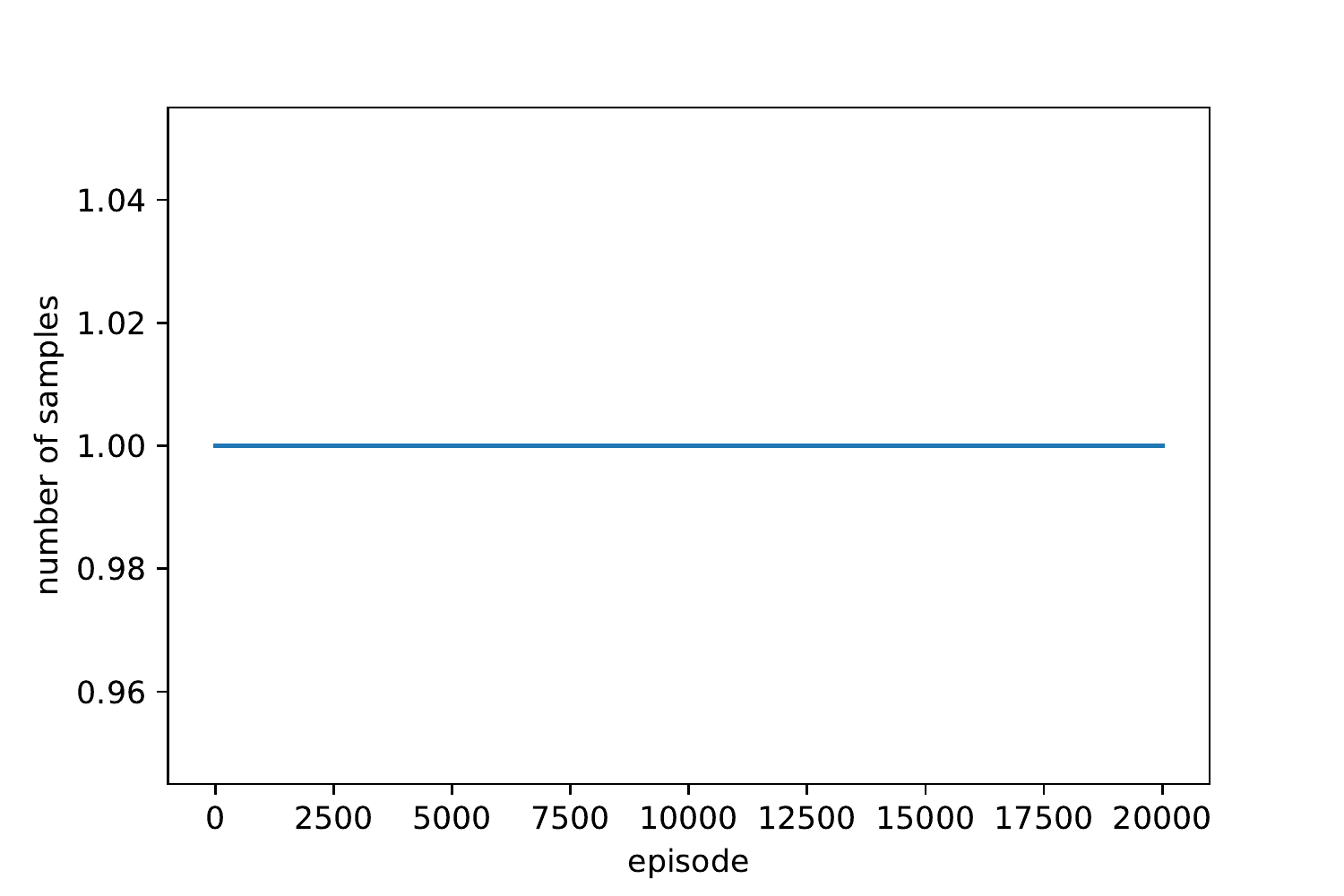}} \quad
	\subfigure{\includegraphics[scale=0.45]{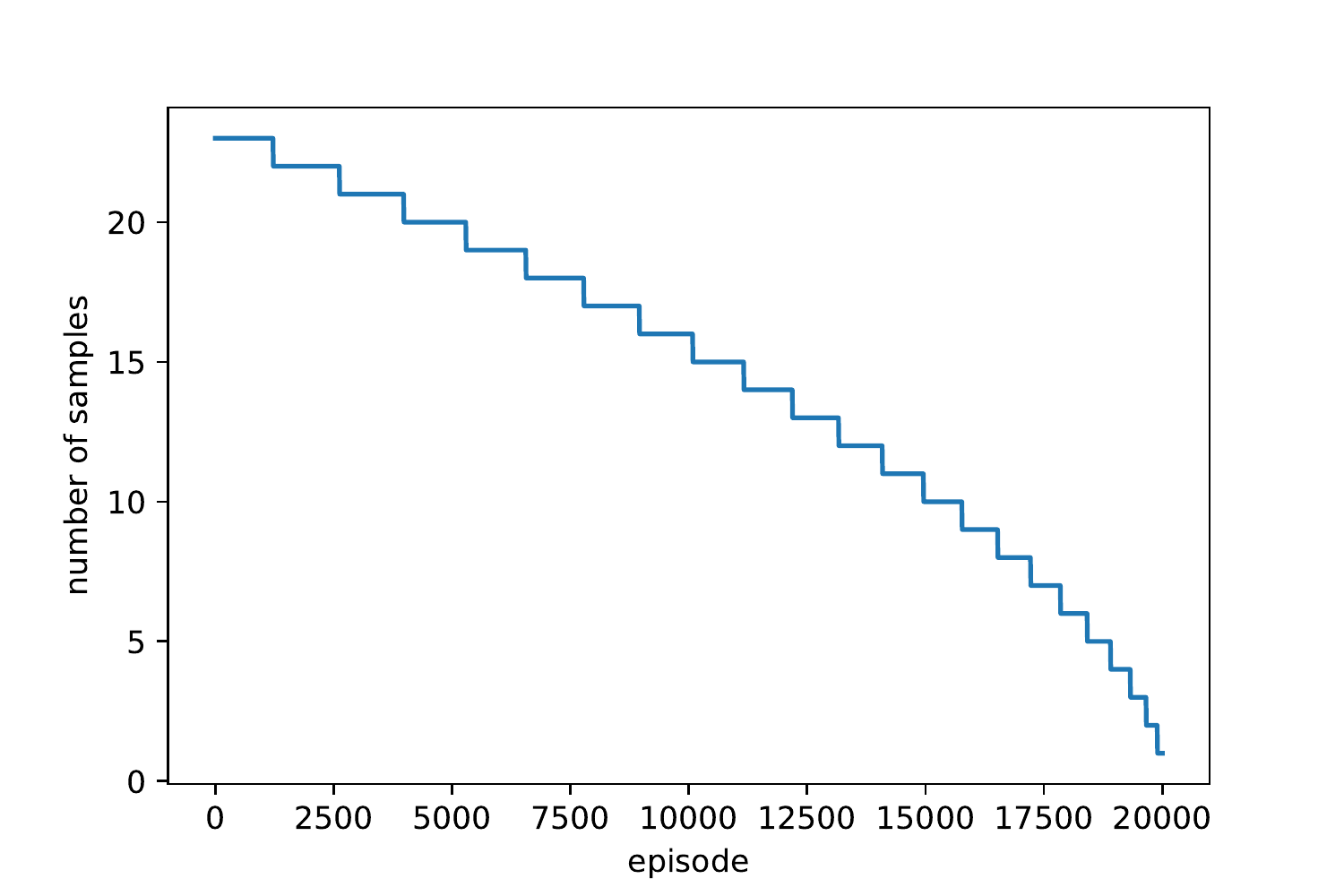}}
	\caption{The different choices of number of samples of Algorithm \ref{alg:algorithm1}.}
	\label{append:fig:samples}
	\vspace{-4mm}
\end{figure}

Moreover, since risk-neural learning is a special case of risk-averse learning by selecting $\alpha_i=1$, $i=1,\ldots,N$, our risk-averse learning algorithms can be also used for risk-neutral games.
Although in risk-averse games it is typically hard to calculate the Nash equilibrium, in risk-neutral games calculating the Nash equilibrium is possible. We define the expected cost function of agent $i$ as $u_i=\mathbb{E}_{\xi_i}[J_i]=\mathbb{E}_{\xi_i}[-(2-\sum_jx_j)x_i+0.1x_i+\xi_i x_i+1] = -(2-\sum_jx_j)x_i+0.6x_i+ 1$, where $\xi_i\sim U(0,1)$. With this expected cost function, this risk-neural game becomes a convex and monotone game. For this class of games, \cite{rosen1965existence} has shown that a unique Nash equilibrium exists. Specifically, setting the  gradients of the expected cost function equal to 0, we can calculate the Nash equilibrium as $(0.467,0.467)$. Figure \ref{fig_ne} shows that our algorithm converges to a neighborhood of the Nash equilibrium. This experiment verifies the correctness of our methodology.
\begin{figure}[t]
	\centering
	\subfigure{\includegraphics[scale=0.45]{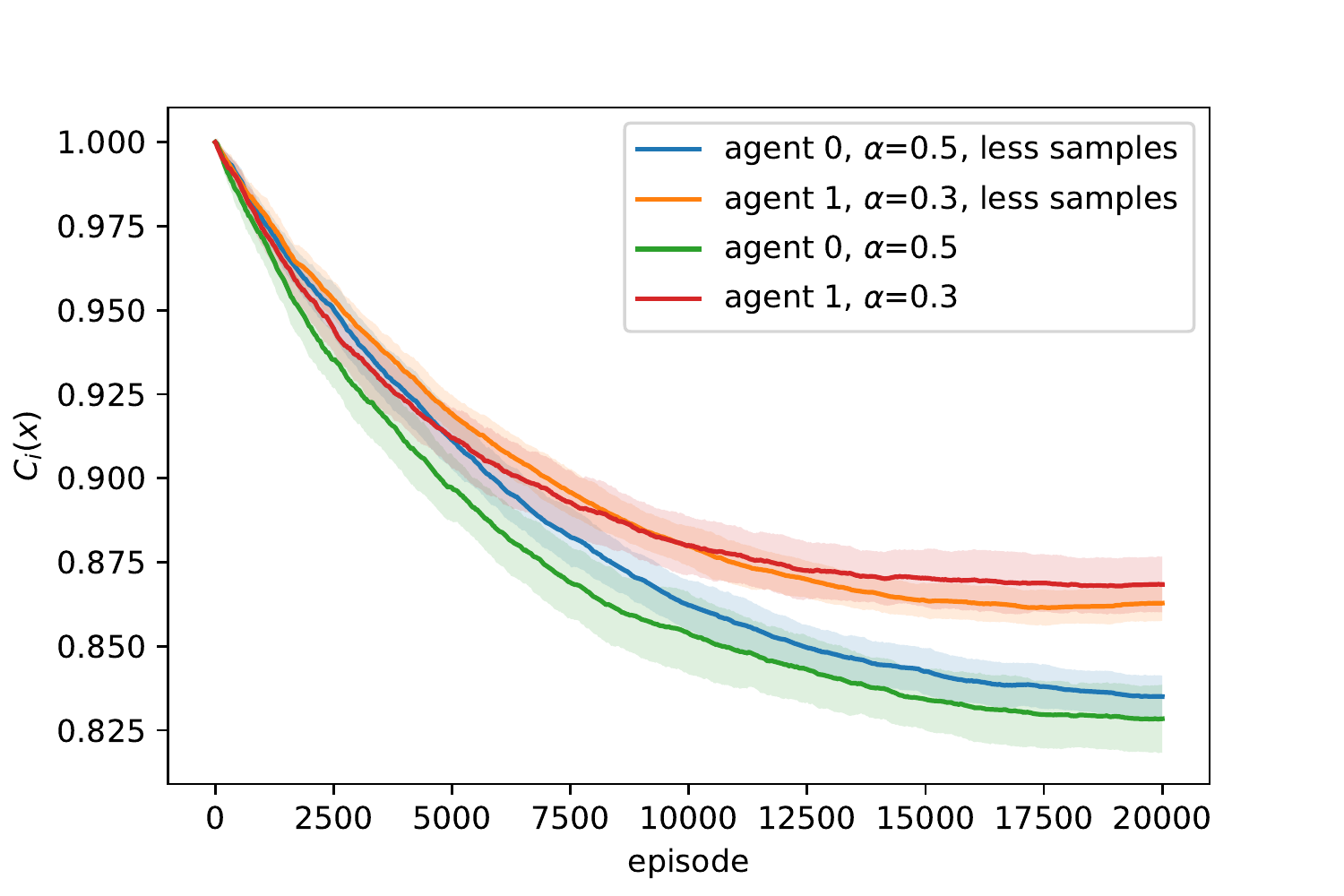}} \quad
	\subfigure{\includegraphics[scale=0.45]{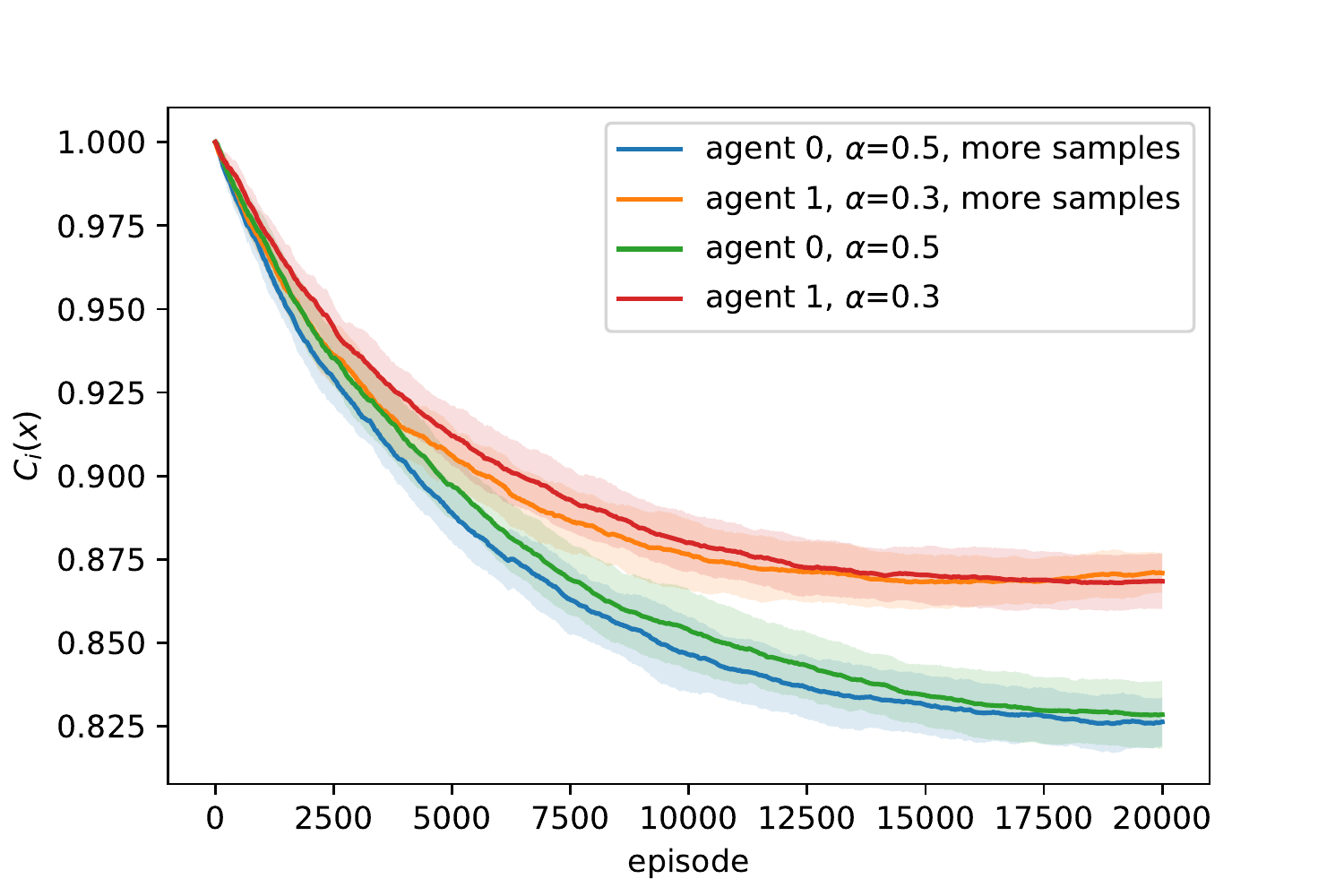}}
	\caption{CVaR values achieved by Algorithm \ref{alg:algorithm1} with different choices of samples. The solid lines and shades are averages and standard deviations over 20 runs.}
	\label{append:fig:cost}
	\vspace{-4mm}
\end{figure}
\begin{figure}[h]
\begin{center}
\centerline{\includegraphics[width=0.45\columnwidth]{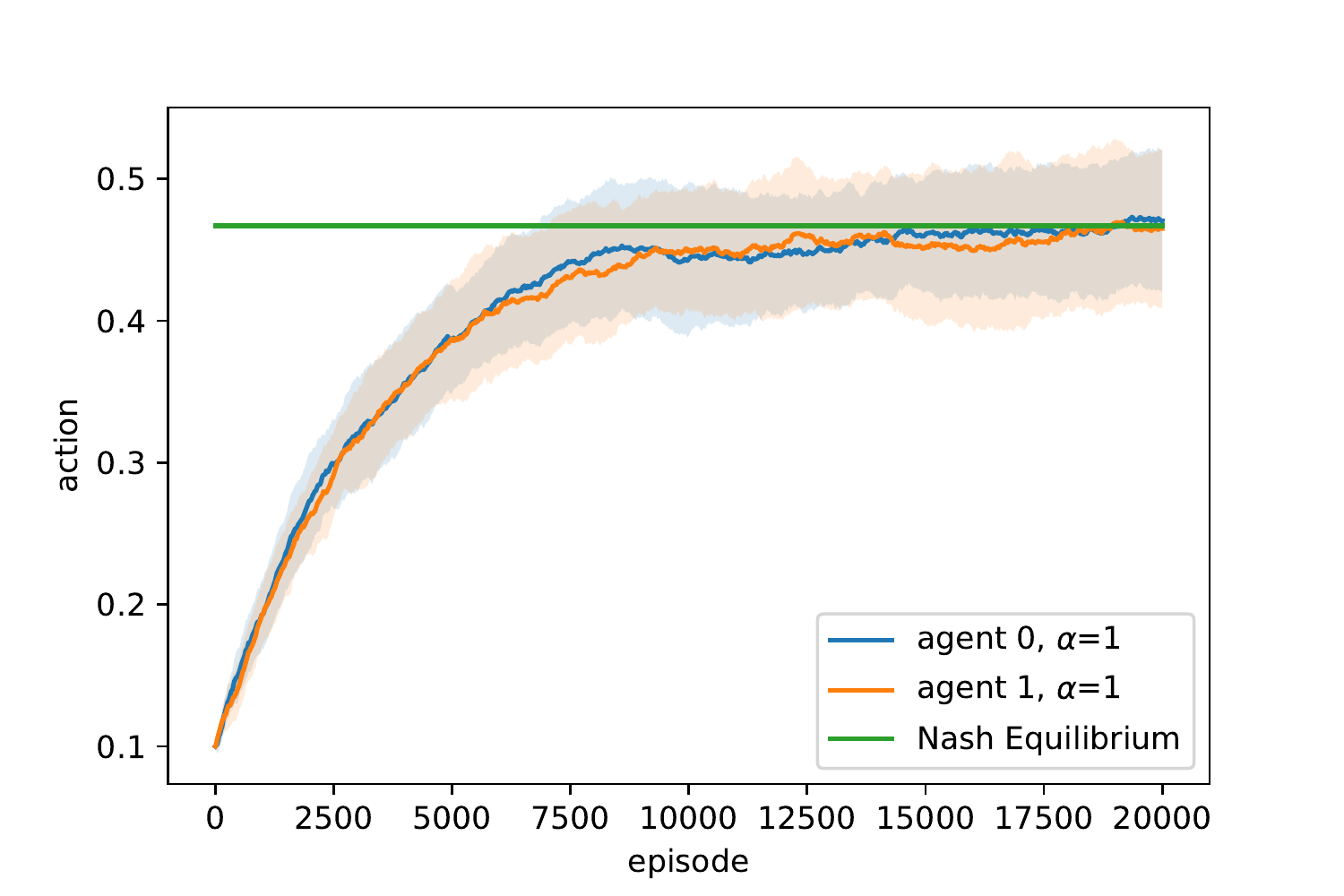}}
\caption{Action values for risk-neural agents.}
\end{center}
\label{fig_ne}
\vskip -0.4in
\end{figure}


\end{document}